\definecolor{gray}{rgb}{0.85,0.85,0.85}
\definecolor{yama}{rgb}{0.98, 0.87, 0.68}
\definecolor{lightskyblue}{rgb}{0.53, 0.81, 0.98}
\newtheorem{lemma}{Lemma}
\newtheorem{proposition}{Proposition}
\newtheorem{definition}{Definition}
\newtheorem{remark}{Remark}
\newtheorem{assumption}{Assumption}
\newcommand{\argmin}{\mathop{\arg\min}}
\newcommand{\tabincell}[2]{\begin{tabular}{@{}#1@{}}#2\end{tabular}}
\newcommand{\junk}[1]{{}}
\newlength{\fwtwo} \setlength{\fwtwo}{0.45\textwidth}
\title{\bf Sharper Analysis for Minibatch Stochastic Proximal Point Methods: Stability, Smoothness, and Deviation\vspace{0.3in}}
\author{
  \textbf{Xiao-Tong Yuan} \ and \ \textbf{Ping Li} \\\\
   Cognitive Computing Lab \\
  Baidu Research \\
   No. 10 Xibeiwang East Road, Beijing 100193, China \\
   10900 NE 8th St. Bellevue, Washington 98004, USA\\
   E-mail: \texttt{\{xtyuan1980, pingli98\}@gmail.com}
  }
\date{\vspace{0.3in}}
\begin{document}

\maketitle

\begin{abstract}\vspace{0.2in}
\noindent The stochastic proximal point (SPP) methods have gained recent attention for stochastic optimization, with strong convergence guarantees and superior robustness to the classic stochastic gradient descent (SGD) methods showcased at little to no cost of computational overhead added. In this article, we study a minibatch variant of SPP, namely M-SPP, for solving convex composite risk minimization problems. The core contribution is a set of novel excess risk bounds of M-SPP derived through the lens of algorithmic stability theory. Particularly under smoothness and quadratic growth conditions, we show that M-SPP with minibatch-size $n$ and iteration count $T$ enjoys an in-expectation fast rate of convergence consisting of an $\mathcal{O}\left(\frac{1}{T^2}\right)$ \emph{bias} decaying term and an $\mathcal{O}\left(\frac{1}{nT}\right)$ \emph{variance} decaying term. 
In the small-$n$-large-$T$ setting, this result substantially improves the best known results of SPP-type approaches by revealing the impact of noise level of model on convergence rate. In the complementary small-$T$-large-$n$ regime, we provide a two-phase extension of M-SPP to achieve comparable convergence rates. Moreover, we derive a near-tight high probability (over the randomness of data) bound on the parameter estimation error of a sampling-without-replacement variant of M-SPP. Numerical evidences are provided to support our theoretical predictions when substantialized to Lasso and logistic regression models.
\end{abstract}

\vspace{0.2in}

\newpage

\section{Introduction}

We consider the following problem of regularized risk minimization over a closed convex subset $\mathcal{W}\subseteq \mathbb{R}^p$:
\begin{equation}\label{equat:problem}
\min_{w \in \mathcal{W}} R(w):= R^{\ell}(w) + r(w), \ \  \text{where } \ R^{\ell}(w):= \mathbb{E}_{z \sim \mathcal{D}} [\ell(w;z)],
\end{equation}
where $\ell:\mathcal{W} \times \mathcal{Z} \mapsto R^+$ is a non-negative convex loss function whose value $\ell(w;z)$ measures the loss of a hypothesis, parameterized by $w\in \mathcal{W}$, evaluated over a data sample $z\in \mathcal{Z}$, $\mathcal{D}$ represents a distribution over $\mathcal{Z}$, and $r: \mathcal{W} \mapsto \mathbb{R}^+$ is a data-independent non-negative convex function whose value $r(w)$ measures certain complexity of the hypothesis. We are particularly interested in the situation where the composite population risk $R$ is strongly convex around its minimizers, though in this setting the terms $R^{\ell}$ and $r$ are not necessarily required to be so simultaneously. For an instance, the $\ell_1$-norm regularizer $r(w)=\mu \|w\|_1$ or its grouped variants are often used for sparse generalized linear models learning with quadratic or logistic loss functions~\citep{van2008high,ravikumar2009sparse, negahban2012unified}.

In statistical machine learning, it is usually assumed that the estimator only has access to, either as a batch training set or in an online/incremental manner, a collection $S=\{z_i\}_{i=1}^N$ of i.i.d. random data instances drawn from $\mathcal{D}$. The goal is to compute a stochastic estimator $\hat w_S$ based on the knowledge of $S$, hopefully that it generalizes well as a near minimizer of the population risk. More precisely, we aim at deriving a suitable law of large numbers, i.e., a sample size vanishing rate $\delta_N$ so that the \emph{excess risk} at $\hat w_S$ satisfies $R(\hat w_S) - R^*\le \delta_N$ in expectation or with high probability over $S$ where $R^*:=\min_{w \in \mathcal{W}}R(w)$ represents the minimal value of composite risk.

In this work, inspired by the recent remarkable success of the stochastic proximal point (SPP) algorithms~\citep{patrascu2017nonasymptotic,asi2019importance,asi2019stochastic,davis2019stochastic} and their minibatch extensions~\citep{wang2017memory,zhou2019efficient,asi2020minibatch}, we provide a sharper generalization performance analysis for a class of minibatch SPP methods for solving the stochastic composite risk minimization problem~\eqref{equat:problem}.

\subsection{Algorithm and Motivation of Study}
\label{ssect:algorithm_motivation}

\emph{Minibatch Stochastic Proximal Point Algorithm.} Let $S_t=\{z_{i,t}\}_{i=1}^n$ be a minibatch of $n$ i.i.d. samples drawn from distribution $\mathcal{D}$ at time instance $t\ge 1$ and denote
\[
R_{S_t}(w):=\frac{1}{n}\sum_{i=1}^n \ell(w; z_{i,t}) + r(w)
\]
as the regularized empirical risk over $S_t$. We consider the Minibatch Stochastic Proximal Point (M-SPP) algorithm, as outlined in Algorithm~\ref{alg:mspp}, for composite risk minimization based on a sequence of data minibatches $S=\{S_t\}_{t=1}^T$. The precision value $\epsilon_t$ in the algorithm quantifies the sub-optimality of $w_t$ for solving the inner-loop regularized ERM over the minibatch $S_t$. The M-SPP algorithm is generic and it encompasses several existing SPP methods as special cases. For example in the extreme case when $n=1$ and $\epsilon_t\equiv0$ M-SPP reduces to a composite variant of the standard SPP method~\citep{bertsekas2011incremental}, as formulated in~\eqref{equat:spp}. In general, the recursion update formulation~\eqref{equat:mspp_iteration} can be regarded as a natural composite extension of the existing minibatch stochastic proximal point methods for statistical estimation~\citep{wang2017memory,asi2020minibatch}.

\begin{algorithm}[tb]\caption{\texttt{Minibatch Stochastic Proximal Point (M-SPP)}}
\label{alg:mspp}
\SetKwInOut{Input}{Input}\SetKwInOut{Output}{Output}\SetKw{Initialization}{Initialization}
\Input{Regularization modulus $\{\gamma_t\}_{t\ge 1}$.}
\Output{$\bar w_T$ as a weighted average of $\{w_t\}_{1\le t\le T}$.}
\Initialization{Specify a value of $w_0$. Typically $w_0=0$. }

\For{$t=1, 2, ...,T$}{

Sample a minibatch $S_t:=\{z_{i,t}\}_{i=1}^n \overset{\text{i.i.d.}}{\sim} \mathcal{D}^n$ and estimate $w_t$ satisfying
\begin{equation}\label{equat:mspp_iteration}
F_t(w_t) \le \min_{w\in \mathcal{W}} \left\{F_t(w):=R_{S_t}(w) + \frac{\gamma_t}{2} \|w - w_{t-1}\|^2\right\} + \epsilon_t,
\end{equation}
where $R_{S_t}(w):=\frac{1}{n}\sum_{i=1}^n \ell(w; z_{i,t}) + r(w)$ and $\epsilon_t\ge0$ measures the sub-optimality of estimation.
}
\end{algorithm}

\emph{Prior results and limitations.} The present study focuses on the generalization analysis of M-SPP for convex composite risk optimization. Recently, it has been shown by~\citet[Theorem 2]{asi2020minibatch} that if the instantaneous loss functions are strongly convex with respect to the parameters, then the M-SPP algorithm converges at the rate of $\mathcal{O}\left(\frac{\log(nT)}{nT}\right)$. Prior to that, \citet[Theorem 5]{wang2017memory} proved an $\mathcal{O}(\frac{1}{nT})$ rate for M-SPP when the individual loss functions are Lipschitz continuous and strongly convex. There results, among others for SPP~\citep{patrascu2017nonasymptotic,davis2019stochastic}, commonly require that each instantaneous loss should be strongly convex which is too stringent to be fulfilled in high-dimensional or infinite spaces. For an instance, the quadratic loss $\ell(w; z)=\frac{1}{2}(w^\top x - y)^2$ over a feature-label pair $z=(x,y)$ is convex but in general not strongly convex, although the population risk $R^{\ell}(w)=\frac{1}{2}\mathbb{E}(y-w^\top x)^2$ is strongly convex provided that the covariance matrix of random feature $x$ is non-degenerate. In the meanwhile, the Lipschitz-loss assumption made for the analysis~\cite[Theorem 5]{wang2017memory} limits its applicability to smooth losses like quadratic loss, not to mention an interaction between Lipschitz continuity and strong convexity~\citep{agarwal2012information,asi2019stochastic}.

The above mentioned deficiencies of prior results motivate us to investigate the convergence behavior of M-SPP for composite risk minimization beyond the setting where each individual loss is strongly convex and Lipschitz continuous. From the perspective of optimization, smoothness is essential for establishing strong convergence guarantees for solving the inner-loop strongly convex risk minimization subproblems in~\eqref{equat:mspp_iteration_exact}, e.g., with variance reduced stochastic algorithms~\citep{johnson2013accelerating,xiao2014proximal} or communication-efficient distributed optimization algorithms~\citep{shamir2014communication,zhang2015disco,yuan2020convergence}. Aiming at covering such an important yet less understood problem regime, we focus our study on analyzing the convergence behavior of M-SPP when the convex loss functions are smooth and the risk function exhibits quadratic growth property (see Assumption~\ref{assump:quadratic_growth} for a formal definition).

\subsection{Our Contributions and Main Results}
\label{ssect:contributions}
The main contribution of the present work is a sharper non-asymptotic convergence analysis of the M-SPP algorithm through the lens of algorithmic stability theory~\citep{bousquet2002stability,feldman2018generalization}. Let $ W^*:=\{w\in \mathcal{W}: R(w)=R^*\}$ be the set of minimizers of the composite population risk $R$. We are particularly interested in the regime where the loss function $\ell$ is convex and smooth but not necessarily Lipschitz (e.g., quadratic loss), while the population risk $R$ satisfies the quadratic growth condition, i.e., $R(w) - R^* \ge \frac{\lambda}{2}\min_{w^* \in W^*} \|w - w^*\|^2$, $\forall w \in \mathcal{W}$, for some $\lambda>0$, which can be satisfied by strongly convex objectives, and various other statistical estimation problems~\citep[see, e.g.,][]{karimi2016linear,drusvyatskiy2018error}. For the family of $L$-smooth loss functions, with $\gamma_t=\mathcal{O}(\lambda \rho t)$ for an arbitrary scalar $\rho\in(0,0.5]$ and $\epsilon_t\equiv 0$, we show in Theorem~\ref{thrm:fast_rate_smooth} that the excess risk at the weighted average output $\bar w_T = \frac{2}{T(T+1)}\sum_{t=1}^T t w_t$ is in expectation upper bounded by the following bound:
\begin{equation}\label{equat:contribution_bound_smooth}
R(\bar w_T) - R^* \lesssim \frac{\rho\left[R(w_0) - R^*\right]}{T^2} + \frac{LR^*}{\rho \lambda nT}.
\end{equation}
In this composite bound, the first \emph{bias} component associated with initial gap $R(w_0) - R^*$ has a decaying rate $\mathcal{O}\left(\frac{1}{T^2}\right)$ and the second \emph{variance} component associated with $R^*$ converges at the rate of $\mathcal{O}\left(\frac{1}{\lambda nT}\right)$. {The variance decaying rate actually matches the corresponding optimal rates of the SGD-type methods for strongly convex optimization~\citep{rakhlin2012making,dieuleveut2017harder,woodworth2021even}.} Also, such an $\mathcal{O}\left(\frac{1}{T^2} + \frac{1}{\lambda nT}\right)$ bounds matches those bounds for SPP~\citep{davis2019stochastic} or M-SPP~\citep{wang2017memory} which are in contrast obtained under a substantially stronger assumption that each individual loss function should be strongly convex and Lipschitz as well. In the realizable or near realizable machine learning regimes where $R^*$ equals to or approximates zero, the variance term in~\eqref{equat:contribution_bound_smooth} would be sharper than those bounds of~\citet{wang2017memory,davis2019stochastic}. To our best knowledge, the bound in~\eqref{equat:contribution_bound_smooth} for smooth and convex loss functions is \emph{new} to the SPP-type methods. More generally for arbitrary convex risk functions, we present in Theorem~\ref{thrm:fast_rate_smooth_convex} an $\mathcal{O}(\frac{1}{\sqrt{nT}})$ excess risk bound for exact M-SPP. Further, as shown in Theorem~\ref{thrm:fast_rate_smooth_inexact} and Theorem~\ref{thrm:fast_rate_smooth_convex_inexact}, similar results can be extended to the inexact M-SPP given that the inner-loop sub-optimality is sufficiently small.

In the regime $T \ll n$ which is of special interest for off-line incremental learning with large data batches, setting a near-optimal value $\rho=\sqrt{\frac{T}{n\lambda}}$ in the excess risk bound~\eqref{equat:contribution_bound_smooth} yields an $\mathcal{O}\left(\frac{1}{T\sqrt{\lambda nT}} \right)$ rate of convergence. This rate, in terms of $n$, is substantially slower than the $\mathcal{O}(\frac{1}{\lambda nT})$ rate available for the previous small-$n$-large-$T$ setup. In order to address such a deficiency, we propose a two-phase variant of M-SSP (see Algorithm~\ref{alg:mspp_tp}) to boost its performance in the small-$T$-large-$n$ regime: in the first phase, M-SPP with sufficiently small minibatch-size is invoked over $S_1$ to obtain $w_1$, and then initialized by $w_1$ the second phase applies M-SPP to the rest minibatches. Then in Theorem~\ref{thrm:fast_rate_smooth_initialization} we show that the in-expectation excess risk at the output of the second phase can be accelerated to scale as
\begin{equation}\label{equat:contribution_bound_smooth_twophase}
R(\bar w_T) - R^* \lesssim  \frac{L^2 (R(w_0) - R^*)}{\lambda^2 n^2T^2} + \frac{LR^*}{\lambda nT},
\end{equation}
which holds regardless to the mutual strength of minibatch size $n$ and iteration count $T$.

In addition to the above in-expectation risk bounds, we further derive a high-probability model estimation error bound of M-SPP based on algorithmic stability theory. Our deviation analysis is carried out over a sampling-without-replacement variant of M-SPP (see Algorithm~\ref{alg:mspp_swr}). For population risk with quadratic growth property, up to an additive term on the inner-loop sub-optimality $\epsilon_t$, we establish in Theorem~\ref{thrm:fast_rate_smooth_high_probability} the following deviation bound on the estimation error $D(\bar w_T, W^*)$ that holds with probability at least $1-\delta$ over $S$ while in expectation over the randomness of sampling:
\[
D(\bar w_T, W^*) \lesssim \frac{\sqrt{L\log(1/\delta)}\log(T)}{\lambda \sqrt{nT}} + \sqrt{\frac{\left[R(w_0) - R^*\right]}{\lambda T^2} + \frac{LR^*}{\rho \lambda^2 nT} }.
\]
When $T= \Omega (n)$, up to the logarithmic factors, this above bound matches (in terms of the total sample size $N=nT$) the known minimax lower bounds for statistical estimation even without computational limits~\citep{tsybakov2008introduction}.

To highlight the core contribution of this work, the following three new insights into M-SPP make our results distinguished from the best known of SPP-type methods for convex optimization:
\begin{enumerate}[leftmargin=*]
  \item First and for most, the fast rates in~\eqref{equat:contribution_bound_smooth} and~\eqref{equat:contribution_bound_smooth_twophase} reveal the impact of noise level, as quantified by $R^*$, to convergence rate which has not been previously known for SPP-type methods. These bounds are valid for smooth losses and thus complement the previous ones for Lipschitz losses~\citep{patrascu2017nonasymptotic,wang2017memory,davis2019stochastic}.
  \item Second, the risk bounds in~\eqref{equat:contribution_bound_smooth} and~\eqref{equat:contribution_bound_smooth_twophase} are established under the quadratic growth condition of population risk. This is substantially weaker than the instantaneous-loss-wise strong convexity assumption commonly imposed by prior analysis to achieve the comparable rates for SPP-type methods~\citep{toulis2017asymptotic,wang2017memory,asi2020minibatch}.
  \item Third, we provide a deviation analysis of M-SPP from the viewpoint of uniform algorithmic stability which to our best knowledge has not yet been addressed in the previous study on SPP-type methods.
\end{enumerate}
We should emphasize that, while we provide some insights into the numerical aspects of M-SPP through an empirical study, this work is largely a theoretical contribution.

\subsection{Related Work}

Our work is situated at the intersection of two lines of machine learning research: stochastic optimization and algorithmic stability theory, both of which have been actively studied with a vast body of beautiful and insightful theoretical results established in literature. We next incompletely review some representative work that are closely relevant to ours.

\emph{Stochastic optimization.} Stemming from the pioneering work of~\citet{robbins1951stochastic}, stochastic gradient descent (SGD) methods have been extensively studied to approximately solve a simplified version of the problem~\eqref{equat:problem} with $r\equiv 0$~\citep{zhang2004solving,nemirovski2009robust,rakhlin2012making,bottou2018optimization}. For the composite formulation, a vast body of proximal SGD methods have been developed for efficient optimization in the presence of potentially non-smooth regularizers~\citep{hu2009accelerated,duchi2010composite,ghadimi2012optimal,lan2012optimal,kulunchakov2019generic}.
To handle the challenges associated with stepsize selection and numerical instability of SGD~\citep{nemirovski2009robust,moulines2011non}, a number of more sophisticated  methods including implicit stochastic/online learning~\citep{crammer2006online,kulis2010implicit,toulis2016towards,toulis2017asymptotic} and stochastic proximal point (SPP) methods~\citep{bertsekas2011incremental,patrascu2017nonasymptotic,asi2019importance,asi2019stochastic,davis2019stochastic} have recently been investigated for enhancing stability and adaptivity of stochastic (composite) optimization. For an example, in our considered composite optimization regime, the iteration procedure of vanilla SPP can be expressed as the following recursion form for $\ i\ge 1$:
\begin{equation}\label{equat:spp}
\hat w^{\text{spp}}_i: = \argmin_{w \in \mathcal{W}} \ell(w; z_i) + r(w) + \frac{\gamma_i}{2}\|w-\hat w^{\text{spp}}_{i-1}\|^2,
\end{equation}
where $z_i\sim \mathcal{D}$ is a random data sample, $\gamma_i$ is a regularization modulus and $\|\cdot\|$ stands for the Euclidean norm. In contrast to standard SGD methods which are simple in per-iteration modeling but brittle to stepsize choice, the SPP methods are more accurate in objective approximation which leads to substantially improved stability to the choice of algorithm hyper-parameters while enjoying optimal guarantees on convergence~\citep{asi2019importance,asi2019stochastic}.

An attractive feature of these above (proximal) stochastic optimization methods is that their convergence guarantees directly apply to the population risk and the minimax optimal rates of order $\mathcal{O}(\frac{1}{T})$ are achievable after $T$ rounds of iteration for strongly convex problems~\citep{nemirovski2009robust,agarwal2012information,rakhlin2012making}. For large-scale machine learning, the improved memory efficiency is another practical argument in favor of stochastic over batch optimization methods. However, due to the sequential processing nature, the stochastic optimization methods tend to be less efficient for parallelization especially in distributed computing environment where excessive communication between nodes would be required for model update~\citep{bottou2018optimization}.

\emph{Empirical risk minimization.} At the opposite end of SGD-type and online learning, the following defined (composite) empirical risk minimization (ERM, a.k.a., M-estimation) is another popularly studied formulation for statistical learning~\citep{lehmann2006theory}:
\[
\hat w^{\text{erm}}_S: = \argmin_{w \in \mathcal{W}} \left\{R_S(w):= \frac{1}{N}\sum_{i=1}^N \ell(w; z_i) + r(w) \right\}.
\]
Thanks to the finite-sum structure, a large body of randomized incremental algorithms with linear rates of convergence have been established for ERM including SVRG~\citep{johnson2013accelerating,xiao2014proximal}, SAGA~\citep{defazio2014saga} and Katyusha~\citep{allen2017katyusha}, to name a few. From the perspective of distributed computation, one intrinsic advantage of ERM over SGD-type methods lies in that it can better explore the statistical correlation among data samples for designing communication-efficient distributed optimization algorithms~\citep{jaggi2014communication,shamir2014communication,zhang2015disco,lee2017distributed}. Unlike stochastic optimization methods, the generalization performances of the batch or incremental algorithms are by nature controlled by that of ERM~\citep{bottou2007tradeoffs} which has long been studied with a bunch of insightful results available~\citep{vapnik1999overview,bartlett2005local,srebro2010smoothness,mei2018landscape}. Particularly for strongly convex risk functions, the $\mathcal{O}(\frac{1}{N})$ rate of convergence is possible for ERM~\citep{bartlett2005local,koltchinskii2006local,zhang2017empirical}, though these fast rates are in general dimensionality-dependent for parametric learning models.

It has been recognized that SGD-type and ERM-type approaches cannot dominate each other in terms of generalization, runtime, storage and parallelization efficiency. This motivates a recent trend of trying to propose the so called stochastic model-based methods that can achieve the best of two worlds. Among others, a popular paradigm for such a purpose of combination is \emph{minibatch proximal update} which in each iteration updates the model via (approximately) solving a local ERM over a stochastic minibatch~\citep{li2014efficient,wang2017memory,asi2020minibatch,deng2021minibatch}. This strategy can be viewed as a minibatch extension to the SPP algorithm and it has been shown to attain a substantially improved trade-off between computation, communication and memory efficiency for large-scale distributed machine learning~\citep{li2014efficient,wang2017efficient}.  Alternatively, a number of online extensions of the incremental finite-sum algorithms, such as streaming SVRG~\citep{frostig2015competing} and streaming SAGA~\citep{jothimurugesan2018variance}, have been proposed for stochastic optimization with competitive guarantees to ERM but at lower cost of computation.

\emph{Algorithmic stability and generalization.} Since the seminal work of~\citet{bousquet2002stability}, algorithmic stability has been extensively studied with remarkable success achieved in establishing generalization bounds for strongly convex ERM estimators~\citep{zhang2003leave,mukherjee2006learning,shalev2010learnability}. Particularly, the state-of-the-art risk bounds of strongly convex ERM are offered by approaches based on the notion of uniform stability~\citep{feldman2018generalization,feldman2019high,bousquet2020sharper,klochkov2021stability}. It was shown by~\citet{hardt2016train} that the solution obtained via (stochastic) gradient descent is stable for smooth convex or non-convex loss functions. For non-smooth convex losses, the stability induced generalization bounds of SGD have been established in expectation~\citep{lei2020fine} or deviation~\citep{bassily2020stability}. For learning with sparsity, algorithmic stability theory has been employed to derive the generalization bounds of the popularly used iterative hard thresholding (IHT) algorithm~\citep{yuan2022stability}. Through the lens of uniform algorithmic stability, convergence rates of M-SPP have been studied for convex~\citep{wang2017memory} and weakly convex~\citep{deng2021minibatch} Lipschitz losses. While sharing a similar spirit to~\citet{wang2017memory,deng2021minibatch}, our analysis customized for smooth convex loss functions is considerably different and the resultant fast rates are of special interest in low-noise statistical settings~\citep{srebro2010smoothness}.

\subsection{Notation and Paper Organization}

\emph{Notation.} The key quantities and notations frequently used in our analysis are summarized in Table~\ref{tab:notation}.\vspace{0.2in}

\begin{table}[h]
\centering
\begin{tabular}{|c|c|}
\hline
Notation & Definition  \\
\hline
$n$ & minibatch size \\
$T$ & round of iteration \\
$N$ & total number of samples visited, i.e., $N=nT$ \\
$f$ & hypothesis \\
$\ell$ & loss function \\
$r$ & regularization term \\
$R^{\ell}$ & population risk: $R^{\ell}(w):= \mathbb{E}_{(x,y) \sim \mathcal{D}} [\ell(f_w(x), y)]$ \\
$R$ & composite population risk: $R(w):= R^{\ell}(w) + r(w)$   \\
$R^*$ & the optimal value of composite risk, i.e., $R^*:=\min_{w \in \mathcal{W}}R(w)$ \\
$W^*$ & the optimal solution set of composite risk, i.e., $W^*:=\argmin_{w \in \mathcal{W}}R(w)$ \\
$S_t$ & data minibatch at time instance $t$ \\
$S_{I}$ & The union of data minibatch over $I$, i.e., $S_{I}:=\{S_t\}_{t\in I}$ \\
$R^\ell_{S}$ & empirical risk over $S$, i.e., $R^\ell_{S}(w):=\frac{1}{|S|}\sum_{(x,y)\in S} \ell(f_w(x, y)$ \\
$R_{S}$ & composite empirical risk over $S$, i.e., $R_{S}(w):=R^\ell_{S}(w) + r(w)$ \\
$\epsilon_t$ & precision of minibatch risk minimization at time instance $t$  \\
$\|w\|_1$ & $\ell_1$-norm of a vector $w$, i.e., $\|w\|_1:=\sum_i |[w]_i|$ \\
$\|w\|$ & Euclidean norm of a vector $w$ \\
$D(w,W^*)$ & the distance from $w$ to $W^*$, i.e., $D(w,W^*)=\min_{w^*\in W^*}\|w - w^*\|$ \\
$[T]$ & $[T]:=\{1,...,T\}$ \\
$\mathbf{1}_{\{C\}}$ & the indicator function of the condition $C$\\
\hline
\end{tabular}
\caption{Table of notation. \label{tab:notation}}
\end{table}

\newpage

\noindent\emph{Organization.} The paper proceeds with the material organized as follows:  In Section~\ref{sect:analysis_smooth}, we analyze the risk bounds of exact M-SPP with convex and smooth loss functions and present a two-phase variant to further improve convergence performance. In Section~\ref{sect:analysis_smooth_inexact}, we extend our analysis to the more realistic setting where inexact M-SPP iteration is allowed. In Section~\ref{sect:high_probability}, we study the high-probability estimation error bounds of M-SPP. A comprehensive comparison to some closely relevant results is highlighted in Section~\ref{sect:related_work}. The numerical study for theory verification and algorithm evaluation is provided in Section~\ref{sect:experiment}. The concluding remarks are made in Section~\ref{sect:conclusion}. All the proofs of main results and some additional results on the iteration stability of M-SPP are relegated to appendix.

\section{A Sharper Analysis of M-SPP for Smooth Loss}
\label{sect:analysis_smooth}

In this section, we analyze the convergence rate of M-SPP for smooth and convex loss functions using the tools developed in algorithmic stability theory. In what follows, for the sake of notation simplicity and presentation clarity of core ideas, we assume for the time being that the inner-loop composite ERM in the M-SPP iteration procedure~\eqref{equat:mspp_iteration} has been solved exactly with $\epsilon_t\equiv0$, i.e.,
\begin{equation}\label{equat:mspp_iteration_exact}
w_t = \argmin_{w \in \mathcal{W}} \left\{F_t(w):=R_{S_t}(w) + \frac{\gamma_t}{2} \|w-w_{t-1}\|^2\right\}.
\end{equation}
A full convergence analysis for the inexact variant (i.e., $\epsilon_t>0$) will be presented in the Section~\ref{sect:analysis_smooth_inexact} via a slightly more involved perturbation analysis.

\subsection{Basic Assumptions}

We begin by introducing some basic assumptions that will be used in the analysis to follow. We say a differentiable function $g: \mathcal{W} \mapsto \mathbb{R}$ is $L$-smooth if $\forall s, t\in \mathbb{R}$,
\[
\left|g(w) -g(w') - \langle \nabla g(w), w - w'\rangle \right| \le \frac{L}{2}|w - w'|^2.
\]
As formally stated in the following assumption, we suppose that the individual loss functions are convex and $L$-smooth which can be satisfied, e.g., by the quadratic loss (for regression) and the logistic loss (for prediction).
\begin{assumption}\label{assump:smooth}
The loss function $\ell$ is convex and $L$-smooth with respect to its first argument. Also, we assume that the regularization term $r$ is convex over $\mathcal{W}$.
\end{assumption}
Let us define $D(w,W^*):=\min_{w^*\in W^*}\|w-w^*\|$ as the distance from $w$ to the set $W^*$ of minimizers. The next assumption requires that the population risk has the characterization of quadratic growth away from the set of minimizers~\citep{anitescu2000degenerate,karimi2016linear}.
\begin{assumption}\label{assump:quadratic_growth}
The population risk function $R$ satisfies $R(w) \ge R^* + \frac{\lambda}{2}D^2(w,W^*),  \forall w \in \mathcal{W}$ for some $\lambda>0$.
\end{assumption}
Clearly, the quadratic growth property can be implied by the traditional strong convexity condition (around the minimizers) which is satisfied by a number of popular learning models including linear and logistic regression, generalized linear models, smoothed Huber losses, and various other statistical estimation problems. Particularly, Assumption~\ref{assump:quadratic_growth} holds when $R^{\ell}$ is strongly convex and $r$ is convex. Notice that for risk functions with quadratic growth property, the prior analysis of M-SPP for Lipschitz losses~\citep{wang2017memory} is not generally applicable because Assumption~\ref{assump:quadratic_growth} implies that the Lipschitz constant of loss could be arbitrarily large if the infinite distance $\min_{w^*\in W^*}\|w - w^*\|\rightarrow \infty$ is allowed.

\subsection{Main Results}
\label{ssect:main_results}

The following theorem is our main result on the in-expectation rate of convergence of the exact M-SPP with smooth loss and quadratic growth population risk functions. Recall that $N=nT$ is the total number of data points visited up to the iteration counter $T$.

\begin{restatable}{theorem}{FastRateSmooth}\label{thrm:fast_rate_smooth}
Suppose that Assumptions~\ref{assump:smooth} and~\ref{assump:quadratic_growth} hold. Consider $\epsilon_t\equiv 0$ and the weighted average output $\bar w_T = \frac{2}{T(T+1)}\sum_{t=1}^T t w_t$ in Algorithm~\ref{alg:mspp}. Let $\rho \in (0,0.5]$ be an arbitrary scalar.
\begin{itemize}
  \item[(a)] Suppose that $n\ge\frac{64L}{\lambda\rho}$. Set $\gamma_t = \frac{\lambda \rho t}{4}$ for $t\ge1$.  Then for any $T\ge 1$,
\[
\mathbb{E} \left[R(\bar w_T) -  R^*\right] \le \frac{4\rho\left[R(w_0) - R^*\right]}{T^2} + \frac{2^{9}L}{\lambda \rho nT} R^*.
\]
  \item[(b)] {Set $\gamma_t = \frac{\lambda \rho t}{4}+ \frac{16L}{n}$ for $t\ge 1$. Then for any $T\ge 1$,
\[
\mathbb{E} \left[R(\bar w_T) -  R^*\right] \le \left(\frac{4\rho}{T^2} + \frac{2^{8}L}{\lambda nT}\right)[R(w_0) - R^*]  + \left(\frac{2^{16}L^2}{\lambda^2\rho^2n^2T}+\frac{2^{9}L}{\lambda\rho nT}\right)R^*,
\]}
\end{itemize}
\end{restatable}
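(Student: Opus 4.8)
The plan is to combine a per-iteration strong-convexity inequality for the inner subproblem with an algorithmic-stability control of the inner-loop generalization gap, then aggregate along the iterations using the weights that define $\bar w_T$ and close the recursion via the quadratic growth property. \emph{Step 1 (per-iteration inequality).} Fix $t$ and let $w^*_{t-1}\in\argmin_{w^*\in W^*}\|w_{t-1}-w^*\|$. Since $F_t$ in~\eqref{equat:mspp_iteration_exact} is $\gamma_t$-strongly convex with minimizer $w_t$, evaluating the strong-convexity lower bound at $w^*_{t-1}\in W^*\subseteq\mathcal W$, subtracting $R^*$, and using $\|w^*_{t-1}-w_t\|\ge D(w_t,W^*)$ and $\|w^*_{t-1}-w_{t-1}\|=D(w_{t-1},W^*)$ gives
\[
\frac{\gamma_t}{2}D^2(w_t,W^*) + \big(R_{S_t}(w_t)-R^*\big) \le \frac{\gamma_t}{2}D^2(w_{t-1},W^*) - \frac{\gamma_t}{2}\|w_t-w_{t-1}\|^2 + \big(R_{S_t}(w^*_{t-1})-R^*\big).
\]
Conditioning on $S_1,\dots,S_{t-1}$ and taking expectation kills the last term since $\mathbb{E}[R_{S_t}(w^*_{t-1})]=R^*$, and decomposing $R_{S_t}(w_t)-R^*=(R(w_t)-R^*)-(R(w_t)-R_{S_t}(w_t))$ turns this into a recursion in $\mathbb{E}[D^2(w_t,W^*)]$ and $\mathbb{E}[R(w_t)-R^*]$ modulo the inner-loop generalization gap $\mathbb{E}[g_t]$ with $g_t:=R(w_t)-R_{S_t}(w_t)$.

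\emph{Step 2 (stability bound).} Conditioned on $w_{t-1}$, the iterate $w_t$ is the exact minimizer of a $\gamma_t$-strongly convex regularized empirical risk whose loss part is convex and $L$-smooth; using the self-bounding property $\|\nabla\ell(w;z)\|^2\le 2L\ell(w;z)$ of nonnegative smooth losses, a uniform-stability argument (of the type relegated to the appendix) gives $\mathbb{E}[g_t]\lesssim\frac{L}{\gamma_t n}\mathbb{E}[R_{S_t}(w_t)]$ whenever $\gamma_t n\gtrsim L$. Bounding the empirical risk at the output by optimality, $R_{S_t}(w_t)\le F_t(w_t)\le F_t(w^*_{t-1})$, hence $\mathbb{E}[R_{S_t}(w_t)]\le R^*+\frac{\gamma_t}{2}\mathbb{E}[D^2(w_{t-1},W^*)]$, yields $\mathbb{E}[g_t]\lesssim\frac{LR^*}{\gamma_t n}+\frac{L}{n}\mathbb{E}[D^2(w_{t-1},W^*)]$.

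\emph{Step 3 (aggregation).} Substituting Step~2 into Step~1 and invoking quadratic growth, $\frac{\lambda}{2}D^2(w_t,W^*)\le R(w_t)-R^*$, to carve off a slack term so that $D^2(w_t,W^*)$ carries coefficient $\frac{\gamma_t+\lambda/2}{2}$ on the left while a term $\frac12(R(w_t)-R^*)$ survives, I multiply the step-$t$ inequality by $t$ and sum over $t=1,\dots,T$. With $\gamma_t=\lambda\rho t/4$ and $\rho\le 1/2$ one checks $(t+1)\gamma_{t+1}\le t(\gamma_t+\lambda/2)$, so the $D^2$-terms telescope — the extra $\frac{L}{n}D^2(w_{t-1},W^*)$ from the stability term fits inside this margin precisely because $n\ge 64L/(\lambda\rho)$ forces $\gamma_1 n\ge 16L$ — leaving only the initial $\frac{\gamma_1}{2}D^2(w_0,W^*)\le\frac{\rho}{4}(R(w_0)-R^*)$ and the accumulated $\sum_t\frac{tLR^*}{\gamma_t n}\lesssim\frac{LR^*T}{\lambda\rho n}$. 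Dividing by $T(T+1)/2$ and applying Jensen's inequality, $R(\bar w_T)-R^*\le\frac{2}{T(T+1)}\sum_t t(R(w_t)-R^*)$, produces part~(a) with the stated constants. For part~(b), taking $\gamma_t=\lambda\rho t/4+16L/n$ makes $\gamma_t n\ge 16L$ for \emph{every} $n$, so Step~2 applies unconditionally; the same argument goes through except that the telescoping fails to close for the $O(L/(\lambda n))$ earliest iterates, and these residual terms — bounded by iterating the contraction $\mathbb{E}[D^2(w_t,W^*)]\le\frac{\gamma_t}{\gamma_t+\lambda}\mathbb{E}[D^2(w_{t-1},W^*)]+O(\mathbb{E}[g_t]/\gamma_t)$ over that short initial segment — together with the $16L/n$ now sitting inside $\gamma_1 D^2(w_0,W^*)$ and the second-order feedback of $\mathbb{E}[D^2(w_{t-1},W^*)]$ through the stability term, contribute the additional $\frac{L}{\lambda nT}(R(w_0)-R^*)$ and $\frac{L^2}{\lambda^2\rho^2 n^2T}R^*$ terms.

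\emph{Main obstacle.} The delicate point is Step~2: for smooth but non-Lipschitz losses the inner-ERM stability is governed by the loss value at the minimizer itself, which is the very quantity being bounded, so the estimate is inherently circular and closes only under a quantitative lower bound on $\gamma_t n$ relative to $L$; controlling how the residual $\frac{L}{n}\mathbb{E}[D^2(w_{t-1},W^*)]$ term propagates through the weighted telescoping without degrading the $1/T^2$ bias rate or the $1/(nT)$ variance rate — and, in part~(b), doing so when $\gamma_t n$ is allowed to be as small as $16L$ — is where the main effort lies.
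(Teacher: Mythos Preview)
Your overall architecture—strong-convexity inequality for the inner subproblem, stability control of the inner generalization gap, quadratic growth to close a weighted telescoping—is the same as the paper's. The substantive difference is in how you close the circularity in Step~2. The leave-one-out stability argument most naturally delivers
\[
\bigl|\mathbb{E}[R(w_t)-R_{S_t}(w_t)\mid\mathcal{F}_{t-1}]\bigr|\ \le\ \frac{8L}{\gamma_t n}\,\mathbb{E}[R(w_t)\mid\mathcal{F}_{t-1}],
\]
i.e.\ the \emph{population} risk on the right (because the perturbed iterate $w_t^{(i)}$ is independent of the held-out point $z_{i,t}$). The paper exploits this by writing $R(w_t)=R^*+(R(w_t)-R^*)$ and, under $\gamma_t n\ge 16L$, absorbing the second piece directly into the left-hand side; this yields the clean one-step bound
\[
\mathbb{E}[R(w_t)-R^*\mid\mathcal{F}_{t-1}]\le\gamma_t\bigl(D^2(w_{t-1},W^*)-\mathbb{E}[D^2(w_t,W^*)\mid\mathcal{F}_{t-1}]\bigr)+\tfrac{16L}{\gamma_t n}R^*,
\]
with no residual $D^2$-term to track. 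Your route—first converting to $\mathbb{E}[R_{S_t}(w_t)]$ and then invoking $R_{S_t}(w_t)\le F_t(w^*_{t-1})$—is an avoidable detour that manufactures the extra $\tfrac{L}{n}\mathbb{E}[D^2(w_{t-1},W^*)]$ you then have to squeeze into the telescoping margin.

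This matters most for part~(b). Your sketch (``telescoping fails for the earliest $O(L/(\lambda n))$ iterates; handle them by a short contraction argument'') is vague and does not match what the paper does. The paper's device is simpler: drop the nonnegative $R(w_t)-R^*$ from the one-step bound to get $\mathbb{E}[D^2(w_t,W^*)]\le D^2(w_{t-1},W^*)+\tfrac{16L}{\gamma_t^2 n}R^*$, sum once to obtain the \emph{uniform} a~priori estimate $\mathbb{E}[D^2(w_t,W^*)]\le D^2(w_0,W^*)+O\!\bigl(\tfrac{L}{\lambda^2\rho^2 n}\bigr)R^*$ valid for all $t$, and then use this to bound the non-telescoping residual $\tfrac{L}{n}\sum_{t\le T}\mathbb{E}[D^2(w_{t-1},W^*)]$ that arises after summing the $(t+1)$-weighted recursion (via $\sum_t t(a_{t-1}-a_t)\le\sum_t a_{t-1}$). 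This is what produces the additional $\tfrac{L}{\lambda nT}(R(w_0)-R^*)$ and $\tfrac{L^2}{\lambda^2\rho^2 n^2 T}R^*$ terms cleanly, without any case split on $t$.
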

\begin{proof}
The proof technique is inspired by the uniform stability arguments developed by~\citet{wang2017memory} for Lipschitz and instance-wise strongly convex loss, with several new elements along developed for handling smooth loss and quadratic growth of risk function. As a non-trivial ingredient, we show that it is possible to extend those stability arguments to smooth losses in view of a classical result from~\citet[Lemma 2.1]{srebro2010smoothness} that allows the derivative of a smooth loss to be bounded in terms of its function value. See Appendix~\ref{apdsect:proof_fast_rate_smooth} for a full proof of this result.
\end{proof}
A few remarks on Theorem~\ref{thrm:fast_rate_smooth} are in order.
\begin{remark}\label{remark:SGD_ERM}
In Part (a), the minibatch size is required to be sufficiently large. In this setting, the excess risk bound consists of two components: the first \emph{bias} component associated with initial gap $R(w_0) - R^*$ has a decaying rate $\mathcal{O}(\frac{1}{T^2})$ and the second \emph{variance} component associated with $R^*$ vanishes at a dominate rate of $\mathcal{O}(\frac{1}{\lambda nT})$. The variance term shows that the convergence rate can be improved in the low-noise settings where the factor of $R^*$ is relatively small. Extremely in the separable case with $R^*=0$, the excess risk bound of Theorem~\ref{thrm:fast_rate_smooth} would scale as fast as $\mathcal{O}(\frac{1}{T^2})$.
\end{remark}
\begin{remark}
One disadvantage of the result in Part (a) lie in that the minibatch size is required to be sufficiently larger than the condition number of the population risk $R$. Contrastively, the excess risk bound in Part (b) holds for arbitrary minibatch sizes. The cost, however, is a relatively slower bias decaying term $\mathcal{O}(\frac{1}{T^2}+\frac{1}{\lambda nT})$ which is dominated by $\mathcal{O}(\frac{1}{\lambda nT})$ in the case of $T\gg n$.
\end{remark}
\begin{remark}\label{remark:comparison_of_SOTA}
Let $N=nT$ be the total number of data points accessed. When $T\gg n$, the $\mathcal{O}(\frac{1}{N})$ dominant rates in Theorem~\ref{thrm:fast_rate_smooth} match those prior ones for SPP-type methods~\citep{wang2017memory,davis2019stochastic} which are, however, obtained under the assumption that each individual loss function should be Lipschitz continuous and strongly convex. In comparison to the $\mathcal{O}(\frac{1}{N})$ rate established for SGD with smooth loss~\citep[Theorem 12]{lei2020fine}, our result in Theorem~\ref{thrm:fast_rate_smooth} is stronger and less stringent in the following senses: 1) our bound shows explicitly the impact of $R^*$ which usually represents the noise level of model, and 2) we only require the population risk to have quadratic growth property while the bound of~\citet[Theorem 12]{lei2020fine} not only requires the loss to be Lipschitz but also assumes the empirical risk to be strongly convex.
\end{remark}


Let us further look into the choice of the scalar $\rho$ in Theorem~\ref{thrm:fast_rate_smooth}. We focus the discussion on the part (a) and similar observations apply to the part (b). We distinguish the discussion in the following two complementary cases regarding the mutual strength of minibatch-size $n$ and round of iteration $T$:

\begin{itemize}[leftmargin=*]
\item \textbf{Case I: Small-$n$-large-$T$.}  Suppose that $n=\mathcal{O}(1)$ and $T\rightarrow \infty$ is allowed. In this case, simply setting $\rho=0.5$ yields the convergence rate of order $\mathcal{O}\left(\frac{1}{T^2} + \frac{1}{\lambda nT} \right)$ in the part (a).
\item \textbf{Case II: Small-$T$-large-$n$.} Suppose that $T=\mathcal{O}(1)$ and $n\rightarrow \infty$ is allowed. In this setup, given that $n\ge \frac{4T}{\lambda}$, then with a roughly optimal choice $\rho=\sqrt{\frac{T}{n\lambda}}$ the excess risk bound in Theorem~\ref{thrm:fast_rate_smooth}(a) will be of the order $\mathcal{O}\left(\frac{1}{T\sqrt{\lambda nT}} \right)$, which is substantially slower than the previous fast rate in Case I. {This is intuitive because M-SPP with large minibatches behaves more like regularized ERM which is known to exhibit slow rate of convergence even for strongly convex problems~\citep{shalev2010learnability,srebro2010smoothness}}.  Nevertheless, such a small-$T$-large-$n$ setup is of special interest for off-line incremental learning with large minibatches and distributed statistical learning~\citep{li2014efficient,wang2017memory,you2020large}. We will address this critical case in the next subsection.
\end{itemize}

\newpage

\subsection{A Two-Phase M-SPP Method}
\label{ssect:two_phase_mspp}

\begin{algorithm}[h]\caption{\texttt{Two-Phase M-SPP (M-SPP-TP)}}
\label{alg:mspp_tp}
\SetKwInOut{Input}{Input}\SetKwInOut{Output}{Output}\SetKw{Initialization}{Initialization}
\Input{Dataset $S = \{S_t\}_{t=1}^T$ in which $S_t:=\{z_{i,t}\}_{i=1}^n \overset{\text{i.i.d.}}{\sim} \mathcal{D}^n$, regularization modulus $\{\gamma_t>0\}_{t\in [T]}$.}
\Output{$\bar w_T$ as a weighted average of $\{w_t\}_{2\le t\le T}$.}
\Initialization{Specify a value of $w_0$. Typically $w_0=0$.}

\tcc{\textcolor[rgb]{0.00,0.00,1.00}{\textbf{Phase-I}}}

Divide sample $S_1$ into disjoint minibatches of equal size $m$;

Run M-SPP over these minibatches to obtain the output $w_1$;

\tcc{\textcolor[rgb]{0.00,0.00,1.00}{\textbf{Phase-II}}}

Initialized with $w_1$, run M-SPP over data minibatches $\{S_t\}_{2\le t\le T}$ with $\{\gamma_t\}_{2\le t\le T}$ to obtain the sequence $\{w_t\}_{2\le t\le T}$.

\end{algorithm}

To remedy the deficiencies mentioned in the previous discussion, we propose a two-phase variant of M-SSP, as outlined in Algorithm~\ref{alg:mspp_tp}, to boost its performance in the small-$T$-large-$n$ regimes. The procedure can be regarded as sort of a restarting argument~\citep{nemirovskii1985optimal,renegar2021simple,zhou2022practical} for M-SPP. More specifically, the Phase-I serves as an initialization step that invokes M-SPP to a uniform division of $S_1$ with minibatch size $m$ to obtain $w_1$. {Then starting from $w_1$, the Phase-II just invokes M-SPP to the consequent large minibatches $\{S_t\}_{t\ge 2}$ which is suitable for large-scale parallelization if applicable.} The following theorem is a consequence of Theorem~\ref{thrm:fast_rate_smooth} to such a two-phase M-SPP procedure.

\begin{restatable}{theorem}{FastRateSmoothInitialization}\label{thrm:fast_rate_smooth_initialization}
Suppose that Assumptions~\ref{assump:smooth} and~\ref{assump:quadratic_growth} hold. Consider $\epsilon_t\equiv 0$ for implementing M-SPP in both Phase-I and Phase-II of Algorithm~\ref{alg:mspp_tp}. Consider the weighted average output $\bar w_T = \frac{2}{(T-1)(T+2)}\sum_{t=2}^T t w_t$ in Phase-II.
\begin{itemize}
  \item[(a)] Suppose that $n\ge\frac{128 L}{\lambda}$. Set $m=\frac{128 L}{\lambda}$ in Phase-I and $\gamma_t = \frac{\lambda t}{8}$ for implementing M-SPP in both Phase-I and Phase II. Then for any $T\ge 2$, $\bar w_T $ satisfies
\[
\mathbb{E} \left[R(\bar w_T) -  R^*\right] \lesssim \frac{L^2\left[R(w_0) -  R^*\right]}{\lambda^2 n^2T^2} + \frac{L}{\lambda nT} R^*.
\]
  \item[(b)] {Set $m=\mathcal{O}(1)$ in Phase-I and $\gamma_t = \frac{\lambda t}{8}+ \frac{16L}{n}$ for implementing M-SPP in both Phase-I and Phase-II. Then for any $T\ge 2$, $\bar w_T $ satisfies
\[
\mathbb{E} \left[R(\bar w_T) -  R^*\right] \lesssim \frac{L^2\left[R(w_0) -  R^*\right]}{\lambda^2 nT} + \frac{L^3}{\lambda^3 nT} R^*.
\]}
\end{itemize}
\end{restatable}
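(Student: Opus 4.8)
The plan is to obtain Theorem~\ref{thrm:fast_rate_smooth_initialization} as two back-to-back invocations of Theorem~\ref{thrm:fast_rate_smooth} glued together by the tower rule. Because the minibatch $S_1$ consumed in Phase-I is independent of the minibatches $\{S_t\}_{t\ge 2}$ used in Phase-II, one can condition on the Phase-I output $w_1$ and apply Theorem~\ref{thrm:fast_rate_smooth} to Phase-II with $w_1$ in the role of the initializer $w_0$; taking expectation over the Phase-I randomness then replaces the factor $R(w_1)-R^*$ there by $\mathbb{E}[R(w_1)-R^*]$, which is itself controlled by a first application of Theorem~\ref{thrm:fast_rate_smooth} to Phase-I. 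Thus the entire argument reduces to composing the two resulting bounds and reading off which terms dominate.

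Concretely, fix $\rho=\tfrac12$ throughout, which is precisely what turns the prescribed stepsize into $\gamma_t=\tfrac{\lambda\rho t}{4}=\tfrac{\lambda t}{8}$. In Phase-I the effective minibatch size is $m$ and the number of inner iterations is $T_1=n/m$ (take $\lfloor n/m\rfloor$ if $m\nmid n$, at no cost to the rates). For part~(a), the choice $m=128L/\lambda$ makes the minibatch requirement $m\ge 64L/(\lambda\rho)$ of Theorem~\ref{thrm:fast_rate_smooth}(a) hold with equality and gives $T_1=\Theta(n\lambda/L)$; feeding $(m,T_1)$ into that bound and using $mT_1=n$ yields
\[
\mathbb{E}\!\left[R(w_1)-R^*\right]\;\lesssim\;\frac{L^2\left[R(w_0)-R^*\right]}{\lambda^2 n^2}+\frac{L}{\lambda n}\,R^*,
\]
i.e.\ Phase-I shrinks the initial gap by the factor $T_1^{-2}=\Theta(L^2/(\lambda^2n^2))$. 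For part~(b), $m=\mathcal{O}(1)$ gives $T_1=\Theta(n)$, but now only the weaker Theorem~\ref{thrm:fast_rate_smooth}(b) is available: its bias coefficient $\tfrac{4\rho}{T_1^2}+\tfrac{2^8L}{\lambda mT_1}$ is of order $\tfrac{L}{\lambda n}$ and its $R^*$-coefficient is of order $\tfrac{L^2}{\lambda^2 n}$ (using the effective condition number $L/\lambda\ge 1$), so Phase-I only buys a $1/n$ shrinkage: $\mathbb{E}[R(w_1)-R^*]\lesssim \tfrac{L}{\lambda n}[R(w_0)-R^*]+\tfrac{L^2}{\lambda^2 n}R^*$.

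For Phase-II the effective minibatch size is $n$ and the iteration count is $T-1$; applying Theorem~\ref{thrm:fast_rate_smooth}(a) (resp.\ (b)) with $w_1$ as initializer, taking total expectation, and substituting the Phase-I estimate above produces a sum of pairwise products of the two bounds' coefficients, and collapsing this to the claimed rate is routine: every cross term carries an extra $1/T$ or $1/n$ factor relative to the dominant term, and every $R^*$-term whose power of $L/\lambda$ is below the maximal one is absorbed, using $L/\lambda\ge 1$ and $T\ge 1$, into $\tfrac{L}{\lambda nT}R^*$ for part~(a) and into $\tfrac{L^3}{\lambda^3 nT}R^*$ for part~(b). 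The only genuinely fiddly points are verifying that the chosen parameters simultaneously satisfy the hypotheses of both invocations of Theorem~\ref{thrm:fast_rate_smooth} — the minibatch condition $n\ge 128L/\lambda$ in part~(a), the treatment of $m\nmid n$, a harmless reindexing of the Phase-II iterates (the weight $t$ versus $t-1$ in the average), and, in part~(b), reconciling the prescribed offset $\tfrac{16L}{n}$ in $\gamma_t$ with the much smaller Phase-I minibatch of size $m=\mathcal{O}(1)$; none of this affects the stated orders.
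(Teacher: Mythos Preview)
Your proposal is correct and follows essentially the same approach as the paper: two back-to-back applications of Theorem~\ref{thrm:fast_rate_smooth}, glued by the tower rule over the Phase-I randomness, with $\rho=\tfrac12$ throughout. The only cosmetic difference is that for Phase-II the paper, rather than black-boxing Theorem~\ref{thrm:fast_rate_smooth}, re-opens its proof and sums the key telescoping recursion~\eqref{inequat:proof_key_recursion_smooth} directly over $t=2,\dots,T$ so as to produce the nonstandard weighted average $\bar w_T=\frac{2}{(T-1)(T+2)}\sum_{t=2}^T t w_t$ exactly; your ``harmless reindexing'' remark identifies precisely this point, and you are right that it does not affect the stated orders.
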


\vspace{-0.1in}

\begin{proof}
See Appendix~\ref{apdsect:proof_fast_rate_smooth_initialization_corollary} for a proof of this result.
\end{proof}

\vspace{-0.1in}

\begin{remark}
The part (a) of Theorem~\ref{thrm:fast_rate_smooth_initialization} suggests that when the minibatch size is sufficiently large, the excess risk bound of two-phase M-SPP has a bias decaying term of scale $\mathcal{O}\left(\frac{1}{n^2T^2}\right)$ and a variance term that decays at the rate of $\mathcal{O}(\frac{1}{nT})$. The rate is valid even when the scales of $T$ relatively small, and thus is stronger than the $\mathcal{O}\left(\frac{1}{T\sqrt{nT}}\right)$ rate implied by Theorem~\ref{thrm:fast_rate_smooth} for the vanilla M-SPP in the small-$T$-large-$n$ regime. It is worth to mention that both the bias and variance components in our bound for M-SPP are faster than those derived for strongly convex ERM~\citep{srebro2010smoothness}.
\end{remark}

\begin{remark}
The excess risk bound in Part (b) of Theorem~\ref{thrm:fast_rate_smooth_initialization} is valid for arbitrary minibatch sizes, but at the cost of a relatively slower $\mathcal{O}(\frac{1}{ nT})$ bias decaying rate.
\end{remark}

\subsection{Results for Arbitrary Convex Risks}

We further analyze the proposed M-SPP algorithm when the loss function $\ell$ is convex and smooth, but without requiring that the composite risk $R$ has quadratic growth property. The following is our main result in such a generic setting.
\begin{restatable}{theorem}{FastRateSmoothConvex}\label{thrm:fast_rate_smooth_convex}
Suppose that Assumption~\ref{assump:smooth} holds. Set $\gamma_t \equiv \gamma \ge  \frac{16 L}{n}$. Let $\bar w_T = \frac{1}{T}\sum_{t=1}^T w_t$ be the average output of Algorithm~\ref{alg:mspp}. Then
\[
\mathbb{E} \left[R(\bar w_T) -  R^*\right] \lesssim \frac{\gamma}{T}D^2(w_0, W^*) + \frac{L}{\gamma n}  R^*.
\]
{Particularly for $\gamma = \sqrt{\frac{T}{n}}+ \frac{16 L}{n}$, it holds that
\[
\mathbb{E} \left[R(\bar w_T) -  R^*\right] \lesssim \left(\frac{1}{\sqrt{nT}} + \frac{L}{nT}\right)D^2(w_0,W^*) + \frac{L}{\sqrt{nT}}  R^*.
\]}
\end{restatable}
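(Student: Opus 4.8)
The plan is to treat Theorem~\ref{thrm:fast_rate_smooth_convex} as the non-quadratic-growth counterpart of Theorem~\ref{thrm:fast_rate_smooth}(b): with a \emph{constant} regularization modulus $\gamma$ and \emph{uniform} averaging, the weighted-telescoping trick that produced the $\mathcal{O}(1/T^2)$ bias is unavailable, so one falls back on the classical proximal-point descent estimate and pays the standard $\mathcal{O}(\gamma/T)$ price for the initial distance, while keeping the same smooth-loss stability machinery to convert empirical into population suboptimality. Concretely, I would start from the optimality condition of the inner problem~\eqref{equat:mspp_iteration_exact}: since $w_t$ minimizes $R_{S_t}(\cdot)+\tfrac{\gamma}{2}\|\cdot-w_{t-1}\|^2$ over $\mathcal{W}$, the implicit subgradient $g_t:=\gamma(w_{t-1}-w_t)$ lies in $\partial R_{S_t}(w_t)$ up to the normal cone of $\mathcal{W}$ (which only helps). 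Fixing $w^\star:=\Pi_{W^*}(w_0)$, convexity of $R_{S_t}$ together with the three-point identity gives the per-step estimate
\[
\|w_t-w^\star\|^2 \;\le\; \|w_{t-1}-w^\star\|^2 \;-\; \frac{2}{\gamma}\bigl(R_{S_t}(w_t)-R_{S_t}(w^\star)\bigr) \;-\; \|w_{t-1}-w_t\|^2 .
\]

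Next I would take expectations, telescope over $t=1,\dots,T$, discard the nonnegative terms $\mathbb{E}\|w_T-w^\star\|^2$ and $\mathbb{E}\|w_{t-1}-w_t\|^2$, and use $\|w_0-w^\star\|^2=D^2(w_0,W^*)$ together with $\mathbb{E}[R_{S_t}(w^\star)]=R(w^\star)=R^*$ (valid because $w^\star$ is data-independent), obtaining $\frac1T\sum_{t}\mathbb{E}[R_{S_t}(w_t)-R^*]\le \frac{\gamma}{2T}D^2(w_0,W^*)$. The genuinely nontrivial step is to pass from the empirical suboptimality $R_{S_t}(w_t)-R^*$ to the population suboptimality $R(w_t)-R^*$. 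Conditionally on $w_{t-1}$ (which depends only on $S_{[t-1]}$ and is independent of $S_t$), the iterate $w_t$ is the exact minimizer of a $\gamma$-strongly convex regularized empirical objective built from the $n$ i.i.d.\ samples of $S_t$, hence uniformly argument-stable with parameter of order $G/(\gamma n)$, where $G$ bounds the relevant loss gradients. Since $\ell$ is $L$-smooth and nonnegative, the self-bounding inequality $\|\nabla\ell(w;z)\|^2\le 2L\,\ell(w;z)$ (Lemma~2.1 of~\citet{srebro2010smoothness}) lets one replace $G^2$ by $2L\ell$, so that after taking the conditional generalization-gap bound and using the tower property over $S_{[t-1]}$,
\[
\mathbb{E}\bigl[R(w_t)-R_{S_t}(w_t)\bigr] \;\le\; \frac{cL}{\gamma n}\,\bigl(\mathbb{E}[R(w_t)-R^*]+R^*\bigr)
\]
for an absolute constant $c$.

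Combining the two displays, averaging over $t$, and using convexity of $R$ (so $R(\bar w_T)\le\frac1T\sum_t R(w_t)$) yields an inequality of the form $\bigl(1-\tfrac{cL}{\gamma n}\bigr)\cdot\frac1T\sum_t\mathbb{E}[R(w_t)-R^*]\le \frac{\gamma}{2T}D^2(w_0,W^*)+\frac{cL}{\gamma n}R^*$; the hypothesis $\gamma\ge 16L/n$ is exactly what forces the self-referential coefficient $\tfrac{cL}{\gamma n}\le\tfrac12$, so it can be absorbed on the left, and rearranging gives $\mathbb{E}[R(\bar w_T)-R^*]\lesssim \frac{\gamma}{T}D^2(w_0,W^*)+\frac{L}{\gamma n}R^*$. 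The second displayed bound is then immediate by substituting $\gamma=\sqrt{T/n}+16L/n$ and bounding the variance term using $\gamma\ge\sqrt{T/n}$. I expect the main obstacle to be precisely this circular absorption for non-Lipschitz losses: one must route the gradient bound through the self-bounding property and track constants (in the stability estimate, in the generalization-gap lemma, and in $\mathbb{E}_z[\ell(w_t;z)]\le R(w_t)$) so that the coefficient multiplying the not-yet-controlled quantity $\frac1T\sum_t\mathbb{E}[R(w_t)-R^*]$ stays below $1$ under the stated threshold, while the nested conditioning on $S_{[t-1]}$ must be handled cleanly; most of the smooth-loss stability estimates themselves can be imported essentially verbatim from the proof of Theorem~\ref{thrm:fast_rate_smooth}.
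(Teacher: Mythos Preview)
Your proposal is correct and follows essentially the same approach as the paper. The paper packages the per-step descent estimate, the smooth-loss stability bound $\bigl|\mathbb{E}[R(w_t)-R_{S_t}(w_t)\mid\mathcal{F}_{t-1}]\bigr|\le \tfrac{8L}{\gamma n}\mathbb{E}[R(w_t)\mid\mathcal{F}_{t-1}]$, and the absorption step (enabled by $\gamma\ge 16L/n$) into a single lemma (Lemma~\ref{lemma:telescope_smooth}) which it then telescopes and averages; you have simply unpacked that lemma inline, and your use of a fixed anchor $w^\star=\Pi_{W^*}(w_0)$ rather than the paper's moving projection $w^*_{t-1}$ is an inconsequential variation for the constant-$\gamma$, uniform-average case.
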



\begin{proof}
See Appendix~\ref{apdsect:proof_fast_rate_smooth_convex} for a proof of this result.
\end{proof}

\begin{remark}
The first bound of Theorem~\ref{thrm:fast_rate_smooth_convex} implies that for any $\epsilon\in(0,1)$, by setting $\gamma=\mathcal{O}\left(\frac{L}{\epsilon n}\right)$, $R(\bar w_t)$ converges to $(1+ \epsilon)R^*$ at the rate of $\mathcal{O}(\frac{1}{nT\epsilon})$. This bound matches the results of~\citet[Theorem 4]{lei2020fine} for smooth SGD method. The second bound of Theorem~\ref{thrm:fast_rate_smooth_convex} further shows that by setting $\gamma=\mathcal{O}(\sqrt{\frac{T}{n}}+ \frac{L}{n})$, the excess risk of $\bar w_T$ decays at the rate of $\mathcal{O}(\frac{1}{\sqrt{nT}})$ for both bias and variance terms, which matches in order the corresponding bound derived for Lipschitz-loss~\cite[Theorem 4]{wang2017memory}. To our knowledge, such a bias-variance composite rate of convergence is new for SPP-type methods with convex and smooth loss functions.
\end{remark}
Analogous to the robustness analysis of SPP~\citep{asi2019importance,asi2019stochastic}, we have also analyzed the iteration stability of M-SPP for convex losses with respect to the choice of regularization modulus $\gamma_t$. The corresponding results, which can be found in Appendix~\ref{sect:robustness}, confirm that the choice of $\gamma_t$ is insensitive to the gradient scale of loss functions for generating a non-divergent sequence of estimation errors.

\section{Perturbation Analysis for Inexact M-SPP}
\label{sect:analysis_smooth_inexact}

In the preceding section, we have analyzed the convergence rates of M-SPP under the assumption that the inner-loop proximal ERM subproblems constructed in its iteration procedure~\eqref{equat:mspp_iteration} are solved exactly, i.e., $\epsilon_t\equiv 0$. To make our analysis more practical, we further provide in this section a perturbation analysis of M-SPP when the inner-loop proximal ERM subproblems are only required to be solved approximately up to certain precision $\epsilon_t >0$. As a starting point, we need to impose the following Lipschitz continuity assumption on the regularization term $r$.
\begin{assumption}\label{assump:lipschitz}
The regularization term $r$ is Lipschitz continuous over $\mathcal{W}$, i.e., $|r(w) - r(w')|\le G\|w - w'\|$, $\forall w, w'\in \mathcal{W}$.
\end{assumption}
For example, the $\ell_1$-norm regularizer $r(w)=\mu\|w\|_1$ satisfies this assumption with respect to Euclidean norm as $|r(w) - r(w')| = \mu|\|w\|_1 - \|w'\|_1| \le \mu \|w - w'\|_1\le \mu\sqrt{p}\|w - w'\|$.

The following theorem is our main result on the rate of convergence of the inexact M-SPP for composite stochastic convex optimization with smooth losses.
\begin{restatable}{theorem}{FastRateSmoothInexact}\label{thrm:fast_rate_smooth_inexact}
Suppose that Assumptions~\ref{assump:smooth},~\ref{assump:quadratic_growth} and~\ref{assump:lipschitz} hold. Let $\rho \in (0,1/4]$ be an arbitrary scalar and set $\gamma_t = \frac{\lambda \rho t}{4}$. Suppose that $n\ge\frac{76 L}{\lambda\rho}$. {Assume that $\epsilon_t\le \frac{\epsilon}{nt^4}$ for some $\epsilon\in [0,1]$. }Then for any $T\ge 1$, the weighted average output $\bar w_t = \frac{2}{T(T+1)}\sum_{t=1}^T t w_t$ of Algorithm~\ref{alg:mspp} satisfies
\[
\mathbb{E} \left[R(\bar w_t) -  R^*\right] \lesssim \frac{\rho}{T^2}(R(w_0) - R^*) + \frac{L}{\lambda\rho nT} R^* + {\frac{\sqrt{\epsilon}}{T^2}\left(\frac{L}{\lambda\rho} + G\sqrt{\frac{1}{\lambda\rho}}\right)}.
\]
\end{restatable}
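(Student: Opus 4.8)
The plan is to obtain Theorem~\ref{thrm:fast_rate_smooth_inexact} as a perturbation of Theorem~\ref{thrm:fast_rate_smooth}(a): I would re-run the stability-based Lyapunov recursion from the exact proof (Appendix~\ref{apdsect:proof_fast_rate_smooth}) essentially verbatim, except that every appeal to the exact optimality of $w_t$ is replaced by an $\epsilon_t$-approximate version, and I would track the resulting extra per-iteration error all the way through the weighted telescoping. The target is to show that, under the schedule $\epsilon_t\le\epsilon/(nt^4)$, this accumulated error lives at the $\mathcal{O}(1/T^2)$ bias scale and carries exactly the prefactor $\sqrt{\epsilon}\,(L/(\lambda\rho)+G\sqrt{1/(\lambda\rho)})$.

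First, I would convert inexactness into a displacement bound. Writing $\tilde w_t=\argmin_{w\in\mathcal{W}}F_t(w)$ for the exact inner minimizer, the map $F_t=R_{S_t}+\frac{\gamma_t}{2}\|\cdot-w_{t-1}\|^2$ is $\gamma_t$-strongly convex since $R_{S_t}$ is convex (Assumption~\ref{assump:smooth}); hence the defining inequality $F_t(w_t)\le F_t(\tilde w_t)+\epsilon_t$ gives $\|w_t-\tilde w_t\|\le\sqrt{2\epsilon_t/\gamma_t}$ and, for every $w\in\mathcal{W}$, a perturbed optimality inequality of the form $F_t(w)\ge F_t(w_t)+\tfrac{\gamma_t}{4}\|w-w_t\|^2-2\epsilon_t$. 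This is the single structural entry point of inexactness; plugging it in place of the exact optimality inequality into the descent step of the exact proof and instantiating $w$ at the projection of $w_{t-1}$ (or of a population minimizer) onto $W^*$ regenerates the same Lyapunov descent — which, via the quadratic growth Assumption~\ref{assump:quadratic_growth}, is what converts distance-to-$W^*$ control into excess-risk control — plus an additive error per iteration.

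Next, I would dissect that additive error. It splits into a harmless $\mathcal{O}(\epsilon_t)$ piece coming directly from the $-2\epsilon_t$ slack, and cross terms in which the displacement $\|w_t-\tilde w_t\|$ is paired with a (sub)gradient-type quantity of $R_{S_t}$ at $\tilde w_t$. By stationarity of $\tilde w_t$ this quantity is at most $G+\gamma_t\|\tilde w_t-w_{t-1}\|$: the $G$ summand is exactly where Assumption~\ref{assump:lipschitz} becomes indispensable (the non-Lipschitz loss gives no uniform control on $\partial r$), while the loss-driven part is handled as in the exact proof through the self-bounding inequality of \citet[Lemma 2.1]{srebro2010smoothness}, which trades a gradient norm for the square root of a loss value already accounted for by the recursion. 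A Young/Cauchy--Schwarz split then absorbs the $\|w_t-\tilde w_t\|^2$- and $\gamma_t\|\tilde w_t-w_{t-1}\|^2$-type remainders into the strongly convex term of the descent — this is what forces the slightly tighter requirements $\rho\le 1/4$ and $n\ge 76L/(\lambda\rho)$ compared with Theorem~\ref{thrm:fast_rate_smooth}(a) — and leaves a per-iteration error of the shape $\mathcal{O}\bigl(\epsilon_t+\sqrt{\epsilon_t/\gamma_t}\,(G+\sqrt{L}\cdot(\text{tracked value}))\bigr)$, whose exact dependence on $L,\lambda,\rho,G$ is read off from the split.

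Finally, I would close the recursion. Feeding it into the same weighted telescoping with weights $2t/(T(T+1))$ reproduces the exact bound $\tfrac{\rho}{T^2}(R(w_0)-R^*)+\tfrac{L}{\lambda\rho nT}R^*$ plus $\tfrac{2}{T(T+1)}\sum_{t=1}^T t\cdot\mathrm{error}_t$. With $\gamma_t=\lambda\rho t/4$ and $\epsilon_t\le\epsilon/(nt^4)$ the dominant error piece behaves like $\sqrt{\epsilon/(n\lambda\rho)}\,t^{-5/2}$ up to the $L$- and $G$-factors, so $\sum_t t\cdot t^{-5/2}=\sum_t t^{-3/2}<\infty$ and the whole error contribution collapses to $\mathcal{O}\bigl(T^{-2}\sqrt{\epsilon}\,(L/(\lambda\rho)+G\sqrt{1/(\lambda\rho)})\bigr)$; the extra $n^{-1/2}$ only helps and is dropped, and since $\epsilon\le 1$ the $\mathcal{O}(\epsilon_t)$ pieces are subsumed by the $\sqrt{\epsilon}$-scaled term. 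The main obstacle is the third paragraph: because $\ell$ is smooth but not Lipschitz, one cannot bound the inexactness-induced perturbation by any uniform Lipschitz constant of the objective, so the cross term must be re-routed through the self-bounding lemma and re-absorbed into precisely the Lyapunov quantity the exact proof already controls, all while keeping the leftover $\epsilon_t$-summable and not degrading the $\mathcal{O}(1/T^2)$ and $\mathcal{O}(1/(nT))$ rates; isolating where Assumption~\ref{assump:lipschitz} is genuinely needed and making this bookkeeping close is the crux.
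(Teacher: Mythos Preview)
Your proposal is correct and follows essentially the same approach as the paper: both work through the exact inner minimizer $\tilde w_t$, use the displacement bound $\|w_t-\tilde w_t\|\le\sqrt{2\epsilon_t/\gamma_t}$ together with the self-bounding lemma for the smooth loss and the Lipschitz assumption on $r$, and then sum the resulting per-iteration errors through the same weighted telescoping under $\epsilon_t\le\epsilon/(nt^4)$. The paper organizes the perturbation via the decomposition $R(w_t)-R^*=[R(w_t)-R(\tilde w_t)]+[R(\tilde w_t)-R_{S_t}(\tilde w_t)]+[R_{S_t}(\tilde w_t)-R^*]$, applying the stability argument only to the exact minimizer $\tilde w_t$ in the middle term and then shifting $\|w^*-\tilde w_t\|^2$ to $\|w^*-w_t\|^2$ via Young's inequality (this is where the $\gamma_t\epsilon_t/(\rho\lambda)$ error and the tightened $n\ge 76L/(\lambda\rho)$ enter), which is a slightly cleaner bookkeeping than the direct perturbed-optimality inequality you sketch, but the ingredients and conclusion are the same.
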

\begin{proof}
See Appendix~\ref{apdsect:proof_fast_rate_smooth_inexact} for a proof of this result. We would like to highlight that our perturbation analysis for smooth loss is considerably different from that of~\citet{wang2017memory} developed for Lipschitz loss. This is mainly because in the smooth loss case, the change of loss could no longer be upper bounded by the change of prediction, and thus we need to make a more careful treatment to the perturbation caused by inexact minimization of the regularized minibatch empirical risk.
\end{proof}

We provide in order a few remarks on Theorem~\ref{thrm:fast_rate_smooth_inexact}.
\begin{remark}
Theorem~\ref{thrm:fast_rate_smooth_inexact} suggests that the excess risk bound of exact M-SPP in the part (a) of Theorem~\ref{thrm:fast_rate_smooth} can be extended to its inexact version, provided that the inner-loop minibatch ERMs~\eqref{equat:mspp_iteration} are solved to sufficient accuracy, say, $\epsilon_t\le\mathcal{O}\left(\frac{1}{nt^4}\right)$. Similarly, the result in the part (b) of Theorem~\ref{thrm:fast_rate_smooth} for arbitrary minibatch sizes can also be extended to the inexact M-SPP, which is omitted to avoid redundancy. Since the inner-loop minibatch ERMs are strongly convex and the loss functions are smooth, in average the desired accuracy can be attained in logarithmic time $\mathcal{O}\left(\log\left(\frac{1}{\epsilon_t}\right)\right)$ via variance-reduced SGD methods~\citep{xiao2014proximal}.
\end{remark}

\newpage
\begin{remark}
Analogous to the discussions at the end of Section~\ref{ssect:main_results}, by specifying the choice of $\rho$  we can derive a direct consequent result of Theorem~\ref{thrm:fast_rate_smooth_inexact} which more explicitly shows the rate of convergence with respect to $N=nT$. Also for the two-phase M-SPP, in view of Theorem~\ref{thrm:fast_rate_smooth_inexact} we can show that the bound in Theorem~\ref{thrm:fast_rate_smooth_initialization} can be extended to the inexact setting if the minibatch optimization is sufficiently accurate. These extensions are more or less straightforward and thus are omitted.
\end{remark}

In the following theorem, we provide an excess risk bound for the inexact M-SPP when the composite risk $R$ is convex but not necessarily has quadratic growth property.
\begin{restatable}{theorem}{FastRateSmoothConvexInexact}\label{thrm:fast_rate_smooth_convex_inexact}
Suppose that Assumptions~\ref{assump:smooth} and~\ref{assump:lipschitz} hold. Set $\gamma_t \equiv \gamma \ge  \frac{19 L}{n}$. {Assume that $\epsilon_t\le \min\left\{\frac{\epsilon}{n^2t^5}, \frac{2G^2}{9n^2\gamma}\right\}$ for some $\epsilon\in [0,1]$.}
Then the average output $\bar w_T = \frac{1}{T}\sum_{t=1}^T w_t$ of Algorithm~\ref{alg:mspp} satisfies
\[
\mathbb{E} \left[R(\bar w_T) -  R^*\right] \lesssim \frac{\gamma}{T}D^2(w_0, W^*) + \frac{L}{\gamma n} R^* + {\left(\frac{L}{\gamma n } + \frac{\gamma}{LnT} +  \frac{G}{\sqrt{\gamma}nT} \right)\sqrt{\epsilon}}.
\]
Particularly for $\gamma = \sqrt{\frac{T}{n}}+ \frac{19L}{n}$, it holds that
\[
\mathbb{E} \left[R(\bar w_T) -  R^*\right] \lesssim \left(\frac{1}{\sqrt{nT}}+ \frac{L}{nT}\right)D^2(w_0, W^*) + \frac{L}{\sqrt{nT}} R^*+ {\left(\frac{L+G}{\sqrt{nT}} + \frac{1}{nT} \right)\sqrt{\epsilon}}.
\]
\end{restatable}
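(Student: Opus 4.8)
The plan is to run a perturbation analysis on top of the exact bound of Theorem~\ref{thrm:fast_rate_smooth_convex}, tracking how the inner-loop sub-optimality $\epsilon_t$ propagates through the one-step inequality, the stability estimate, and the final telescoping. As a first step I would reduce the inexact iterate $w_t$ to the exact inner minimizer $\tilde w_t := \argmin_{w\in\mathcal{W}} F_t(w)$. Since the loss part of $F_t$ is only convex, the $\gamma$-strong convexity of $F_t$ comes entirely from the proximal term, so $\frac{\gamma}{2}\|w_t-\tilde w_t\|^2\le F_t(w_t)-F_t(\tilde w_t)\le\epsilon_t$, i.e. $\|w_t-\tilde w_t\|\le\sqrt{2\epsilon_t/\gamma}$. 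I then bound the displacement of $R_{S_t}$ between $\tilde w_t$ and $w_t$ in two pieces: for the smooth loss part I use the descent lemma together with the self-bounding inequality $\|\nabla R^\ell_{S_t}(\tilde w_t)\|\le\sqrt{2L\,R^\ell_{S_t}(\tilde w_t)}$ --- a consequence of \citet[Lemma~2.1]{srebro2010smoothness} together with Jensen's inequality --- which yields $|R^\ell_{S_t}(w_t)-R^\ell_{S_t}(\tilde w_t)|\lesssim \sqrt{L\,R^\ell_{S_t}(\tilde w_t)\,\epsilon_t/\gamma}+L\epsilon_t/\gamma$; for the possibly nonsmooth $r$ I invoke Assumption~\ref{assump:lipschitz} to get $|r(w_t)-r(\tilde w_t)|\le G\sqrt{2\epsilon_t/\gamma}$. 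This is precisely where the Lipschitz-on-$r$ assumption enters, and it is absent from the exact analysis.

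Next I would establish the one-step recursion. Writing the variational inequality at the \emph{exact} minimizer $\tilde w_t$ and testing it against a fixed minimizer $w^*\in W^*$, convexity of $R_{S_t}$ gives, exactly as in the exact case,
\[
R_{S_t}(\tilde w_t)-R_{S_t}(w^*)\le\frac{\gamma}{2}\Big(\|w^*-w_{t-1}\|^2-\|w^*-\tilde w_t\|^2-\|\tilde w_t-w_{t-1}\|^2\Big).
\]
I then convert the left-hand side and the quadratic terms to their $w_t$-counterparts using $\|w_t-\tilde w_t\|\le\sqrt{2\epsilon_t/\gamma}$ and the bounds of Step~1, with an AM--GM split to absorb the resulting cross terms into the telescoping quadratics at the cost of constants, arriving at
\[
R_{S_t}(w_t)-R_{S_t}(w^*)\lesssim\frac{\gamma}{2}\|w^*-w_{t-1}\|^2-\frac{\gamma}{2}\|w^*-w_t\|^2-c\gamma\|w_t-w_{t-1}\|^2+\xi_t,
\]
where $\xi_t\lesssim L\epsilon_t/\gamma+G\sqrt{\epsilon_t/\gamma}+\sqrt{L\,R^\ell_{S_t}(\tilde w_t)\,\epsilon_t/\gamma}$.

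The third step is to take expectations over $S$, telescope, and divide by $T$. Here $\mathbb{E}[R_{S_t}(w^*)]=R^*$, while $\mathbb{E}[R_{S_t}(w_t)]$ is traded for $\mathbb{E}[R(w_t)]$ at the cost of the generalization gap $\mathbb{E}[R(w_t)-R_{S_t}(w_t)]$ of the (inexact) minibatch proximal map; this gap is controlled by the uniform-stability estimate already used in the proof of Theorem~\ref{thrm:fast_rate_smooth_convex}, which via the same self-bounding inequality is $\mathcal{O}(L/(\gamma n))$ times an average empirical-loss value plus an $\mathcal{O}(\epsilon_t/(n\gamma))$ inexactness correction, and the negative $-c\gamma\|w_t-w_{t-1}\|^2$ terms kept in the recursion are exactly what absorbs this stability error into the telescoping sum (the standard smoothness--stability balancing). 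Using convexity $R(\bar w_T)-R^*\le\frac1T\sum_t(R(w_t)-R^*)$ and $\|w^*-w_0\|=D(w_0,W^*)$ for $w^*=\Pi_{W^*}(w_0)$ then yields
\[
\mathbb{E}[R(\bar w_T)-R^*]\lesssim\frac{\gamma}{T}D^2(w_0,W^*)+\frac{L}{\gamma n}R^*+\frac{1}{T}\sum_{t=1}^T\mathbb{E}[\xi_t].
\]
To bound the last sum, the $L\epsilon_t/\gamma$ and $G\sqrt{\epsilon_t/\gamma}$ contributions sum --- using $\epsilon_t\le\epsilon/(n^2t^5)$, which makes $\sum_t\sqrt{\epsilon_t}$ and $\sum_t\epsilon_t$ convergent, together with $\epsilon_t\le 2G^2/(9n^2\gamma)$, which keeps all cross terms lower-order --- to $\mathcal{O}\big((\frac{\gamma}{LnT}+\frac{G}{\sqrt{\gamma}nT})\sqrt{\epsilon}\big)$; the self-bounding contribution $\sqrt{L\,R^\ell_{S_t}(\tilde w_t)\,\epsilon_t/\gamma}$ is split by Young's inequality into a term proportional to $\frac{L}{\gamma n}\mathbb{E}[R^\ell_{S_t}(\tilde w_t)]$ and a term proportional to $\epsilon_t$, with $\sum_t\mathbb{E}[R^\ell_{S_t}(\tilde w_t)]=\mathcal{O}(TR^*+\text{l.o.t.})$ by near-optimality of $\tilde w_t$, producing the $\frac{L}{\gamma n}\sqrt{\epsilon}$ term. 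Collecting everything gives the first display; substituting $\gamma=\sqrt{T/n}+19L/n$ (so that $\gamma/T\lesssim 1/\sqrt{nT}+L/(nT)$ and $L/(\gamma n)\lesssim L/\sqrt{nT}$) gives the second.

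The main obstacle, as the authors stress for the inexact smooth analysis, is that the change of the empirical loss between $w_t$ and $\tilde w_t$ cannot be bounded by the iterate change times a fixed Lipschitz constant; it must be controlled through the local gradient bound $\|\nabla R^\ell_{S_t}(\tilde w_t)\|\le\sqrt{2L\,R^\ell_{S_t}(\tilde w_t)}$, which feeds the random empirical loss back into the perturbation term $\xi_t$. Closing this self-referential loop --- so that $\sqrt{\epsilon_t\,R^\ell_{S_t}(\tilde w_t)}$ on the right can be charged against the empirical loss being estimated on the left, while remaining compatible with the uniform-stability bound that already uses the same self-bounding inequality --- is what forces the two-sided threshold on $\epsilon_t$ and is the step that genuinely departs from the Lipschitz-loss perturbation analysis of \citet{wang2017memory}.
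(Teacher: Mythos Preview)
Your outline tracks the paper's argument closely up to the one-step recursion, but there is a genuine gap in how you propose to pass from the exact anchor $\tilde w_t$ to the inexact iterate $w_t$ inside the quadratic terms. When you expand $\|w^*-\tilde w_t\|^2=\|w^*-w_t\|^2+2\langle w^*-w_t,\,w_t-\tilde w_t\rangle+\|w_t-\tilde w_t\|^2$, the cross term is bounded by $2\|w^*-w_t\|\sqrt{2\epsilon_t/\gamma}$. Your plan is to ``absorb the resulting cross terms into the telescoping quadratics at the cost of constants'' via AM--GM; but any split $2\|w^*-w_t\|\sqrt{2\epsilon_t/\gamma}\le\eta\|w^*-w_t\|^2+\tfrac{2\epsilon_t}{\eta\gamma}$ with a fixed $\eta>0$ replaces $-\tfrac{\gamma}{2}\|w^*-w_t\|^2$ by $-\tfrac{\gamma(1-\eta)}{2}\|w^*-w_t\|^2$, and then $\sum_t\big[\tfrac{\gamma}{2}\|w^*-w_{t-1}\|^2-\tfrac{\gamma(1-\eta)}{2}\|w^*-w_t\|^2\big]$ no longer telescopes: you pick up an extra $\tfrac{\eta\gamma}{2}\sum_{t}\|w^*-w_t\|^2$ which is not controlled without quadratic growth. (This trick \emph{does} work in Theorem~\ref{thrm:fast_rate_smooth_inexact}, where the $\tfrac{\rho\lambda}{2}D^2(w_t,W^*)$ coming from quadratic growth compensates; that is precisely why the paper's Lemma~\ref{lemma:telescope_smooth_inexact} has two different versions of the ``term~$C$'' bound.)

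The paper's fix is to \emph{not} square the cross term: it keeps $2\sqrt{2\gamma\epsilon_t}\,\mathbb{E}\|w_t-w^*\|$ on the right-hand side and then establishes a uniform-in-$t$ bound $\mathbb{E}\|w_t-w^*_0\|\le\|w_0-w^*_0\|+\sqrt{tR^*/\gamma}+6tG/\gamma$ (Lemma~\ref{lemma:key_estimation_error_bound}). That bound itself is obtained by feeding the telescoped inequality back into a recursion of the form $u_t^2\le S_t+\sum_{\tau\le t}\alpha_\tau u_\tau$ and invoking the Schmidt--Le Roux--Bach sequence lemma $u_t\le\sqrt{S_t}+\sum_\tau\alpha_\tau$ (Lemma~\ref{lemma:key_sequence_bound}). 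The $t^5$ in the hypothesis $\epsilon_t\le\epsilon/(n^2t^5)$ is calibrated exactly so that, after multiplying by the linear-in-$t$ growth of $\mathbb{E}\|w_t-w^*_0\|$, the resulting series still sums. Your proposal omits this recursive step entirely, and without it the argument does not close in the merely-convex regime.
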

\begin{proof}
See Appendix~\ref{apdsect:proof_fast_rate_smooth_convex_inexact} for a proof of this result.
\end{proof}
\begin{remark}
Theorem~\ref{thrm:fast_rate_smooth_convex_inexact} confirms that the excess risk bounds established in Theorem~\ref{thrm:fast_rate_smooth_convex} for exact M-SPP are tolerant to sufficiently small sub-optimality $\epsilon_t\le\mathcal{O}(\frac{1}{n^2t^5})$ of minibatch proximal ERM subproblems.
\end{remark}

\section{Performance Guarantees with High Probability}
\label{sect:high_probability}

In the previous two sections, we have analyzed the excess risk bounds of M-SPP in expectation. In this section, we move on to study high-probability guarantees of M-SPP with respect to the randomness of training data, still under the notion of algorithmic stability. To this end, we first introduce a variant of M-SPP which carries out the proximal point update via sampling without replacement over the given data minibatches. We then show that the output of the proposed algorithm is uniformly stable in expectation over the randomness of sampling. As a main result of this section, for strongly convex population risk, we establish a near-optimal high probability (with respect to data) bound on the estimation error $\|\bar w_t - w^*\|$ that holds in expectation over the randomness of inner-data sampling. Additionally, we provide a high-probability generalization bound for arbitrary convex loss.

\newpage

\subsection{Sampling Without Replacement M-SPP}


\begin{algorithm}[h]
\caption{\texttt{Sampling Without Replacement M-SPP (M-SPP-SWoR)}}
\label{alg:mspp_swr}
\SetKwInOut{Input}{Input}\SetKwInOut{Output}{Output}\SetKw{Initialization}{Initialization}
\Input{Dataset $S = \{S_t\}_{t=1}^T$ in which $S_t:=\{z_{i,t}\}_{i=1}^n \overset{\text{i.i.d.}}{\sim} \mathcal{D}^n$, regularization modulus $\{\gamma_t>0\}_{t\in [T]}$.}
\Output{$\bar w_T$ as a weighted average of $\{w_t\}_{1\le t\le T}$..}
\Initialization{Specify a value of $w_0$. Typically $w_0=0$.}

\For{$t=1, 2, ..., T$}{

Uniformly randomly sample an index $\xi_t\in [T]$ without replacement.

Estimate $w_t$ satisfying
\begin{equation}\label{equat:mspp_swr_iteration}
F_t(w_t) \le \min_{w\in \mathcal{W}} \left\{F_t(w):=R_{S_{\xi_t}}(w) + \frac{\gamma_t}{2} \|w-w_{t-1}\|^2\right\} + \epsilon_t,
\end{equation}
where $\epsilon_t\ge0$ measures the sub-optimality.
}
\end{algorithm}


Let us consider the M-SPP-SWoR (M-SPP via Sampling Without Replacement) procedure as outlined in Algorithm~\ref{alg:mspp_swr}. Given a set $S$ of $T$ data minibatches, at each iteration, the algorithm uniformly randomly samples one minibatch from $S$ without replacement for proximal update. After $T$ rounds of iteration, all the minibatches are used to update the model. Since this procedure is merely a random shuffling variant of M-SPP as presented in Algorithm~\ref{alg:mspp}, we can see that all the in-expectation bounds established in the previous sections for M-SPP directly transfer to M-SPP-SWoR under any implementation of shuffling. As we will show shortly in the next subsection that such a random shuffling scheme is beneficial for boosting the on-average algorithmic stability of M-SPP which then leads to strong high-probability guarantees for M-SPP-SWoR.

\subsection{A Uniform Stability Analysis}

Let $S = \{S_t\}_{t\in [T]}$ and $S'=\{S'_t\}_{t\in [T]}$ be two sets of data minibatches. We denote by $S_t \doteq S'_t$ if $S_t$ and $S'_t$ differ in a single data point, and by $S \doteq S'$ if $S$ and $S'$ differ in a single minibatch and a single data point in that minibatch. We introduce the following concept of uniform stability of M-SPP which substantializes the concept of uniform algorithmic stability that serves as a powerful tool for analyzing generalization bounds of statistical estimators and their learning algorithms~\citep{bousquet2002stability,hardt2016train,feldman2019high}.
\begin{definition}[Uniform Stability of M-SPP]\label{def:uniform_stability}
The M-SPP algorithm is said to be $\varrho$-uniformly stable with respect to a mapping $h: \mathcal{W} \mapsto \mathbb{R}^q$ if $\left\|h(\bar w_T) - h(\bar w'_T)\right\| \le \varrho$ for any pair of data sets $S \doteq S'$.
\end{definition}
The following result gives a uniform stability (with respect to identical mapping) bound of the vanilla M-SPP (Algorithm~\ref{alg:mspp}) that holds deterministically, and a corresponding bound for M-SPP-SWoR (Algorithm~\ref{alg:mspp_swr}) that holds in expectation over the randomness of minibatch sampling.

\begin{restatable}{proposition}{UniformStabilityWithoutReplacement}\label{prop:uniform_stability_withoutreplacement}
Suppose that Assumption~\ref{assump:smooth} holds and the loss function is bounded such that $0\le \ell(y',y) \le M$ for all $y,y'$. Let $S=\{S_t\}_{t\in [T]}$ and $S'=\{S'_t\}_{t\in [T]}$ be two sets of data minibatches satisfying $S \doteq S'$. Then

\begin{itemize}
  \item[(a)] The weighted average output $\bar w_T$ and $\bar w'_T$ respectively generated by M-SPP (Algorithm~\ref{alg:mspp}) over $S$ and $S'$ satisfy
\[
\sup_{S, S'} \|\bar w_T - \bar w'_T\| \le \frac{4\sqrt{2LM}}{n\min_{t\in [T]}\gamma_t} +\sum_{t=1}^T 2\sqrt{\frac{2\epsilon_t}{\gamma_t}}.
\]
  \item[(b)] The weighted average output $\bar w_T$ and $\bar w'_T$ respectively generated by M-SPP-SWoR (Algorithm~\ref{alg:mspp_swr}) over $S$ and $S'$ satisfy
\[
\sup_{S, S'} \mathbb{E}_{\xi_{[T]}}\left[\|\bar w_T - \bar w'_T\|\right] \le \sum_{t=1}^T \left\{ \frac{4\sqrt{2LM}}{nT\gamma_t} + 2\sqrt{\frac{2\epsilon_t}{\gamma_t}} \right\}.
\]
\end{itemize}
\end{restatable}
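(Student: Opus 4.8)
The plan is to track, iteration by iteration, how much the M-SPP trajectory can move when a single data point is perturbed, exploiting the strong convexity of the inner subproblem $F_t$ (its modulus is at least $\gamma_t$ since $R_{S_t}$ is convex) together with the fact that a smooth nonnegative loss has gradient controlled by its value. First I would set up notation: let $\{w_t\}$ and $\{w'_t\}$ be the two trajectories run on $S\doteq S'$ with the same randomness, and let $\delta_t:=\|w_t-w'_t\|$. Let $t_0$ be the unique index where $S_{t_0}$ and $S'_{t_0}$ differ in one point. For $t\neq t_0$ the two inner objectives $F_t,F_t'$ have the same data part but differ only through the prox-center $w_{t-1}$ versus $w'_{t-1}$; a standard nonexpansiveness-of-the-prox-map argument (using $\gamma_t$-strong convexity, plus an $O(\sqrt{\epsilon_t/\gamma_t})$ slack for the inexact minimizers) gives $\delta_t \le \delta_{t-1} + 2\sqrt{2\epsilon_t/\gamma_t}$. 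For the single perturbed step $t=t_0$, I would compare $w_{t_0}$ against the minimizer of $F_{t_0}$ and then against the minimizer of $F'_{t_0}$: the difference between these two minimizers is bounded, via $\gamma_{t_0}$-strong convexity, by $\tfrac{1}{\gamma_{t_0}}$ times the difference of the gradients of the empirical losses, which is $\tfrac{1}{n\gamma_{t_0}}\|\nabla\ell(\cdot;z)-\nabla\ell(\cdot;z')\|$ evaluated at the minimizer. Here I invoke \citet[Lemma 2.1]{srebro2010smoothness}: for a nonnegative $L$-smooth loss, $\|\nabla\ell(w;z)\|\le\sqrt{4L\,\ell(w;z)}\le\sqrt{4LM}$, so each of the two gradients is at most $2\sqrt{LM}$ and their difference is at most $4\sqrt{LM}$; combined with the $2\sqrt{2\epsilon_{t_0}/\gamma_{t_0}}$ inexactness slack this yields $\delta_{t_0}\le\delta_{t_0-1}+\tfrac{4\sqrt{LM}}{n\gamma_{t_0}}\cdot\sqrt2 + 2\sqrt{2\epsilon_{t_0}/\gamma_{t_0}}$ (up to the explicit constants; I would carry the $\sqrt2$ so the stated $4\sqrt{2LM}$ appears).

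Unrolling the recursion from $\delta_0=0$, the one-time perturbation term $\tfrac{4\sqrt{2LM}}{n\gamma_{t_0}}$ enters once and then simply propagates unchanged through the remaining nonexpansive steps, so it survives to the end as $\tfrac{4\sqrt{2LM}}{n\gamma_{t_0}}\le \tfrac{4\sqrt{2LM}}{n\min_{t\in[T]}\gamma_t}$, while the inexactness slacks accumulate additively as $\sum_{t=1}^T 2\sqrt{2\epsilon_t/\gamma_t}$. Finally, $\bar w_T$ is a convex combination of the $w_t$, and $\bar w_T'$ the same convex combination of $w_t'$, so $\|\bar w_T-\bar w_T'\|\le\max_t\delta_t\le\delta_T$, which gives part (a).

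For part (b), the randomness $\xi_{[T]}$ is a uniformly random permutation deciding in which of the $T$ rounds the perturbed minibatch $S_{\xi_{t_0}}$ is processed. Conditioning on that round being the $k$-th one, the same unrolling argument applies, giving a bound whose large perturbation term is $\tfrac{4\sqrt{2LM}}{n\gamma_k}$ (it enters at round $k$ and propagates). Taking expectation over the uniformly random position $k\in[T]$ replaces this by the average $\tfrac{1}{T}\sum_{t=1}^T\tfrac{4\sqrt{2LM}}{n\gamma_t}=\sum_{t=1}^T\tfrac{4\sqrt{2LM}}{nT\gamma_t}$, while the inexactness terms are unaffected in form, yielding the claimed $\sum_{t=1}^T\{\tfrac{4\sqrt{2LM}}{nT\gamma_t}+2\sqrt{2\epsilon_t/\gamma_t}\}$. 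The main obstacle, and the only step requiring genuine care, is the perturbed-step bound: unlike the Lipschitz-loss analysis of \citet{wang2017memory} where the gradient gap is bounded by the Lipschitz constant directly, here I must route through the value-based gradient bound of \citet{srebro2010smoothness}, which needs the boundedness hypothesis $\ell\le M$ and a clean comparison of the two inexact minimizers against the two exact ones; getting the $\sqrt{\epsilon_t/\gamma_t}$-type slack right (it comes from the inequality $\tfrac{\gamma_t}{2}\|w_t-\arg\min F_t\|^2\le F_t(w_t)-\min F_t\le\epsilon_t$) is where the constants must be tracked honestly.
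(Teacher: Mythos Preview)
Your proposal is correct and follows essentially the same architecture as the paper's proof: an iteration-wise recursion (the paper's Lemma~\ref{lemma:stability_key}) separating the unperturbed nonexpansive steps from the single perturbed step, the $\sqrt{2\epsilon_t/\gamma_t}$ slack from $\gamma_t$-strong convexity (the paper's Lemma~\ref{lemma:function_strong_convexity_inexact}), and for part (b) the averaging over the uniformly random position of the perturbed minibatch.

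The only stylistic difference is in the perturbed-step bound. You bound $\|\arg\min F_{t_0}-\arg\min F'_{t_0}\|$ via a gradient-perturbation argument (strong convexity $\Rightarrow$ minimizer shift $\le\gamma_{t_0}^{-1}\times$ gradient shift), whereas the paper instead sums the two inequalities of Lemma~\ref{lemma:function_strong_convexity} applied to $F_t$ and $F'_t$, obtaining
\[
\|w^*_t-w'^*_t\|^2 \le \|w_{t-1}-w'_{t-1}\|^2 + \tfrac{2}{\gamma_t}\bigl[R_{S_t}(w'^*_t)-R_{S_t}(w^*_t)+R_{S'_t}(w^*_t)-R_{S'_t}(w'^*_t)\bigr],
\]
and then bounds the bracket by $\tfrac{2}{n}\sqrt{2LM}\,\|w^*_t-w'^*_t\|$ using the $\sqrt{2LM}$-Lipschitz property implied by Lemma~\ref{lemma:key_smooth}. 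This function-value route has the advantage of handling the prox-center shift and the data shift in one shot and never touching $\partial r$, whereas your gradient route implicitly needs a two-step triangle-inequality split (first change the data with center fixed, then change the center via nonexpansiveness) to make the $\delta_{t_0-1}$ term appear cleanly; your write-up elides this split slightly, but it is easily filled in.
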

\begin{proof}
See Appendix~\ref{apdsect:proof_uniform_stability_withoutreplacement} for a proof.
\end{proof}
\begin{remark}
Suppose that the sub-optimality $\{\epsilon_t\}_{t\in [T]}$ are sufficiently small. If setting $\gamma_t=\mathcal{O}(t)$ as used for population risks with quadratic growth property, then Proposition~\ref{prop:uniform_stability_withoutreplacement} shows that M-SPP is $\mathcal{O}\big(\frac{1}{n}\big)$-uniformly stable, while in expectation over the randomness of without-replacement sampling, M-SPP-SWoR has an much improved uniform stability parameter scaling as $\mathcal{O}\big(\frac{\log(T)}{nT}\big)$. If setting $\gamma_t\equiv \sqrt{\frac{T}{n}}$ as used for generic convex loss, then M-SPP will be $\mathcal{O}\big(\frac{1}{\sqrt{nT}}\big)$-uniformly stable while M-SPP-SWoR has an identical uniform stability parameter in expectation over sampling.
\end{remark}

In the following theorem, based on the uniform stability bounds in Proposition~\ref{prop:uniform_stability_withoutreplacement}, we derive an upper bound on the estimation error $D(\bar w_T, W^*)$ of M-SPP-SWoR that holds with high probability over data distribution while in expectation over randomly sampling the minibatches for update.
\begin{restatable}{theorem}{FastRateSmoothHighProbability}\label{thrm:fast_rate_smooth_high_probability}
Suppose that Assumptions~\ref{assump:smooth},~\ref{assump:quadratic_growth},~\ref{assump:lipschitz} hold and the loss function $\ell$ is bounded in the interval $(0, M]$. Let $\rho \in (0,1/4]$ be an arbitrary scalar and set $\gamma_t = \frac{\lambda \rho t}{4}$. Suppose that $n\ge\frac{76 L}{\lambda\rho}$. Assume that $\epsilon_t\le \min\left\{ \frac{\epsilon}{nt^4}, \frac{LM}{\lambda \rho n^2T^2 t}\right\}$ for some $\epsilon\in [0,1]$. Then with probability at least $1-\delta$ over $S$, the weighted average output $\bar w_T$ of M-SPP-SWoR (Algorithm~\ref{alg:mspp_swr}) satisfies
\[
\begin{aligned}
&\mathbb{E}_{\xi_{[T]}}\left[D(\bar w_T,  W^*) \right] \\
\lesssim & \frac{\sqrt{LM\log(1/\delta)}\log(T)}{\lambda \rho \sqrt{nT}} + \sqrt{\frac{\rho\left[R(w_0) - R^*\right]}{\lambda T^2} + \frac{L}{\lambda^2 \rho nT} R^* + \frac{\sqrt{\epsilon}}{\lambda T^2}\left(\frac{L}{\lambda\rho} + G\sqrt{\frac{1}{\lambda\rho}}\right)}.
\end{aligned}
\]
\end{restatable}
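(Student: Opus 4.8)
The plan is to decompose $\mathbb{E}_{\xi_{[T]}}[D(\bar w_T,W^*)]$ into a mean part, controlled by the in-expectation analysis of Section~\ref{sect:analysis_smooth_inexact}, and a deviation-over-$S$ part, controlled by a bounded-difference concentration inequality whose increments come from the uniform stability bound of Proposition~\ref{prop:uniform_stability_withoutreplacement}(b). Writing $g(S):=\mathbb{E}_{\xi_{[T]}}[D(\bar w_T(S,\xi_{[T]}),W^*)]$ and viewing $S=\{z_{i,t}\}_{i\in[n],t\in[T]}$ as $N=nT$ i.i.d.\ samples from $\mathcal D$, I would split $g(S)=\bigl(g(S)-\mathbb E_S g(S)\bigr)+\mathbb E_S g(S)$ and treat the two terms in turn.

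For the mean term, Assumption~\ref{assump:quadratic_growth} gives $D^2(\bar w_T,W^*)\le \tfrac2\lambda\bigl(R(\bar w_T)-R^*\bigr)$ pointwise, so Jensen's inequality for the concave map $x\mapsto\sqrt x$ yields $\mathbb E_S g(S)=\mathbb E_{S,\xi_{[T]}}[D(\bar w_T,W^*)]\le\sqrt{\tfrac2\lambda\,\mathbb E_{S,\xi_{[T]}}[R(\bar w_T)-R^*]}$. Because M-SPP-SWoR run with a fixed permutation is literally M-SPP applied to a relabeling of $S$ (whose minibatches are still i.i.d.\ $\mathcal D^n$), the in-expectation excess-risk bound of Theorem~\ref{thrm:fast_rate_smooth_inexact} transfers verbatim; its hypotheses ($\rho\in(0,1/4]$, $\gamma_t=\lambda\rho t/4$, $n\ge 76L/(\lambda\rho)$ and $\epsilon_t\le\epsilon/(nt^4)$) are all implied by those of the present theorem. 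Substituting that bound under the square root produces precisely the $\sqrt{\rho[R(w_0)-R^*]/(\lambda T^2)+LR^*/(\lambda^2\rho nT)+\cdots}$ summand.

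For the deviation term, the point is that $g$ has bounded differences with respect to each of the $N$ samples. If $S\doteq S'$ (they differ in a single minibatch and a single point in it), then coupling the two executions with the same permutation $\xi_{[T]}$ and using the $1$-Lipschitzness of $u\mapsto D(u,W^*)$ gives $|g(S)-g(S')|\le\mathbb E_{\xi_{[T]}}\|\bar w_T-\bar w'_T\|\le\varrho$, where $\varrho:=\sum_{t=1}^T\bigl(\tfrac{4\sqrt{2LM}}{nT\gamma_t}+2\sqrt{2\epsilon_t/\gamma_t}\bigr)$ is exactly the stability parameter from Proposition~\ref{prop:uniform_stability_withoutreplacement}(b). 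With $\gamma_t=\lambda\rho t/4$ and the assumed $\epsilon_t\le LM/(\lambda\rho n^2T^2t)$, a harmonic-sum estimate gives $\varrho\lesssim\sqrt{LM}\,\log(T)/(\lambda\rho nT)$, and McDiarmid's inequality then yields, with probability at least $1-\delta$ over $S$, $g(S)-\mathbb E_S g(S)\le\varrho\sqrt{N\log(1/\delta)/2}\lesssim\sqrt{LM\log(1/\delta)}\,\log(T)/(\lambda\rho\sqrt{nT})$, the first term of the claimed bound. Adding the two pieces and absorbing absolute constants into $\lesssim$ finishes the argument.

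The delicate point is the interplay of the two randomness sources: Proposition~\ref{prop:uniform_stability_withoutreplacement}(b) gives stability only \emph{in expectation over the sampling $\xi_{[T]}$}, so the concentration step must be run on the $\xi$-averaged functional $g(S)$ rather than on $D(\bar w_T,W^*)$ itself — but since the $\sup_{S\doteq S'}$ there makes the increment $\varrho$ deterministic, plain McDiarmid suffices and no high-probability notion of stability is needed. A related subtlety is that the two halves of the hypothesis $\epsilon_t\le\min\{\epsilon/(nt^4),\,LM/(\lambda\rho n^2T^2t)\}$ serve different purposes: the first is exactly what Theorem~\ref{thrm:fast_rate_smooth_inexact} requires for the mean term, while the second is calibrated so that the inexactness contribution $\sum_t 2\sqrt{2\epsilon_t/\gamma_t}$ to $\varrho$ stays of the same order as its exact part.
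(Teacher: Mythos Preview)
Your proposal is correct and follows essentially the same approach as the paper: define the $\xi$-averaged functional $g(S)=\mathbb{E}_{\xi_{[T]}}[D(\bar w_T,W^*)]$, bound its mean via Jensen, quadratic growth, and Theorem~\ref{thrm:fast_rate_smooth_inexact}, and bound its deviation via McDiarmid with increments supplied by Proposition~\ref{prop:uniform_stability_withoutreplacement}(b). Your explicit remarks on why the concentration must be run on the $\xi$-averaged functional and on the dual role of the two branches of the $\epsilon_t$ assumption are accurate and, if anything, clearer than the paper's presentation.
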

\begin{proof}
See Appendix~\ref{apdsect:proof_fast_rate_smooth_high_probability} for a proof of this result.
\end{proof}

\newpage

\begin{remark}
We comment on the optimality of the bound in Theorem~\ref{thrm:fast_rate_smooth_high_probability}. Consider $\rho=\mathcal{O}(1)$. The first term of scale $\mathcal{O}\big(\frac{\sqrt{\log(1/\delta)}\log(T)}{\sqrt{nT}}\big)$ represents the overhead of getting generalization with high probability over data. The second term matches the corresponding in-expectation estimation error bound in Theorem~\ref{thrm:fast_rate_smooth_inexact}, which matches the known optimal rates for strongly convex SGD~\citep{rakhlin2012making,dieuleveut2017harder}. In view of the minimax lower bounds for statistical estimation~\citep{tsybakov2008introduction}, the estimation error bound established in Theorem~\ref{thrm:fast_rate_smooth_high_probability} is near-optimal for strongly convex risk minimization.
\end{remark}

Finally, we provide a high-probability generalization bound of M-SPP for arbitrary convex population risk functions.

\begin{restatable}{theorem}{FastRateSmoothConvexHighProbability}\label{thrm:fast_rate_smooth_convex_high_probability}
Suppose that Assumptions~\ref{assump:smooth} and~\ref{assump:lipschitz} hold and the loss function $\ell$ is bounded in the interval $[0, M]$. Set $\gamma_t \equiv \sqrt{\frac{T}{n}}$. Assume that $\epsilon_t\le \frac{LM}{4nT^2\sqrt{nT}}$. Then with probability at least $1-\delta$ over $S$, the average output $\bar w_T = \frac{1}{T}\sum_{t=1}^T w_t$ of M-SPP (Algorithm~\ref{alg:mspp}) satisfies
\[
\left|R(\bar w_T) - R_S(\bar w_T)\right| \lesssim \frac{(LM + G \sqrt{LM})\log(N)\log(1/\delta)}{\sqrt{nT}} + M\sqrt{\frac{\log\left(1/\delta\right)}{nT}}.
\]
\end{restatable}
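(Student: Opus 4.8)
The plan is to derive this bound purely from algorithmic stability: combine the deterministic uniform-stability estimate already established in Proposition~\ref{prop:uniform_stability_withoutreplacement}(a) with an off-the-shelf exponential concentration result for uniformly stable algorithms. Two preliminary observations simplify matters. First, since the regularizer $r$ is data-independent, the generalization gap $R(\bar w_T)-R_S(\bar w_T)=R^\ell(\bar w_T)-R^\ell_S(\bar w_T)$ involves only the loss part of the objective, whose per-sample contributions lie in $[0,M]$; this is what controls the sub-Gaussian concentration term. Second, the total number of i.i.d.\ data points visited is $N=nT$, which is the effective sample size for concentration.

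\textbf{Step 1 (parameter stability).} I would instantiate Proposition~\ref{prop:uniform_stability_withoutreplacement}(a) with the prescribed $\gamma_t\equiv\sqrt{T/n}$. Its first term becomes $\tfrac{4\sqrt{2LM}}{n\sqrt{T/n}}=\tfrac{4\sqrt{2LM}}{\sqrt{nT}}$, and under the standing hypothesis $\epsilon_t\le\tfrac{LM}{4nT^2\sqrt{nT}}$ a short computation gives $\tfrac{2\epsilon_t}{\gamma_t}\le\tfrac{LM}{2nT^3}$, so $\sum_{t=1}^T 2\sqrt{2\epsilon_t/\gamma_t}\le\sqrt{2LM/(nT)}$. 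Hence M-SPP (Algorithm~\ref{alg:mspp}) is $\varrho$-uniformly stable in the parameter, in the sense of Definition~\ref{def:uniform_stability} with the identity map, with $\varrho=\mathcal{O}\!\big(\sqrt{LM/(nT)}\big)$, and this holds for every pair $S\doteq S'$ and every admissible inexact iterate sequence.

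\textbf{Step 2 (loss stability) and Step 3 (high-probability conversion).} I would turn parameter stability into stability of the per-sample objective. By the self-bounding property of nonnegative $L$-smooth functions \citep[Lemma 2.1]{srebro2010smoothness}, $\|\nabla\ell(w;z)\|\le\sqrt{4L\ell(w;z)}\le\sqrt{4LM}$ uniformly on $\mathcal{W}\times\mathcal{Z}$, so $|\ell(\bar w_T;z)-\ell(\bar w'_T;z)|\le\sqrt{4LM}\,\varrho$ for all $z$; accounting for the regularizer through Assumption~\ref{assump:lipschitz} contributes an additional $G\varrho$. Thus the algorithm is $\beta$-uniformly stable in the loss sense with $\beta=\mathcal{O}\!\big((LM+G\sqrt{LM})/\sqrt{nT}\big)$. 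Plugging this $\beta$, the loss range $M$, and the sample size $N=nT$ into a high-probability generalization theorem for uniformly stable deterministic algorithms --- e.g.\ the moment/exponential inequalities of \citet{feldman2019high,bousquet2020sharper,klochkov2021stability}, which yield $|R^\ell(\bar w_T)-R^\ell_S(\bar w_T)|\lesssim\beta\log(N)\log(1/\delta)+M\sqrt{\log(1/\delta)/N}$ with probability at least $1-\delta$ --- produces exactly the stated bound.

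The main obstacle is Step 2 and the legitimacy of invoking Step 3 in this setting. The classical stability-to-generalization pipeline is tailored to globally Lipschitz losses, so one must replace a uniform gradient bound by the value-dependent bound $\sqrt{4L\ell(\cdot;z)}$ and then rely on the standing boundedness $\ell\le M$ to keep it controlled along the segment joining $\bar w_T$ and $\bar w'_T$; this is precisely where the choices of $\gamma_t\equiv\sqrt{T/n}$ and the smallness of $\epsilon_t$ matter, ensuring the perturbed iterate stays in the region where the value-dependent estimate is meaningful and that the inexactness contribution to $\varrho$ does not dominate. A secondary care point is the composite, regularized structure: the high-probability stability theorems are usually stated for non-composite losses, so one checks that the data-independent convex $r$ either cancels cleanly in $R-R_S$ (keeping the concentration-term range at $M$) or is absorbed into the per-sample objective at the cost of the extra $G\varrho$ in $\beta$ via Assumption~\ref{assump:lipschitz}.
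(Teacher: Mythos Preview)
Your proposal is correct and follows essentially the same route as the paper: invoke Proposition~\ref{prop:uniform_stability_withoutreplacement}(a) with $\gamma_t\equiv\sqrt{T/n}$ and the assumed bound on $\epsilon_t$ to obtain parameter stability $\varrho=\mathcal{O}(\sqrt{LM/(nT)})$, convert to loss stability via Lemma~\ref{lemma:key_smooth} (and Assumption~\ref{assump:lipschitz} for $r$), then apply the sharp generalization bound of \citet{bousquet2020sharper}. Your caveat in the final paragraph is overcautious: since $\ell\le M$ holds globally on $\mathcal{W}\times\mathcal{Z}$, the self-bounding estimate $\|\nabla\ell(w;z)\|\le\sqrt{2LM}$ is valid everywhere, so no ``stay in a region'' argument is needed.
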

\begin{proof}
See Appendix~\ref{apdsect:proof_fast_rate_smooth_convex_high_probability} for a proof of this result.
\end{proof}
We remark in passing that using similar uniform stability argument, the high-probability generalization bound in Theorem~\ref{thrm:fast_rate_smooth_convex_high_probability} can be shown to hold for convex and non-smooth loss functions as well. We omit the detailed analysis as it is out of the scope of this paper focusing on smooth losses.

\section{Comparison with Prior Methods}
\label{sect:related_work}

\emph{Comparison with M-SPP and SPP methods.} The M-SPP algorithm considered in this article is a minibatch extension of the SPP methods. The convergence analysis of SPP has received recent wide attention in stochastic optimization community. Specially for finite-sum optimization over $N$ data points, an incremental SPP method was proposed and analyzed in~\citep{bertsekas2011incremental}. For learning with linear prediction models and strongly convex Lipschitz-loss, \citep{toulis2016towards} established a set of $\mathcal{O}(\frac{1}{N^{\gamma}})$ rates of convergence for SPP with suitable $\gamma\in(0.5, 1]$, where $N$ is the iteration counter. For arbitrary convex loss functions, the non-asymptotic convergence performance of SPP was studied with $\mathcal{O}(\frac{1}{\sqrt{N}})$ rate obtained for Lipschitz losses~\citep{patrascu2017nonasymptotic,davis2019stochastic}, $\mathcal{O}(\frac{1}{N})$ for strongly convex and Lipschitz~\citep{davis2019stochastic} or smooth~\citep{patrascu2017nonasymptotic} losses, or $\mathcal{O}\left(\frac{\log(N)}{N}\right)$ rate for strongly convex non-smooth losses~\citep{asi2019stochastic}. Recently, it has been shown that the $\mathcal{O}\left(\frac{\log(N)}{N}\right)$ rate also extends to M-SPP with strongly convex losses~\citep{asi2020minibatch}. The asymptotic and non-asymptotic behaviors of SPP for weakly convex losses (e.g., composite of convex loss with smooth map) have been studied for stochastic optimization with~\citep{duchi2018stochastic} or without~\citep{davis2019stochastic} composite structures. Among others, our work is most closely related to the minibatch proximal update method developed for communication-efficient distributed optimization~\citep{wang2017memory}. Similarly from the viewpoint of algorithmic stability, the $\mathcal{O}(\frac{1}{N^{\gamma}})$ rates were established for that method for Lipschitz-loss with arbitrary convexity ($\gamma=0.5$) or strong convexity ($\gamma=1$). In comparison to these prior results, our convergence results for M-SPP are new in the following aspects:
\begin{itemize}
  \item The convergence rates are derived for smooth losses and they explicitly show the impact of noise level of a statistical model, as encoded in $R^*$, to convergence performance which has not been previously known for SPP-type methods.
  \item The $\mathcal{O}(N^{-1})$ fast rate attained in this article is valid for population risks with quadratic growth property, without requiring each instantaneous loss to be strongly convex.
  \item We provide a near-optimal model estimation error bound of a sampling-without-replacement variant of M-SPP that holds with high probability over the randomness of data while in expectation over the randomness of sampling.
\end{itemize}

\emph{Comparison with SGD and ERM.} Similar to those in Theorem~\ref{thrm:fast_rate_smooth} and Theorem~\ref{thrm:fast_rate_smooth_convex}, the bias-variance composite rates have been known for accelerated SGD for least squares regression~\citep{dieuleveut2017harder}, {or minibatch SGD (M-SGD) for generic convex and smooth learning problems~\citep{woodworth2021even}. While the results are of similar flavor, we came to the path in a distinct algorithmic framework using quite different proof techniques. Particularly, in contrast to~\citet{woodworth2021even}, our analysis neither uses the knowledge of model scale which is typically inaccessible in real problems, nor relies on the restarting arguments for strongly convex problems.} Also for SGD with smooth loss functions, a fast rate of $\mathcal{O}(\frac{1}{N})$ has recently been established via stability theory in the ideally clean case where the optimal population risk is zero~\citep[Theorem 4]{lei2020fine}. With $\gamma=\mathcal{O}(\frac{1}{n})$, the first bound of our Theorem~\ref{thrm:fast_rate_smooth_convex} matches that bound in the context of M-SPP. For strongly convex problems, our results in Theorem~\ref{thrm:fast_rate_smooth} are stronger than~\citep[Theorem 12]{lei2020fine} in the sense that the formers (ours) only require the population risk to have quadratic growth property while the latter requires the loss to be Lipschitz and the empirical risk to be strongly convex. Finally, for convex ERM, similar composite risk bounds have been established by~\citet{srebro2010smoothness,zhang2017empirical} under somewhat more stringent conditions such as bounded domain of interest and huge sample with $N\gg p$.

\newpage

Table~\ref{tab:result_comparison} summaries a comparison of the risk bounds obtained in this work to several prior ones for (M-)SPP, (M-)SGD and ERM.\vspace{0.2in}

\begin{table}[h]
\small
\centering
\begin{tabular}{|c|c|c|c|c|c|}
\hline
\multirow{2}{*}{Method} & \multirow{2}{*}{Literature} &  \multirow{2}{*}{Risk Bound} & \multicolumn{3}{c|}{Conditions} \\
\cline{4-6}
& & & Loss & $R$ & $R_S$ \\
\hline
\multirow{4}{*}{ M-SPP } & \citet{asi2020minibatch} & $\mathcal{O}\left(\frac{\log(N)}{N}\right)$ & s.cvx & --- & --- \\
&\citet{wang2017memory}  & $\mathcal{O}\left(\frac{1}{N}\right)$ & Lip \& s.cvx & --- & --- \\
& Theorem~\ref{thrm:fast_rate_smooth} \textbf{(our work)} & \tabincell{c}{{$\mathcal{O}\left(\frac{1}{T^2} + \frac{R^*}{N}\right)$} or \\  {$\mathcal{O}\left(\frac{1}{T^2} + \frac{1 + R^*}{N}\right)$}} & sm \& cvx & qg & --- \\
& Theorem~\ref{thrm:fast_rate_smooth_convex} \textbf{(our work)} & \tabincell{c}{{$\mathcal{O}\left(\frac{1}{N} + R^*\right)$} or \\  {$\mathcal{O}\left(\frac{1 + R^*}{\sqrt{N}} \right)$}} & sm \& cvx & --- & --- \\
\hline
\multirow{3}{*}{ SPP  } & \citet{asi2019stochastic} &$\mathcal{O}\left(\frac{\log(N)}{N}\right)$ & s.cvx & --- & ---\\
& \citet{patrascu2017nonasymptotic} & $\mathcal{O}\left(\frac{1}{N}\right)$ & sm \& s.cvx & --- & \\
& \citet{davis2019stochastic} & $\mathcal{O}\left(\frac{1}{N^2} + \frac{1}{N}\right)$ & Lip \& s.cvx & --- & \\
\hline
\hline
\multirow{2}{*}{{M-SGD}} & \multirow{2}{*}{\citet{woodworth2021even}} & {$\mathcal{O}\left(\frac{1}{T^2} + \frac{1}{N} +  \sqrt{\frac{R^*}{N}} \right)$} & {sm \& cvx} & --- & --- \\
&  & {$\mathcal{O}\left(e^{-T} +  \frac{R^*}{N} \right)$} & {sm \& cvx} &{ qg} & ---\\
\hline
& \citet{dieuleveut2017harder} & $\mathcal{O}\left(\frac{1}{N^2} + \frac{R^*}{N}\right)$ & quadratic & s.cvx & --- \\
\multirow{3}{*}{ SGD } & \citet{lei2020fine} & \tabincell{c}{$\mathcal{O}\left(\frac{1}{N} + R^*\right)$ or \\  $\mathcal{O}\left(\frac{1 + R^*}{\sqrt{N}} \right)$} & sm \& cvx & --- & s.cvx \\
& \citet{rakhlin2012making} & $\mathcal{O}\left(\frac{1}{N}\right)$ & \tabincell{c}{Lip \& \\ sm \& cvx} & s.cvx & --- \\
\hline
\hline
\multirow{2}{*}{\tabincell{c}{ \\ ERM }} & \citet{zhang2017empirical} & \tabincell{c}{$\mathcal{O}\left(\frac{p}{N} + \frac{R^*}{N}\right)$ or \\ $\mathcal{O}\left(\frac{1}{N^2} + \frac{R^*}{N}\right)$ \\ for $N\gtrsim p$ } & sm \& cvx & \tabincell{c}{ Lip \\ \& s.cvx} & --- \\
& \citet{srebro2010smoothness} & $\mathcal{O}\left(\frac{1}{N} + \sqrt{\frac{R^*}{N}} \right)$ & sm \& cvx & --- & --- \\
\hline
\end{tabular}
\caption{Comparison of our risk bounds to some prior results for M-SPP and SPP as well as for SGD and ERM. Recall that $T$ is the iteration count and $N$ is the total number of samples accessed. 
All the listed bounds hold in expectation. Here we have used the following abbreviations: cvx (convex), s.cvx (strongly convex), Lip (Lipschitz continuous), sm (smooth), qg (quadratic growth). \label{tab:result_comparison}}
\end{table}

\newpage\clearpage

\section{Experiments}
\label{sect:experiment}

We carry out a set of numerical study to demonstrate the convergence performance of minibatch stochastic proximal point methods in (composite) statistical learning problems, to answer the following 3 questions associated with the key theory and algorithms established in this article:
\begin{itemize}\vspace{-0.03in}
  \item Question 1: \emph{How the size of minibatch and noise level of a statistical learning model affect the convergence speed of M-SPP for smooth loss function?} This question is mainly about verifying Theorem~\ref{thrm:fast_rate_smooth} and Theorem~\ref{thrm:fast_rate_smooth_convex_inexact}, and it is answered through a simulation study on Lasso estimation in Section~\ref{ssect:simulation}.\vspace{-0.03in}
  \item Question 2: \emph{Can the two-phase variant of M-SPP improve over M-SPP in the small-$T$-large-$n$ setting?} The simulation results presented in Section~\ref{ssect:simulation} also answer this question related to the verification of Theorem~\ref{thrm:fast_rate_smooth_initialization}.\vspace{-0.03in}
  \item Question 3: \emph{How M-SPP(-TP) methods compare with M-SGD in convergence performance?} The real-data experimental results on logistic regression tasks in Section~\ref{ssect:real_data_experiment} answer this question about algorithm comparison.
\end{itemize}

\subsection{Simulation Study}
\label{ssect:simulation}

We first provide a simulation study to verify our theoretical results for smooth losses when substantialize to the widely used Lasso regression model~\citep{wainwright2009sharp} with quadratic loss function $\ell(f_w(x), y)=\frac{1}{2}(y-w^{\top}x)^2$ and $r(f_w)=\mu \|w\|_1$ where $\mu$ is the $\ell_1$-penalty modulus. Given a model parameter $\bar w \in \mathbb{R}^p$ and a feature point $x\in \mathbb{R}^p$ drawn from standard Gaussian distribution $\mathcal{N}(0, I_{p\times p})$, the responses $y$ is generated according to a linear model $y=\bar{w}^{\top}x+\varepsilon$ with a random Gaussian noise $\varepsilon\sim \mathcal{N}(0,\sigma^2)$. In this case, the population risk function can be expressed in a close form as
\begin{align*}
R(w) = \frac{1}{2}\|w - \bar w\|^2 + \frac{\sigma^2}{2} + \mu \|w\|_1.
\end{align*}
Given a set of $T$ random $n$-minibatches $\left\{S_t=\{x_{i,t},y_{i,t}\}_{i\in[n]}\right\}_{t\in [T]}$ drawn from the above data distribution, we aim at evaluating the convergence performance of M-SPP towards the minimizer of $R$ which can be expressed as
\begin{align*}
w^*=(\bar w - \mu)_+ - (-\bar w - \mu)_+,
\end{align*}
where $(\cdot)_+$ is an element-wise function that preserves the positive parts of a vector.

We test with $p = 5000$ and $N=nT=100p$, and consider a well-specified sparse regression model where the true parameter vector $\bar{w}$ is $\bar k$-sparse with $\bar k=0.2p$ and its non-zero entries are sampled from a zero-mean Gaussian distribution. We set $\mu=10^{-3}$ and initialize $w^{(0)} = 0$. {The inner-loop minibatch proximal Lasso subproblems are optimized via a standard proximal gradient descent method, using either of the following two termination criteria: 1) the difference between consecutive objective values is below $10^{-3}$ and 2) the iteration step reaches $1000$.}

\begin{figure}[t]
\mbox{\hspace{-0.25in}
\subfigure[Results under varying $T$. \label{fig:mspp_convergence_T_add}]{
\includegraphics[width=2.3in]{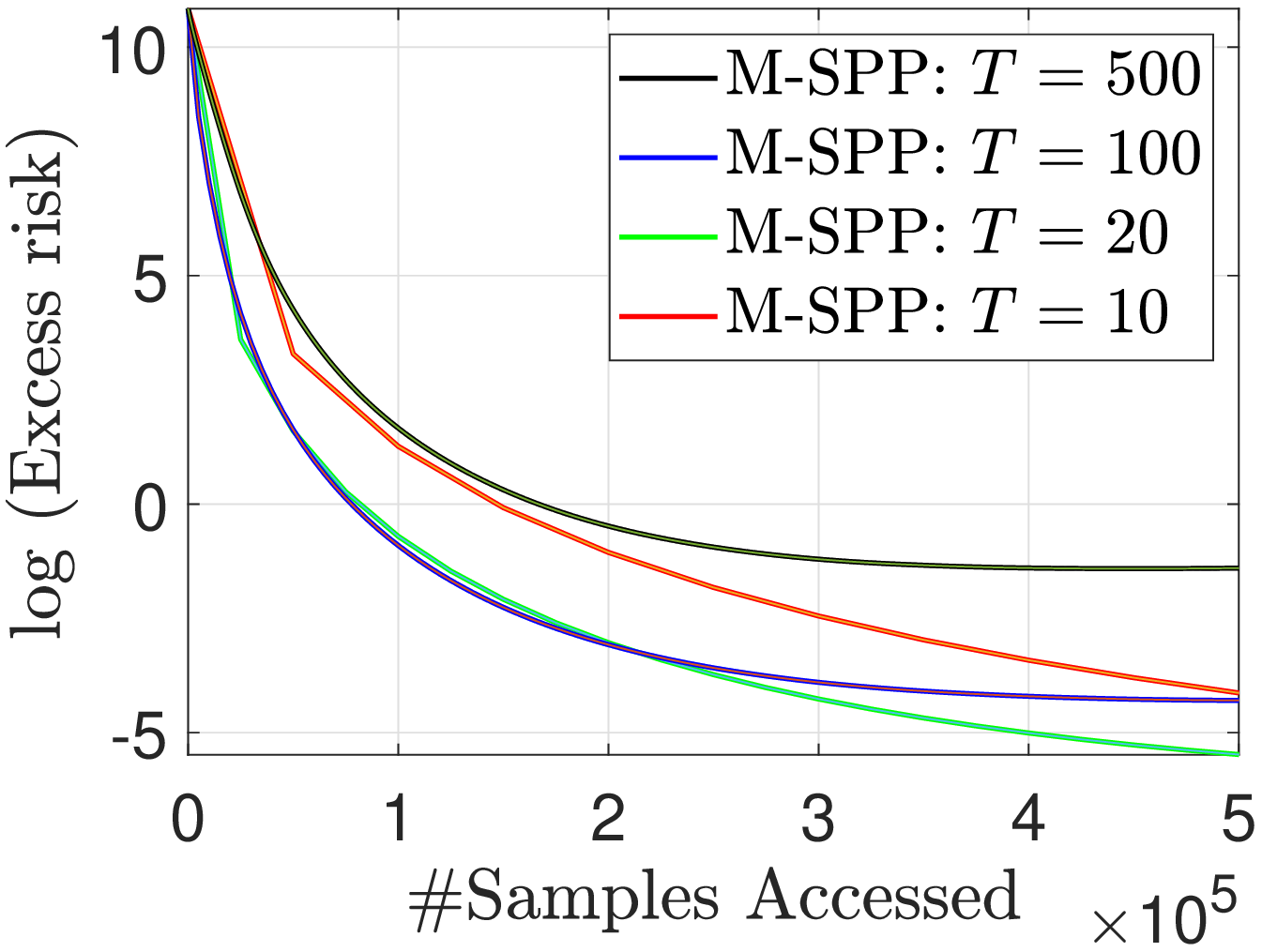}
}\hspace{-0.2in}

\subfigure[Results under varying $\sigma$. \label{fig:mspp_convergence_sigma_add}]{
\includegraphics[width=2.3in]{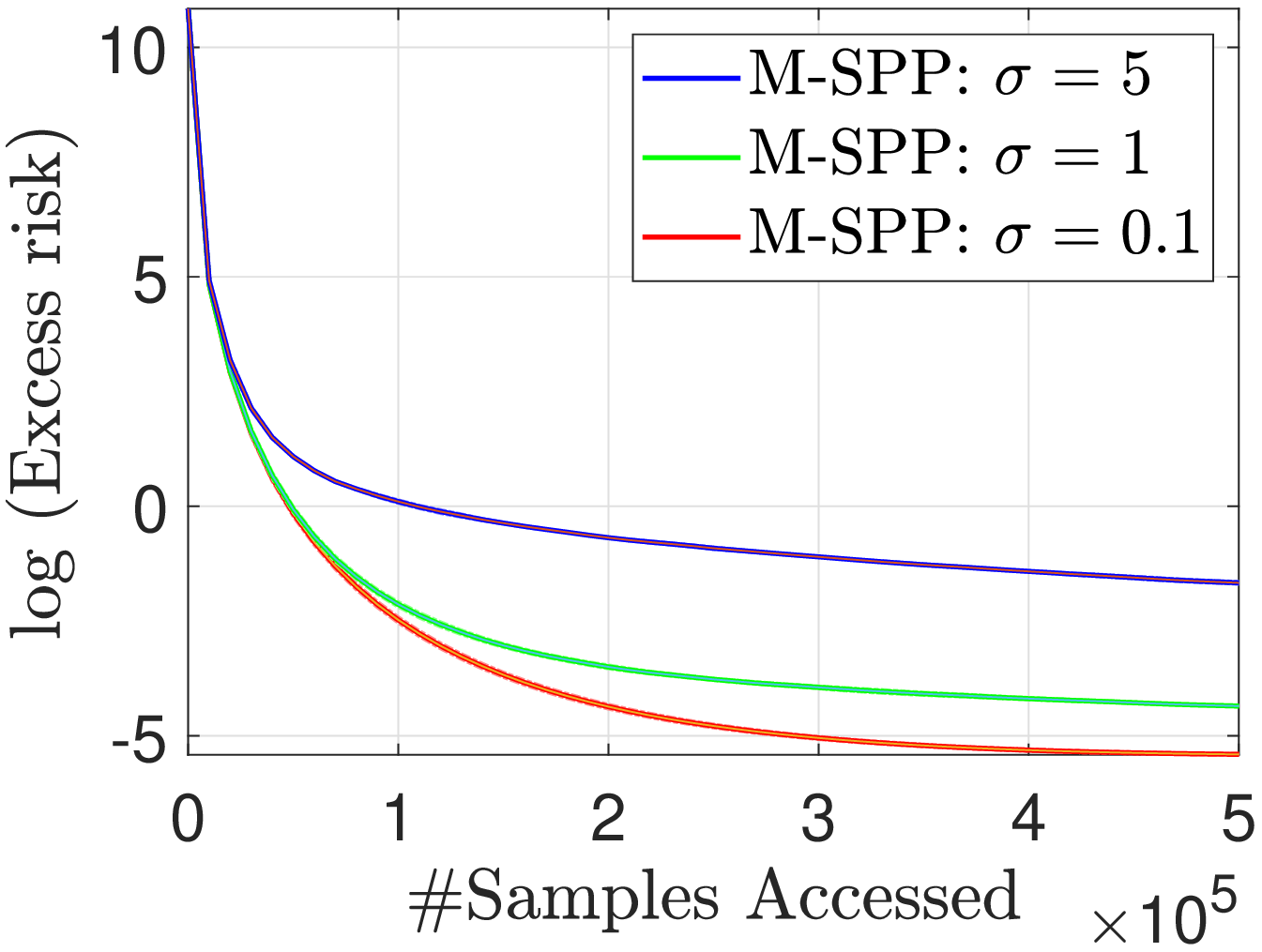}
}\hspace{-0.2in}

\subfigure[M-SPP \emph{versus} M-SPP-TP  \label{fig:two_phase mspp_vs_mspp_add}]{
\includegraphics[width=2.3in]{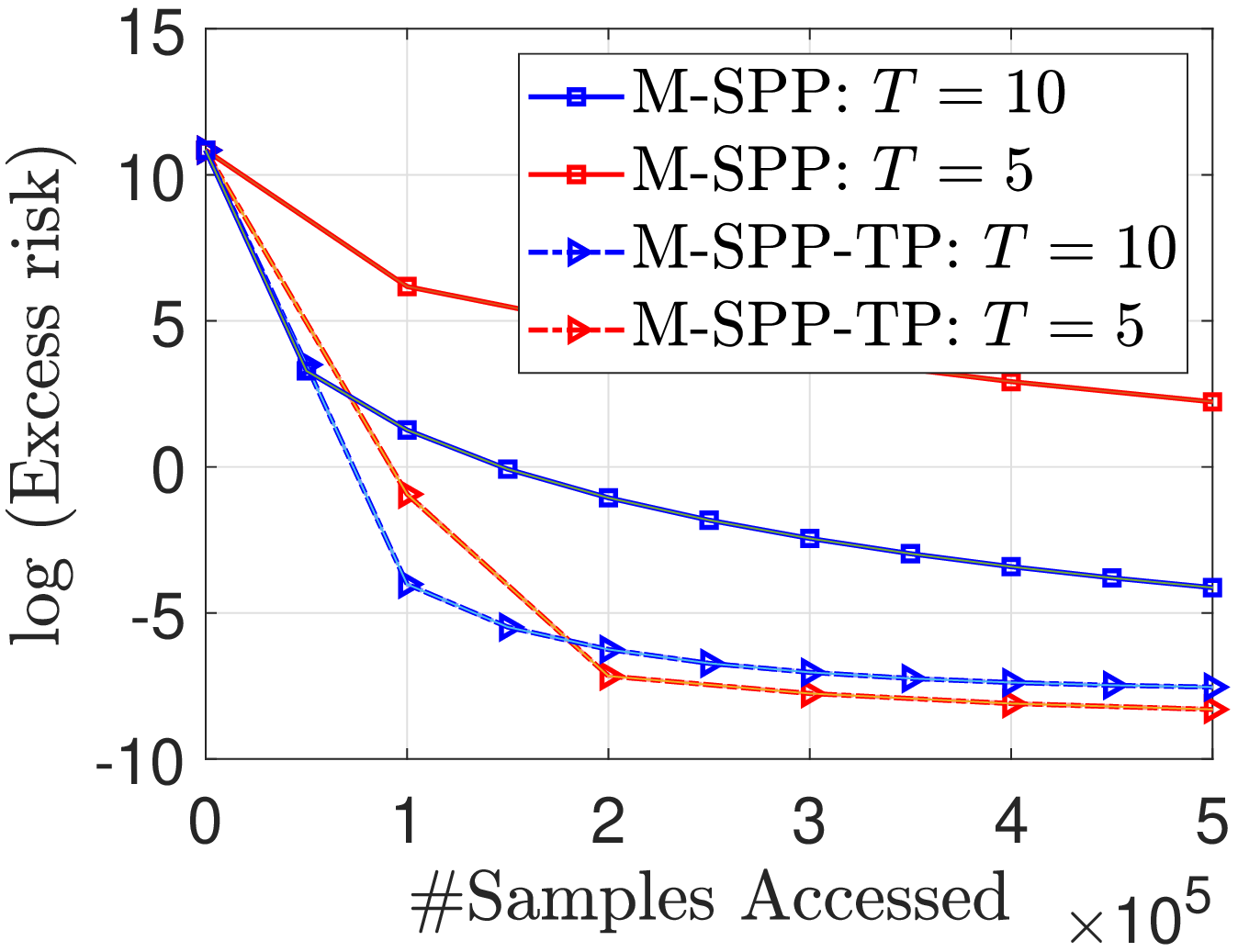}
}
}
\caption{Simulation study on Lasso regression: Convergence performances of M-SPP and M-SPP-TP. {The y-axis represents the logarithmic scale of excess risk.}}
\label{fig:mspp_convergence_add}
\end{figure}

The following two experimental setups are considered for theory verification:
\begin{itemize}
  \item We fix the noise level $\sigma=0.1$ and study the impact of varying $T\in \{10, 20, 100, 500\}$ on the convergence of M-SPP. {Figure~\ref{fig:mspp_convergence_T_add} shows the evolving curves of excess risk as functions of sample size, in a semi-log layout with y-axis representing the logarithmic scale of excess risk. From this set of curves we can observe a clear trend that in the early stage, M-SPP converges faster when the total number of minibatches is relatively large (say, $T\in \{20,100\}$). This is consistent with the prediction of Theorem~\ref{thrm:fast_rate_smooth} about the impact of $T$ and $n$ on convergence rates. While in the final stage, relatively slower convergence behavior is exhibited under relatively larger $T$ (say, $T\in\{100,500\}$). This observation can be explained by the inexact analysis in Theorem~\ref{thrm:fast_rate_smooth_inexact} which shows that to guarantee the desired convergence rate, the inner-loop proximal ERM update needs to be extremely accurate when $T$ is relatively large. Therefore, the question raised in Question 1 on the impact of minibatch size on convergence rate is answered by this group of results.}

      Also in this setup, we have compared M-SPP and its two-phase variant M-SPP-TP for $T\in\{5,10\}$. The related results are shown in Figure~\ref{fig:two_phase mspp_vs_mspp_add}, which indicate that M-SPP-TP significantly improves the convergence of M-SPP in the small-$T$-large-$n$ cases. This observation supports the result of Theorem~\ref{thrm:fast_rate_smooth_initialization} and answers Question 2 affirmatively.
  \item We fix $T=50$ and study the impact of varying noise level $\sigma\in \{0.1, 1, 5\}$ on the convergence performance. The results are shown in Figure~\ref{fig:mspp_convergence_sigma_add}. From this group of results we can see that faster convergence speed is attained at relatively smaller noise level $\sigma$, while the speed becomes insensitive to noise level when $\sigma$ is sufficiently small (say, $\sigma\le 1$). This is consistent with the predication by Theorem~\ref{thrm:fast_rate_smooth}, keeping in mind the fact that $R^* = \frac{1}{2}\|w^* - \bar w\|^2 + \frac{\sigma^2}{2} + \mu \|w^*\|_1\le \|\bar w\|^2 + \frac{1}{2}\sigma^2$. The question raised in Question 1 on the impact of noise level on convergence performance is answered by this group of results.
\end{itemize}

\subsection{Experiment on Real Data}
\label{ssect:real_data_experiment}

We further compare our methods with M-SGD for binary prediction problems using the logistic loss $\ell(w^\top x, y)=\log(1+\exp(-yw^\top x))$. {Here the M-SGD method is implemented by an SGD solver from SGDLibrary~\citep{kasai2017sgdlibrary}. For M-SPP and M-SPP-TP, the The inner-loop minibatch proximal ERMs are solved by the same SGD solver applied with a fixed SGD-batch-size $10$ and a single epoch of data processing. We initialize $w^{(0)} = 0$ for all the considered methods.}

We use two public data sets for evaluation: the \texttt{gisette} data~\citep{guyon2004result} with $p=5000, N=6000$ and the \texttt{covtype.binary} data~\citep{collobert2001parallel} with $p=54, N=581,012$~\footnote{Both data sets are available at~\url{https://www.csie.ntu.edu.tw/~cjlin/libsvmtools/datasets/}.}. For each data set, we use half of the samples as training set and the rest as test set. We are interested in the impact of minibatch-size $n$ on the prediction performance of model measured by test error. All the considered stochastic algorithms are executed with 10 epochs of data processing, and thus the overall number of minibatches is $T=N/n \times 10$. We replicate each experiment $10$ times over random split of data and report the results in mean-value along with error bar.

\begin{figure}
\mbox{\hspace{-0.25in}
\subfigure[$n=N/5$~\label{fig:mspp_convergence_T_gistee_T_5}]{
\includegraphics[width=2.3in]{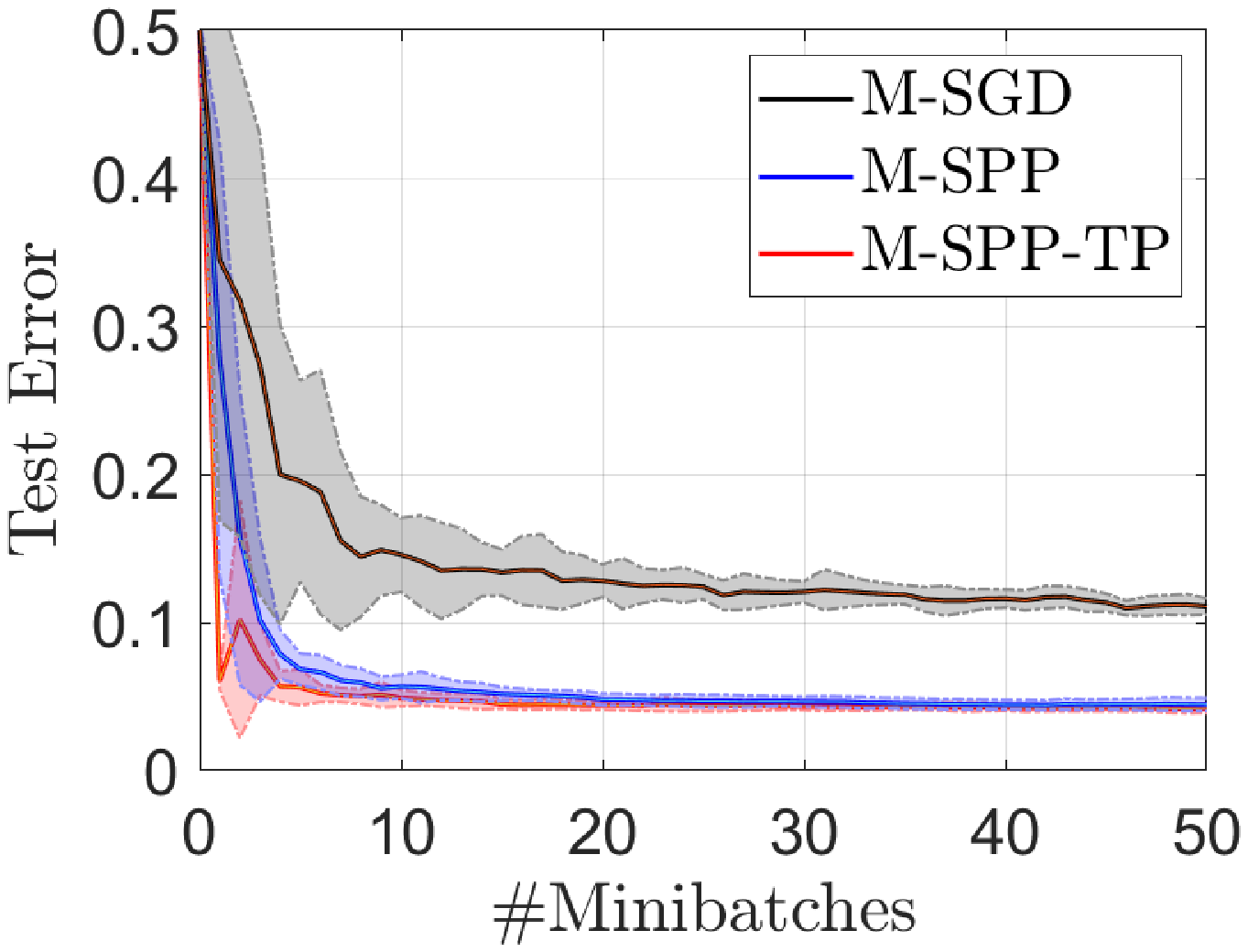}
}\hspace{-0.2in}
\subfigure[$n=N/20$]{
\includegraphics[width=2.3in]{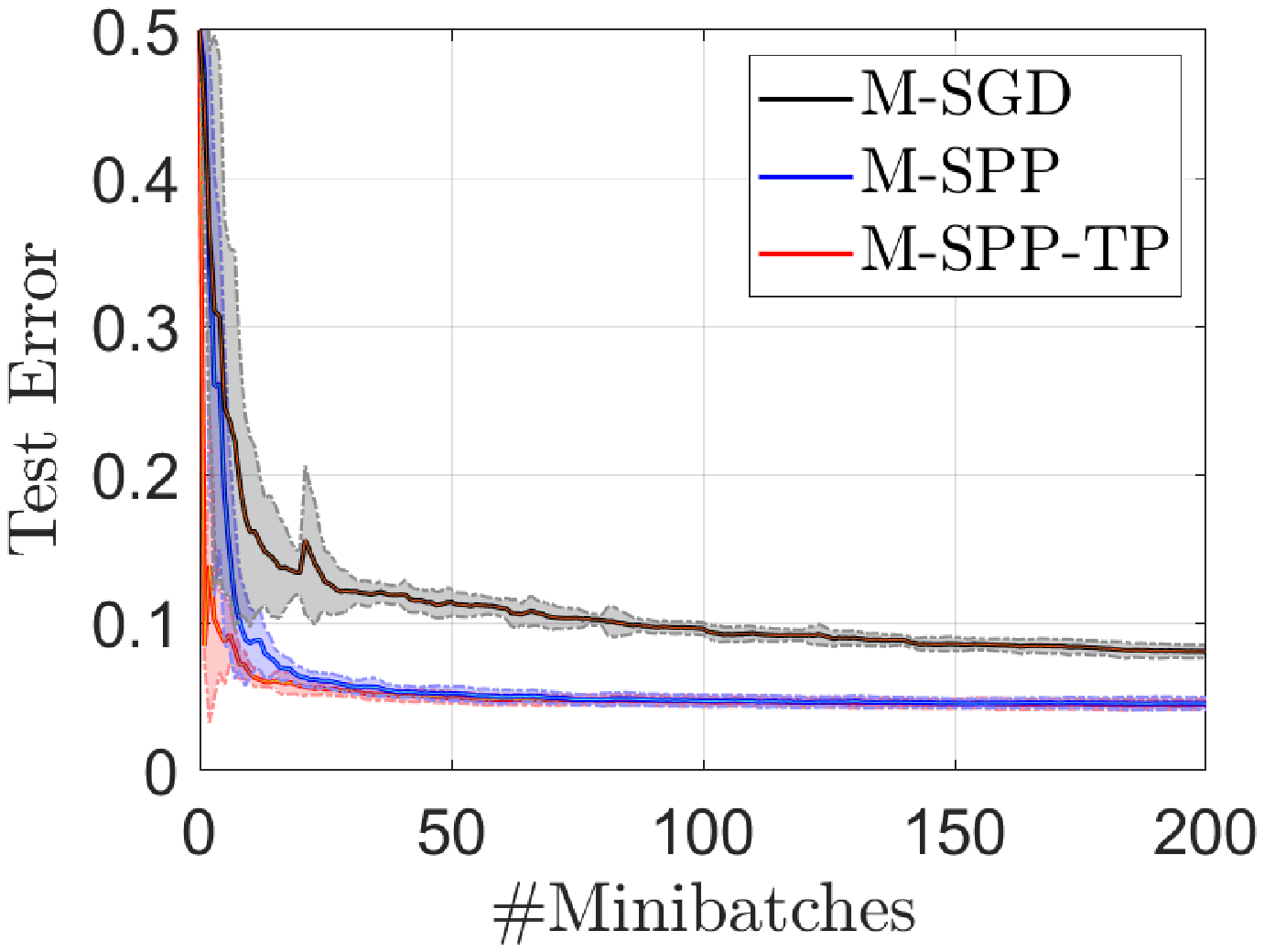}
}\hspace{-0.2in}
\subfigure[$n=N/100$]{
\includegraphics[width=2.3in]{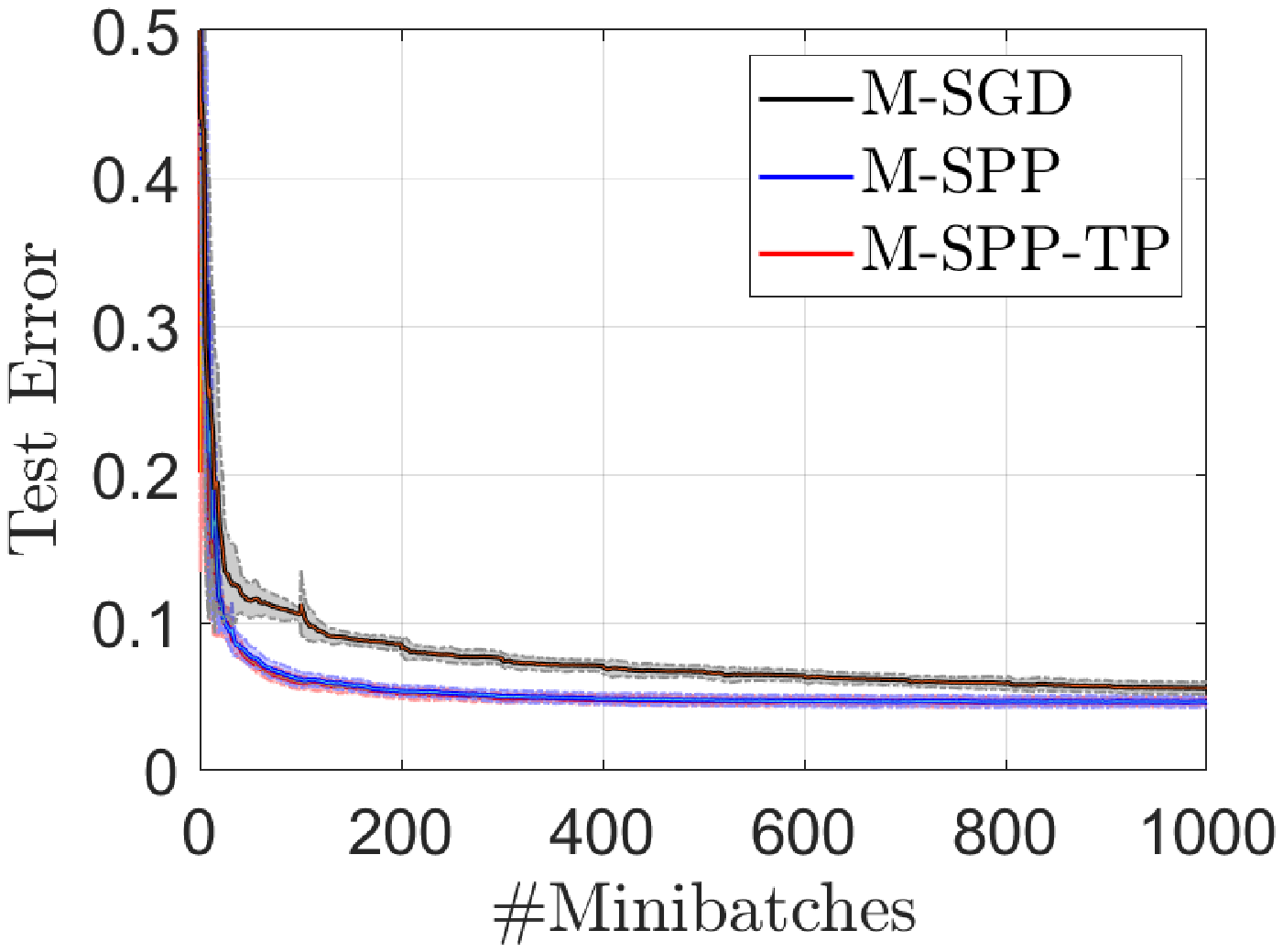}
}
}

\vspace{-0.1in}

\caption{Real-data results on logistic regression: Test error convergence comparison on \texttt{gisette} under varying minibatch size.}
\label{fig:mspp_convergence_T_gistee}
\end{figure}

{In Figure~\ref{fig:mspp_convergence_T_gistee}, we show the evolving curves (error bar shaded in color) of test error with respect to the number of minibatches accessed on \texttt{gisette}, under varying minibatch size $n\in\{\frac{N}{5}, \frac{N}{20}, \frac{N}{100} \}$.} From this set of curves we can observe that:
\begin{itemize}
  \item {Under the same minibatch size, M-SPP and M-SPP-TP converge faster and stabler than M-SGD, especially when the minibatch size is relatively large (see Figure~\ref{fig:mspp_convergence_T_gistee_T_5}). This is as expected because when minibatch size becomes large, M-SGD approaches to gradient descent method while M-SPP approaches ERMs. This answers Question 3 raised at the beginning of the experiment section.}
  \item M-SPP-TP exhibits sharper convergence behavior than M-SPP at the early stage of iteration, especially when the minibatch-size is relatively large. This is consistent with our theoretical results in Theorem~\ref{thrm:fast_rate_smooth} and Theorem~\ref{thrm:fast_rate_smooth_initialization}.
\end{itemize}
{Figure~\ref{fig:mspp_convergence_T_covtype} shows the corresponding results on \texttt{covtype} under $n\in\left\{\frac{N}{20}, \frac{N}{100}, \frac{N}{1000}\right\}$. From this set of results we once again see that M-SPP and M-SPP-TP consistently outperform M-SGD under the same minibatch size, and M-SPP-TP converges faster than M-SPP under relatively larger minibatch size (say, $n=\frac{N}{20}$).}

\begin{figure}
\mbox{\hspace{-0.25in}
\subfigure[$n=N/20$]{
\includegraphics[width=2.3in]{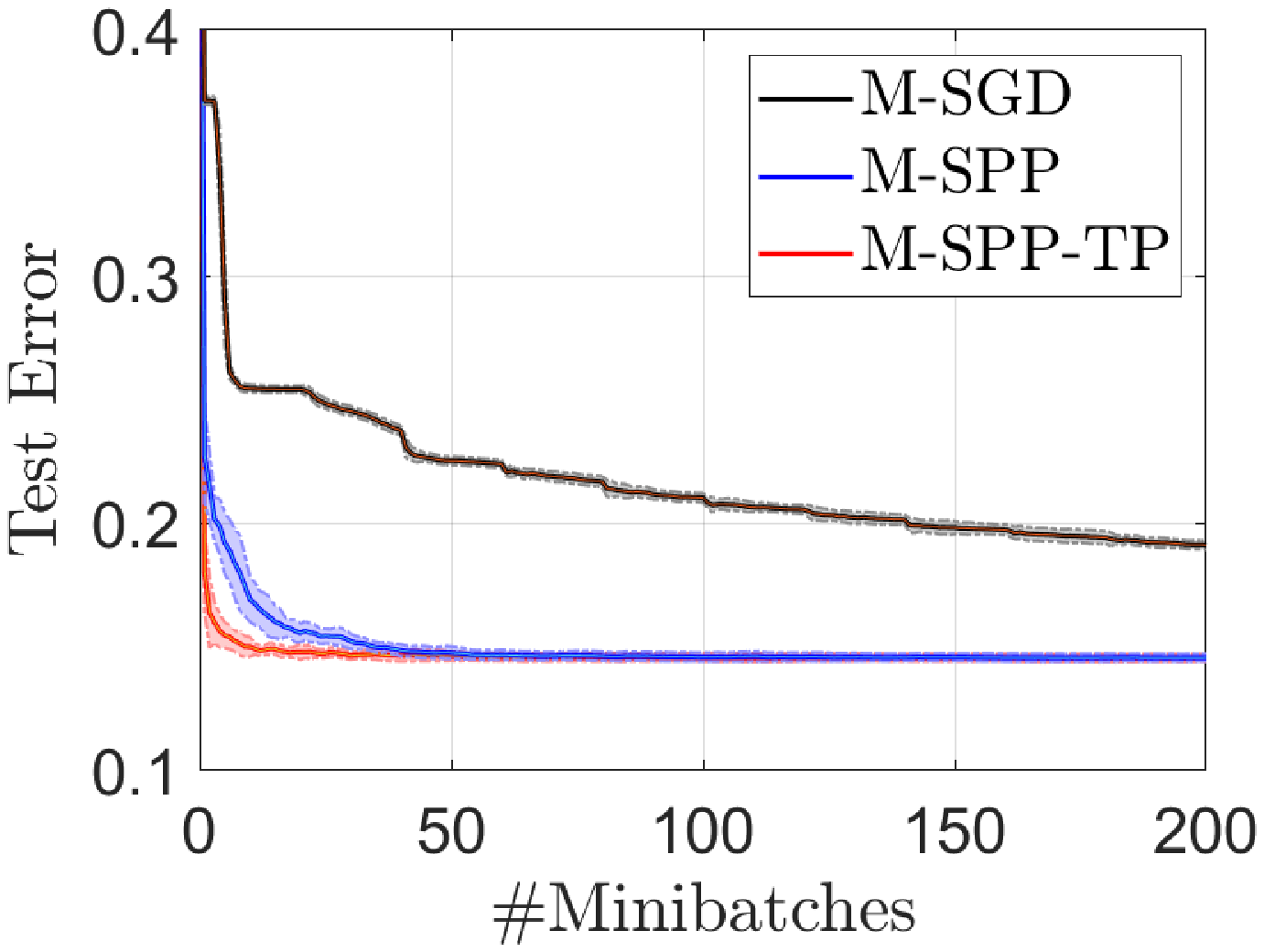}
}\hspace{-0.2in}
\subfigure[$n=N/100$]{
\includegraphics[width=2.3in]{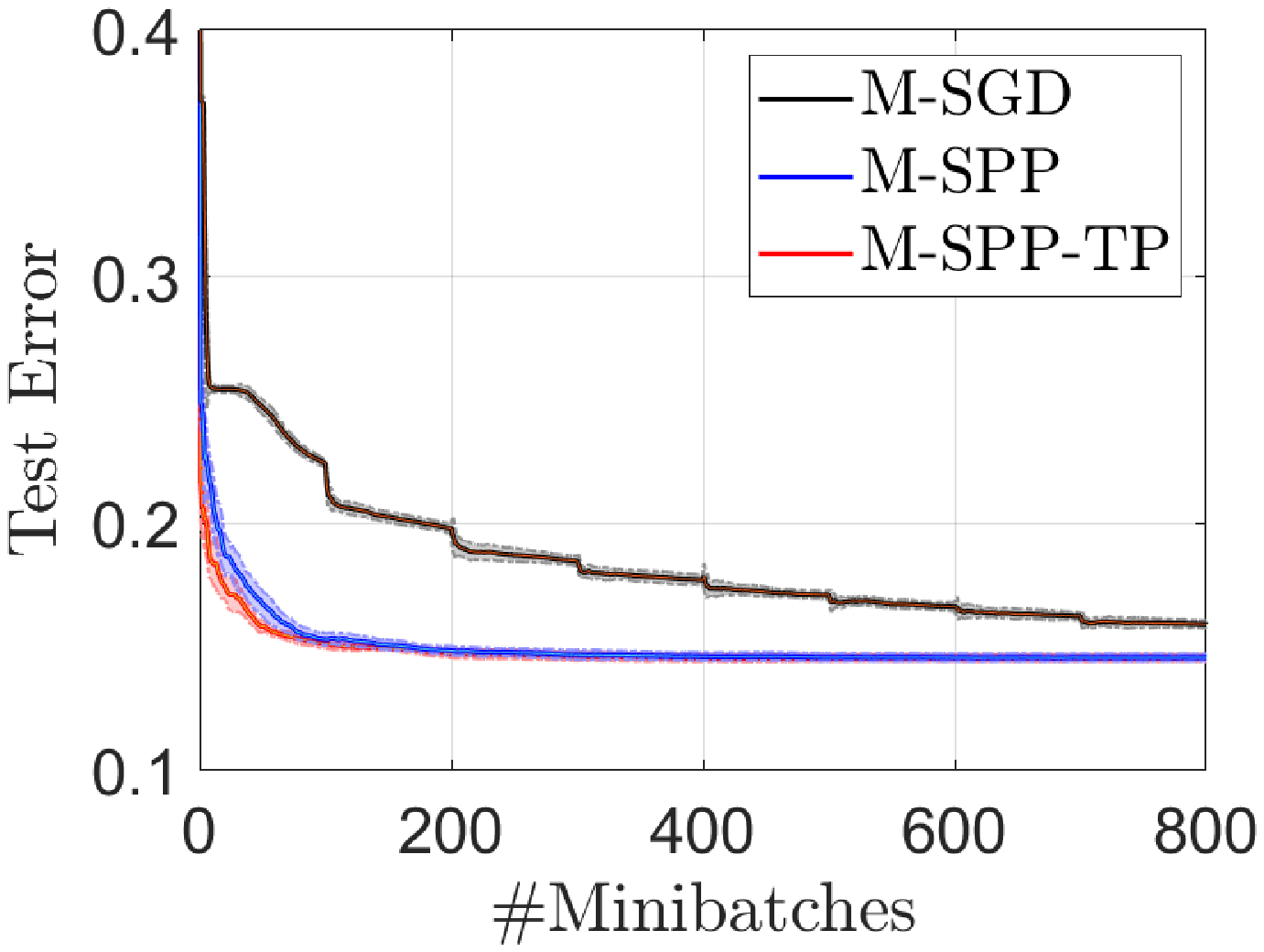}
}\hspace{-0.2in}
\subfigure[$n=N/1000$]{
\includegraphics[width=2.3in]{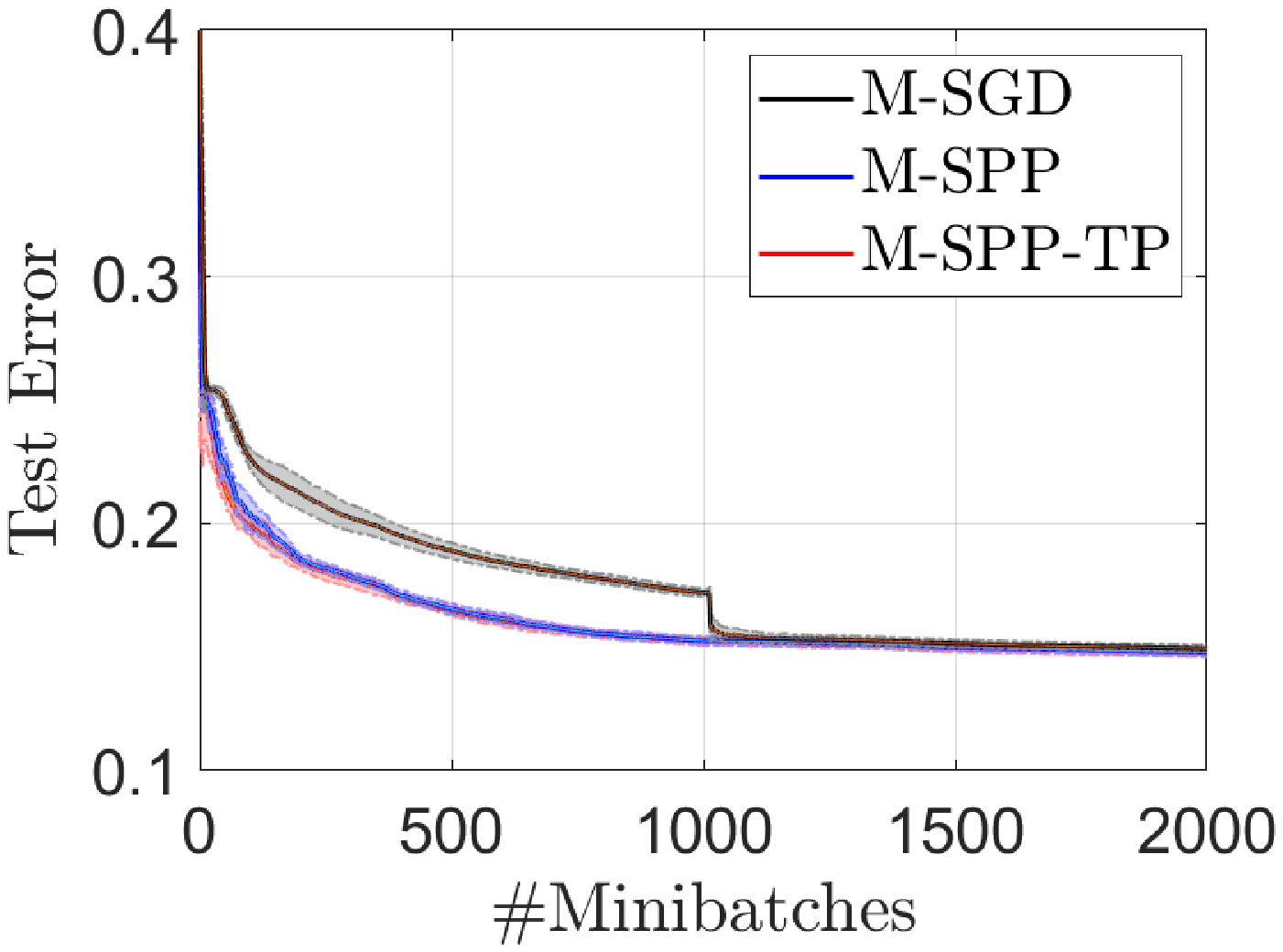}
}}
\caption{Real-data results on logistic regression: Test error convergence comparison on \texttt{covtype.binary} under varying minibatch size.}
\label{fig:mspp_convergence_T_covtype}
\end{figure}

\section{Conclusions and Future Prospects}
\label{sect:conclusion}

In this article, we presented an improved convergence analysis for the minibatch stochastic proximal point methods with smooth and convex losses. Under the quadratic growth condition on population risk, we showed that M-SPP with minibatch-size $n$ and iteration count $T$ converges at a composite rate consisting of an $\mathcal{O}(\frac{1}{T^2})$ \emph{bias} decaying component and an $\mathcal{O}(\frac{1}{N})$ \emph{variance} decaying component. In the small-$n$-large-$T$ case, this result substantially improves the prior relevant results of SPP-type approaches which typically require each instantaneous loss to be Lipschitz and strongly convex. Complementally in the small-$T$-large-$n$ setting, we provide a two-phase acceleration of M-SPP which improves the $\mathcal{O}(\frac{1}{T^2})$ bias decaying rate to $\mathcal{O}\left(\frac{\log(N)}{N^2}\right)$. Perhaps the most interesting theoretical finding is that the (dominant) variance decaying term has a factor dependence on the minimal value of population risk, justifying the sharper convergence behavior of M-SPP in low-noise statistical setting as backed up by our numerical evidence. In addition to the in-expectation risk bounds, we have also derived a near-optimal parameter estimation error bound for a random shuffling variant of M-SPP that holds with high probability over data distribution and in expectation over the random shuffling. To conclude, our theory lays a novel and stronger foundation for understanding the convex M-SPP style algorithms that have gained recent significant attention, both in theory and practice, for large-scale machine learning~\citep{li2014efficient,wang2017efficient,asi2020minibatch}.

\newpage

There are several key prospects for future investigation of our theory:
\begin{itemize}
  \item It is still open to derive near-optimal exponential excess risk bounds for M-SPP that apply to the (suffix) average or last of iterates over training data.
  \item Inspired by the recent progresses made towards understanding M-SPP with momentum acceleration~\citep{deng2021minibatch,chadha2022accelerated}, it is interesting to provide momentum and weakly-convex extensions of our theory for smooth loss functions.
  \item Last but not least, we expect that the theory developed in this article can be extended to the setup of non-parametric learning with minibatch stochastic proximal point methods.
\end{itemize}


\section*{Acknowledgements}

The authors sincerely thank the anonymous referees for their constructive comments. The work of Xiao-Tong Yuan is also funded in part by the National Key Research and Development Program of China under Grant No. 2018AAA0100400 and in part by the Natural Science Foundation of China (NSFC) under Grant No.U21B2049, No.61876090 and No.61936005.

\newpage

\appendix

\section{Proofs for the Results in Section~\ref{sect:analysis_smooth}}
\label{apdsect:analysis_smooth_proofs}
In this section, we present the technical proofs for the main results stated in Section~\ref{sect:analysis_smooth}.

\subsection{Proof of Theorem~\ref{thrm:fast_rate_smooth}}
\label{apdsect:proof_fast_rate_smooth}

Here we prove Theorem~\ref{thrm:fast_rate_smooth} as restated below for convenience.
\FastRateSmooth*

We first present the following lemma which will be used in the proof. It can be viewed as a straightforward extension of the prior result~\cite[Lemma 1]{wang2017memory} to the setup of composite minimization. A proof is included here for the sake of completeness.
\begin{lemma}\label{lemma:function_strong_convexity}
Assume that the loss function $\ell$ is convex with respect to its first argument and the regularization function $r$ is convex. Then for any $w\in \mathcal{W}$, we have
\[
R_{S_t}(w_t) -  R_{S_t}(w) \le \frac{\gamma_t}{2}\left( \|w - w_{t-1} \|^2 - \|w - w_t\|^2 - \|w_t - w_{t-1}\|^2\right).
\]
\end{lemma}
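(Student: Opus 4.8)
The plan is to exploit the $\gamma_t$-strong convexity of the inner objective $F_t$, which follows from combining the convexity of $R_{S_t}$ (itself the average of the convex losses $\ell(\cdot;z_{i,t})$ plus the convex regularizer $r$) with the $\gamma_t$-strongly convex proximal term $\frac{\gamma_t}{2}\|\cdot - w_{t-1}\|^2$. Concretely, since $w_t$ is the exact minimizer of $F_t$ over the closed convex set $\mathcal{W}$, the first-order optimality condition supplies a subgradient $h\in\partial R_{S_t}(w_t)$ with
\[
\langle h + \gamma_t(w_t - w_{t-1}),\, w - w_t\rangle \ge 0, \qquad \forall w\in\mathcal{W}.
\]
First I would rearrange this into $\langle h,\, w - w_t\rangle \ge \gamma_t\langle w_t - w_{t-1},\, w_t - w\rangle$.

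Next, the convexity of $R_{S_t}$ gives $R_{S_t}(w) - R_{S_t}(w_t) \ge \langle h,\, w - w_t\rangle$ for the same subgradient $h$, so chaining the two inequalities yields
\[
R_{S_t}(w) - R_{S_t}(w_t) \ge \gamma_t\langle w_t - w_{t-1},\, w_t - w\rangle.
\]
The final step is purely algebraic: applying the three-point (polarization) identity $\langle a - b,\, a - c\rangle = \tfrac12\left(\|a-b\|^2 + \|a-c\|^2 - \|b-c\|^2\right)$ with $a = w_t$, $b = w_{t-1}$, $c = w$ converts the right-hand side into $\tfrac{\gamma_t}{2}\left(\|w_t-w_{t-1}\|^2 + \|w_t-w\|^2 - \|w_{t-1}-w\|^2\right)$, and rearranging gives exactly the claimed bound.

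I expect the only subtle point to be the rigorous justification of the subgradient optimality condition in the constrained, possibly nonsmooth setting (e.g.\ when $r(w)=\mu\|w\|_1$): one invokes that $0\in\partial F_t(w_t) + N_{\mathcal{W}}(w_t)$ and that $\partial F_t = \partial R_{S_t} + \{\gamma_t(w_t - w_{t-1})\}$ by the sum rule, which is valid here since the quadratic term is everywhere differentiable, giving the displayed variational inequality. If one prefers to avoid normal-cone machinery, the same inequality can be obtained directly: for any $w\in\mathcal{W}$ and $\eta\in(0,1]$, feasibility of $(1-\eta)w_t + \eta w$ together with $F_t(w_t)\le F_t((1-\eta)w_t+\eta w)$, convexity of $R_{S_t}$, and expansion of the quadratic term, followed by dividing by $\eta$ and letting $\eta\downarrow 0$, reproduces $R_{S_t}(w) - R_{S_t}(w_t) \ge \gamma_t\langle w_t - w_{t-1},\, w_t - w\rangle$ without any explicit appeal to subdifferential calculus. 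Everything else is routine.
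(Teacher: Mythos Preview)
Your proof is correct. Your primary route---first-order optimality via a subgradient $h\in\partial R_{S_t}(w_t)$, then convexity of $R_{S_t}$, then the three-point identity---differs from the paper's argument, which proceeds directly by the limiting scheme you describe at the end as an alternative: plugging $(1-\eta)w_t+\eta w$ into the optimality of $w_t$, expanding the quadratic via the identity $\|(1-\eta)a+\eta b\|^2=(1-\eta)\|a\|^2+\eta\|b\|^2-\eta(1-\eta)\|a-b\|^2$, using convexity of $R_{S_t}$, dividing by $\eta$, and sending $\eta\to 0^+$. The subgradient approach is cleaner once one accepts the normal-cone optimality condition and the subdifferential sum rule, and it makes the role of strong convexity of $F_t$ transparent; the paper's $\eta\to 0^+$ argument is entirely self-contained and sidesteps any appeal to nonsmooth calculus, which is convenient when $r$ is merely convex and possibly nondifferentiable. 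Since you already sketch both, there is no gap.
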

\begin{proof}
Since $\ell$ and $r$ are both convex, $R_{S_t}$ is convex over $\mathcal{W}$. The optimality of $w_t$ implies that for any $w \in \mathcal{W}$ and $\eta \in(0,1)$
\[
\begin{aligned}
&R_{S_t}(w_t) + \frac{\gamma_t}{2} \|w_t - w_{t-1}\|^2 \le R_{S_t}((1-\eta)w_t + \eta w) + \frac{\gamma_t}{2} \|(1-\eta)w_{t} + \eta w - w_{t-1}\|^2 \\
\le& (1-\eta)  R_{S_t}(w_t) + \eta R_{S_t}(w) + \frac{\gamma_t}{2}\left[(1-\eta) \|w_t - w_{t-1}\|^2 + \eta \|w - w_{t-1}\|^2 - \eta(1-\eta)\|w - w_t\|^2 \right],
\end{aligned}
\]
where in the last inequality we have used the definition of the norm $\|\cdot\|$. Rearranging both sides of the above inequality yields
\[
\eta ( R_{S_t}(w_t) -  R_{S_t}(w)) \le  \frac{\eta \gamma_t}{2}\left[ \|w - w_{t-1}\|^2 - (1-\eta)\|w - w_t\|^2 - \|w_t - w_{t-1}\|^2\right],
\]
which then implies (keep in mind that $\eta>0$)
\[
R_{S_t}(w_t) -  R_{S_t}(w) \le  \frac{\gamma_t}{2}\left[ \|w - w_{t-1}\|^2 - (1-\eta)\|w - w_t\|^2 - \|w_t - w_{t-1}\|^2 \right].
\]
Limiting $\eta \rightarrow 0^+$ in the above inequality yields the desired bound.
\end{proof}

\newpage

The following  boundedness result for smooth function is due to~\citet[Lemma 3.1]{srebro2010smoothness}.
\begin{lemma}\label{lemma:key_smooth}
If $g$ is non-negative and $L$-smooth, then $\|\nabla g(w)\| \le \sqrt{2Lg(w)}$.
\end{lemma}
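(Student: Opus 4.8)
The plan is to exploit the quadratic upper bound afforded by $L$-smoothness together with the global non-negativity of $g$, and then to optimize over a freely chosen comparison point. First I would extract from the definition of $L$-smoothness the one-sided upper bound
\[
g(w') \le g(w) + \langle \nabla g(w), w' - w\rangle + \frac{L}{2}\|w' - w\|^2, \qquad \forall w',
\]
which follows by keeping only the relevant half of the two-sided absolute-value inequality stated just before Assumption~\ref{assump:smooth}, with the two arguments assigned so that the gradient is evaluated at $w$ (the first slot in that definition).

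Next I would invoke non-negativity: since $g(w')\ge 0$ for every $w'$, the left-hand side above is bounded below by zero, giving
\[
0 \le g(w) + \langle \nabla g(w), w' - w\rangle + \frac{L}{2}\|w' - w\|^2
\]
for all $w'$. The right-hand side is a convex quadratic in the displacement $v := w' - w$, minimized at $v = -\frac{1}{L}\nabla g(w)$, i.e. at the comparison point $w' = w - \frac{1}{L}\nabla g(w)$. Substituting this particular choice collapses the two gradient-dependent terms and yields $0 \le g(w) - \frac{1}{2L}\|\nabla g(w)\|^2$.

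Rearranging gives $\|\nabla g(w)\|^2 \le 2L g(w)$, and taking square roots (both sides being non-negative) delivers the claimed self-bounding inequality $\|\nabla g(w)\| \le \sqrt{2L g(w)}$. There is no serious obstacle here; the only point requiring a little care is choosing the correct direction of the smoothness inequality and checking that the minimizing comparison point is admissible, which holds because $g$ is assumed smooth and non-negative on the whole space, so the inequality may legitimately be instantiated at $w - \frac{1}{L}\nabla g(w)$.
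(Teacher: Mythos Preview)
Your argument is correct and is exactly the standard self-bounding proof: instantiate the smoothness upper bound at the minimizer $w' = w - \tfrac{1}{L}\nabla g(w)$, use $g(w')\ge 0$, and rearrange. The paper itself does not supply a proof but simply cites this result as \citet[Lemma~3.1]{srebro2010smoothness}, whose proof is precisely the one you have written, so your approach coincides with the source being invoked.
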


Let $\{\mathcal{F}_{t}\}_{t\ge1}$ be the filtration generated by the iterates $\{w_{t}\}_{t\ge1}$ as $\mathcal{F}_t = \sigma\left(w_1, w_2,...,w_t\right)$. With Lemma~\ref{lemma:function_strong_convexity} and Lemma~\ref{lemma:key_smooth} in place, we can further establish the following key lemma that plays a fundamental role in proving Theorem~\ref{thrm:fast_rate_smooth}.

\begin{lemma}\label{lemma:telescope_smooth}
Suppose that the Assumptions~\ref{assump:smooth} holds. Set $\gamma_t \ge \frac{16 L}{n}$. Then we have
\[
\mathbb{E}\left[R(w_t) -  R^*\mid \mathcal{F}_{t-1}\right] \le \gamma_t\left(D^2(w_{t-1}, W^*) - \mathbb{E}\left[D^2(w_t, W^*)\mid \mathcal{F}_{t-1} \right]\right) + \frac{16L}{\gamma_t n} R^*.
\]
\end{lemma}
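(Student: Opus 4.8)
The plan is to pair the ``one-step descent'' inequality of Lemma~\ref{lemma:function_strong_convexity}, evaluated at the projection of $w_{t-1}$ onto $W^*$, with a bound on the expected generalization gap of the inner-loop ERM obtained from a leave-one-out (uniform stability) argument. Smoothness enters \emph{only} through the self-bounding inequality $\|\nabla g(w)\|\le\sqrt{2Lg(w)}$ of Lemma~\ref{lemma:key_smooth}, which is exactly the new ingredient relative to the Lipschitz analysis of~\citet{wang2017memory}. Concretely, fix $\mathcal{F}_{t-1}$, so that $w_{t-1}$ is deterministic, and pick $w^*_{t-1}\in W^*$ achieving $D(w_{t-1},W^*)$, so $R(w^*_{t-1})=R^*$. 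Applying Lemma~\ref{lemma:function_strong_convexity} with $w=w^*_{t-1}$, bounding $\|w_t-w^*_{t-1}\|^2\ge D^2(w_t,W^*)$, dropping the nonpositive term $-\tfrac{\gamma_t}{2}\|w_t-w_{t-1}\|^2$, and taking $\mathbb{E}[\,\cdot\mid\mathcal{F}_{t-1}]$ (here $S_t$ is independent of $\mathcal{F}_{t-1}$, so $\mathbb{E}[R_{S_t}(w^*_{t-1})\mid\mathcal{F}_{t-1}]=R^*$) gives
\[
\mathbb{E}[R(w_t)-R^*\mid\mathcal{F}_{t-1}]\le \Gamma_t+\tfrac{\gamma_t}{2}\big(D^2(w_{t-1},W^*)-\mathbb{E}[D^2(w_t,W^*)\mid\mathcal{F}_{t-1}]\big),
\]
where $\Gamma_t:=\mathbb{E}[R(w_t)-R_{S_t}(w_t)\mid\mathcal{F}_{t-1}]=\mathbb{E}[R^{\ell}(w_t)-R^{\ell}_{S_t}(w_t)\mid\mathcal{F}_{t-1}]$ (the regularizer cancels).

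The heart of the argument is a self-bounding estimate for $\Gamma_t$. Let $z'\sim\mathcal{D}$ be a fresh sample, independent of everything, and let $w_t^{(i)}$ be the minimizer of $F_t$ after $z_{i,t}$ is replaced by $z'$ in $S_t$. The standard exchangeability identity yields $\Gamma_t=\tfrac1n\sum_{i=1}^n\mathbb{E}[\ell(w_t^{(i)};z_{i,t})-\ell(w_t;z_{i,t})\mid\mathcal{F}_{t-1}]$. Because $F_t$ is $\gamma_t$-strongly convex and $F_t-F_t^{(i)}=\tfrac1n(\ell(\cdot;z_{i,t})-\ell(\cdot;z'))$, combining the two strong-convexity/optimality inequalities for $w_t$ and $w_t^{(i)}$ with convexity and $L$-smoothness of $\ell$ (absorbing a harmless constant using $\gamma_t\ge 16L/n$) gives the stability bound $\|w_t^{(i)}-w_t\|\le\tfrac{2}{n\gamma_t}(\|\nabla\ell(w_t;z_{i,t})\|+\|\nabla\ell(w_t;z')\|)$, which Lemma~\ref{lemma:key_smooth} converts to $\tfrac{c_1\sqrt L}{n\gamma_t}(\sqrt{\ell(w_t;z_{i,t})}+\sqrt{\ell(w_t;z')})$. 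Bounding each summand of $\Gamma_t$ by smoothness as $\ell(w_t^{(i)};z_{i,t})-\ell(w_t;z_{i,t})\le\sqrt{2L\ell(w_t;z_{i,t})}\,\|w_t^{(i)}-w_t\|+\tfrac L2\|w_t^{(i)}-w_t\|^2$, substituting the stability bound, and using $\sqrt{ab}\le\tfrac12(a+b)$ together with $\gamma_t\ge 16L/n$, each summand is at most $\tfrac{c_2L}{n\gamma_t}(\ell(w_t;z_{i,t})+\ell(w_t;z'))$ for an absolute constant $c_2$. Taking expectations, using $\mathbb{E}[\ell(w_t;z')\mid\mathcal{F}_{t-1}]=\mathbb{E}[R^{\ell}(w_t)\mid\mathcal{F}_{t-1}]\le\mathbb{E}[R(w_t)\mid\mathcal{F}_{t-1}]$ and $\tfrac1n\sum_i\ell(w_t;z_{i,t})=R^{\ell}_{S_t}(w_t)\le R_{S_t}(w_t)$, and then eliminating $\mathbb{E}[R_{S_t}(w_t)\mid\mathcal{F}_{t-1}]=\mathbb{E}[R(w_t)\mid\mathcal{F}_{t-1}]-\Gamma_t$, one arrives at $\Gamma_t\le\beta\,\mathbb{E}[R(w_t)-R^*\mid\mathcal{F}_{t-1}]+\beta R^*$ with $\beta=\tfrac{c_3L}{n\gamma_t}\le\tfrac12$ (again by $\gamma_t\ge 16L/n$).

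Substituting this self-bounding estimate into the displayed inequality and rearranging gives $(1-\beta)\,\mathbb{E}[R(w_t)-R^*\mid\mathcal{F}_{t-1}]\le\beta R^*+\tfrac{\gamma_t}{2}\big(D^2(w_{t-1},W^*)-\mathbb{E}[D^2(w_t,W^*)\mid\mathcal{F}_{t-1}]\big)$. If the parenthesized difference is $\ge 0$, dividing by $1-\beta\ge\tfrac12$ promotes $\tfrac{\gamma_t}{2}$ to at most $\gamma_t$ and leaves an $R^*$-coefficient of $\tfrac{\beta}{1-\beta}\le 2\beta\le\tfrac{16L}{\gamma_t n}$ once $c_1,c_2,c_3$ are tracked; if it is $<0$, the same inequality combined with the a priori bound $\mathbb{E}[R(w_t)\mid\mathcal{F}_{t-1}]\ge R^*$ (nonnegativity of the excess risk, which forces the parenthesized difference to be no smaller than $-\tfrac{16L}{n\gamma_t^2}R^*$) still delivers the claimed inequality after a short case check. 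This yields exactly the statement of Lemma~\ref{lemma:telescope_smooth}.

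The main obstacle is the second paragraph: turning the leave-one-out stability bound for a smooth but \emph{non-Lipschitz} loss into a generalization-gap estimate that is self-bounding in $\mathbb{E}[R(w_t)]$ rather than in a fixed Lipschitz constant — this is precisely where Lemma~\ref{lemma:key_smooth} is indispensable — and then verifying that the resulting absolute constants really do fit under the normalization $\gamma_t\ge 16L/n$; the $\Delta<0$ bookkeeping at the end is a minor but necessary subtlety.
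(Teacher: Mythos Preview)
Your proposal is correct and follows essentially the same architecture as the paper: combine Lemma~\ref{lemma:function_strong_convexity} at $w=w^*_{t-1}$ with a leave-one-out stability bound on the generalization gap, using Lemma~\ref{lemma:key_smooth} to make the bound self-referencing in $\mathbb{E}[R(w_t)]$.

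Two places where the paper is tidier than your version. First, in the stability step the paper evaluates the loss at the \emph{perturbed} iterate and the \emph{fresh} sample, i.e.\ it bounds $\|w_t^{(i)}-w_t\|$ in terms of $\sqrt{\ell(w_t^{(i)};z_{i,t})}$ and $\sqrt{\ell(w_t;z'_{i,t})}$ rather than $\sqrt{\ell(w_t;z_{i,t})}$ and $\sqrt{\ell(w_t;z')}$. By exchangeability both of those quantities have expectation exactly $\mathbb{E}[R^\ell(w_t)\mid\mathcal{F}_{t-1}]$, so the paper gets $|\Gamma_t|\le\frac{8L}{\gamma_t n}\mathbb{E}[R(w_t)\mid\mathcal{F}_{t-1}]$ with the sharp constant $8$ in one line, without your smoothness correction and without the elimination step $\mathbb{E}[R_{S_t}(w_t)]=\mathbb{E}[R(w_t)]-\Gamma_t$. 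Your route works but the constants are looser and need careful tracking to fit under $\gamma_t\ge 16L/n$. Second, the sign-of-$\Delta$ case analysis is unnecessary: since $X:=\mathbb{E}[R(w_t)-R^*\mid\mathcal{F}_{t-1}]\ge 0$ and $\beta\le\tfrac12$, one has $\tfrac12 X\le(1-\beta)X\le\tfrac{\gamma_t}{2}\Delta+\beta R^*$, and multiplying by $2$ gives $X\le\gamma_t\Delta+2\beta R^*$ directly, regardless of the sign of $\Delta$. This is exactly what the paper does.
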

\begin{proof}
Let us consider a sample set $S_t^{(i)}$ which is identical to $S_t$ except that one of the $z_{i,t}$ is replaced by another random sample $z'_{i,t}$. Denote
\[
w^{(i)}_t = \argmin_{w\in \mathcal{W}} \left\{F^{(i)}_{t}(w):= R_{S^{(i)}_t}(w) + \frac{\gamma_t}{2} \|w - w_{t-1}\|^2\right\},
\]
where $R_{S^{(i)}_t}(w):=\frac{1}{n}\left(\sum_{j\neq i} \ell(w; z_{j,t}) + \ell(w; z'_{i,t})\right) + r(w)$. Then we can show that
\[
\begin{aligned}
&F_t(w^{(i)}_t) - F_t(w_t) \\
=& \frac{1}{n}\sum_{j\neq i} \left(\ell(w^{(i)}_t; z_{j,t}) - \ell(w_t;z_{j,t}) \right) + \frac{1}{n} \left(\ell(w^{(i)}_t;z_{i,t}) - \ell(w_t;z_{i,t})\right) \\
& + r(w^{(i)}_t) - r(w_t) + \frac{\gamma_t}{2} \|w^{(i)}_t - w_{t-1}\|^2 - \frac{\gamma_t}{2} \|w_t - w_{t-1}\|^2 \\
=& F^{(i)}_{t}(w^{(i)}_t) - F^{(i)}_{t}(w_t) + \frac{1}{n} \left(\ell(w^{(i)}_t;z_{i,t}) - \ell(w_t; z_{i,t})\right) - \frac{1}{n} \left(\ell(w^{(i)}_t;z'_{i,t}) - \ell(w_t;z'_{i,t})\right)\\
\le& \frac{1}{n} \left|\ell(w^{(i)}_t;z_{i,t}) - \ell(w_t;z_{i,t})\right| + \frac{1}{n} \left|\ell(w^{(i)}_t; z'_{i,t}) - \ell(w_t; z'_{i,t})\right| \\
\overset{\zeta_1}{\le}&  \frac{\|\nabla \ell(w^{(i)}_t;z_{i,t})\| + \|\nabla\ell(w_t;z'_{i,t})\|}{n} \|w^{(i)}_t - w_t\|\\
\overset{\zeta_2}{\le}& \frac{\sqrt{2L\ell(w^{(i)}_t;z_{i,t})} + \sqrt{2L\ell(w_t;z'_{i,t})}}{n} \|w^{(i)}_t - w_t\|,
\end{aligned}
\]
where ``$\zeta_1$'' is due to the convexity of loss and in ``$\zeta_2$''we have used Lemma~\ref{lemma:key_smooth}. The bound in Lemma~\ref{lemma:function_strong_convexity} implies
\[
F_t(w^{(i)}_t) - F_t(w_t) \ge \frac{\gamma_t}{2} \|w^{(i)}_t - w_t\|^2.
\]
Combining the preceding two inequalities yields
\[
\frac{\gamma_t}{2} \|w^{(i)}_t - w_t\| \le \frac{\sqrt{2L\ell(w^{(i)}_t;z_{i,t})} + \sqrt{2L\ell(w_t;z'_{i,t})}}{n},
\]
which immediately gives
\begin{equation}\label{inequat:proof_exp_risk_smooth_key_1}
\|w^{(i)}_t - w_t\| \le \frac{2\left(\sqrt{2L\ell(w^{(i)}_t;z_{i,t})} + \sqrt{2L\ell(w_t;z'_{i,t})}\right)}{\gamma_t n}.
\end{equation}
Let us now consider the following population risk and empirical risk over $S_t$ with respect to the loss function $\ell$:
\[
 R^\ell(w):=\mathbb{E}_{(x,y) \sim \mathcal{D}} [\ell(w; z)], \quad R^\ell_{S_t}(w):=\frac{1}{n}\sum_{i=1}^n\ell(w; z_{i,t}).
\]
Since $S_t$ and $S_t^{(i)}$ are both i.i.d. samples of the data distribution. It follows that
\[
\begin{aligned}
&\mathbb{E}_{S_t}\left[ R^\ell(w_t)\mid \mathcal{F}_{t-1} \right] = \mathbb{E}_{S_t\cup \{z'_{i,t}\}}\left[ \ell(w_t;z'_{i,t})\mid \mathcal{F}_{t-1} \right]\\
=& \mathbb{E}_{S_t^{(i)}}\left[ R^\ell (w^{(i)}_t) \mid \mathcal{F}_{t-1} \right] = \mathbb{E}_{S_t^{(i)}\cup \{z_{i,t}\}}\left[ \ell(w^{(i)}_t;z_{i,t}) \mid \mathcal{F}_{t-1} \right] \\
.
\end{aligned}
\]
Since the above holds for all $i=1,...,n$, we can further show that
\begin{equation}\label{inequat:proof_exp_risk_smooth_key_2}
\begin{aligned}
&\mathbb{E}_{S_t}\left[ R^\ell(w_t)\mid \mathcal{F}_{t-1} \right] \\
=& \frac{1}{n}\sum_{i=1}^n \mathbb{E}_{S_t^{(i)}\cup \{z_{i,t}\}}\left[ \ell(w^{(i)}_t;z_{i,t})\mid \mathcal{F}_{t-1} \right] = \frac{1}{n}\sum_{i=1}^n \mathbb{E}_{S_t\cup \{z'_{i,t}\}}\left[ \ell(w^{(i)}_t;z_{i,t}) \mid \mathcal{F}_{t-1} \right] \\
=&\frac{1}{n}\sum_{i=1}^n \mathbb{E}_{S_t\cup \{z'_{i,t}\}}\left[ \ell(w_t;z'_{i,t})\mid \mathcal{F}_{t-1}\right] = \frac{1}{n}\sum_{i=1}^n \mathbb{E}_{S^{(i)}_t\cup \{z_{i,t}\}}\left[ \ell(w_t;z'_{i,t})\mid \mathcal{F}_{t-1}\right].
\end{aligned}
\end{equation}
Regarding the empirical case, we find that
\[
\begin{aligned}
&\mathbb{E}_{S_t}\left[ R^\ell_{S_t}(w_t)\mid \mathcal{F}_{t-1}\right] \\
=&  \frac{1}{n}\sum_{i=1}^n \mathbb{E}_{S_t}\left[ \ell(w_t;z_{i,t}) \mid \mathcal{F}_{t-1} \right] = \frac{1}{n}\sum_{i=1}^n \mathbb{E}_{S_t\cup \{z'_{i,t}\}}\left[ \ell(w_t;z_{i,t}) \mid \mathcal{F}_{t-1} \right].
\end{aligned}
\]
Combining the preceding two equalities gives that
\[
\begin{aligned}
&\left|\mathbb{E}_{S_t}\left[ R(w_t) - R_{S_t}(w_t) \mid \mathcal{F}_{t-1} \right] \right| \\
=& \left|\mathbb{E}_{S_t}\left[ R^\ell(w_t) - R^\ell_{S_t}(w_t) \mid \mathcal{F}_{t-1} \right] \right| \\
=& \left| \frac{1}{n}\sum_{i=1}^n \mathbb{E}_{S_t\cup \{z'_{i,t}\}} \left[\ell(w^{(i)}_t;z_{i,t}) - \ell(w_t;z_{i,t}) \mid \mathcal{F}_{t-1} \right] \right| \\
\le& \frac{1}{n}\sum_{i=1}^n  \mathbb{E}_{S_t\cup \{z'_{i,t}\}} \left[\left|\ell(w^{(i)}_t;z_{i,t}) - \ell(w_t;z_{i,t})\right| \mid \mathcal{F}_{t-1} \right] \\
\le& \frac{1}{n}\sum_{i=1}^n  \mathbb{E}_{S_t\cup \{z'_{i,t}\}} \left[ \sqrt{2 L\ell(w^{(i)}_t;z_{i,t})}\|w^{(i)}_t - w_t\| \mid \mathcal{F}_{t-1} \right] \\
\overset{\eqref{inequat:proof_exp_risk_smooth_key_1}}{\le}& \frac{1}{n}\sum_{i=1}^n  \mathbb{E}_{S^{(i)}_t\cup \{z_{i,t}\}} \left[ \frac{4L\ell(w^{(i)}_t;z_{i,t})}{\gamma_t n} + \frac{4L\sqrt{\ell(w^{(i)}_t;z_{i,t})\ell(w_t;z'_{i,t})}}{\gamma_t n}  \mid \mathcal{F}_{t-1} \right]\\
\overset{\zeta_1}{\le}& \left(\frac{L}{\gamma_t n} \right)\frac{1}{n}\sum_{i=1}^n  \mathbb{E}_{S_t\cup \{z'_{i,t}\}} \left[6\ell(w^{(i)}_t;z_{i,t}) + 2\ell(w_t;z'_{i,t}) \mid \mathcal{F}_{t-1} \right] \\
\overset{\eqref{inequat:proof_exp_risk_smooth_key_2}}{=}& \frac{8L}{\gamma_t n} \mathbb{E}_{S_t}\left[ R^\ell(w_t) \mid \mathcal{F}_{t-1} \right] \le \frac{8L}{\gamma_t n} \mathbb{E}_{S_t}\left[ R(w_t) \mid \mathcal{F}_{t-1} \right],
\end{aligned}
\]
where in ``$\zeta_1$'' we have used the fact $a^2 +b^2\ge 2ab$ and the last inequality is due to the fact $r\ge 0$.

Let us now denote $w^*_t=\argmin_{w\in W^*}\|w-w_t\|$. Conditioned on $\mathcal{F}_{t-1}$, taking expectation on both sides of the bound in Lemma~\ref{lemma:function_strong_convexity} for $w=w^*_{t-1}$ yields
\[
\begin{aligned}
&\mathbb{E}_{S_t}\left[R_{S_t}(w_t) -  R^*\mid \mathcal{F}_{t-1}\right] \\
\le& \frac{\gamma_t}{2}\mathbb{E}_{S_t}\left[ \|w^*_{t-1} - w_{t-1} \|^2 - \|w^*_{t-1} - w_t\|^2 - \|w_t - w_{t-1}\|^2\mid \mathcal{F}_{t-1} \right] \\
\le& \frac{\gamma_t}{2} \left(\|w^*_{t-1} - w_{t-1} \|^2 - \mathbb{E}_{S_t}\left[\|w^*_t - w_t\|^2 \mid \mathcal{F}_{t-1} \right]\right).
\end{aligned}
\]
Combining the preceding two inequalities yields
\[
\begin{aligned}
&\mathbb{E}_{S_t}\left[R(w_t) -  R^*\mid \mathcal{F}_{t-1}\right] \\
=& \mathbb{E}_{S_t}\left[R(w_t) - R_{S_t}(w_t) + R_{S_t}(w_t) -  R^*\mid \mathcal{F}_{t-1} \right] \\
\le& \left|\mathbb{E}_{S_t}\left[R(w_t) - R_{S_t}(w_t)\mid \mathcal{F}_{t-1} \right]\right| + \mathbb{E}_{S_t}\left[R_{S_t}(w_t) -  R^* \mid \mathcal{F}_{t-1} \right] \\
\le& \frac{\gamma_t}{2}\left( \|w^*_{t-1} - w_{t-1} \|^2 - \mathbb{E}_{S_t}\left[\|w^*_t - w_t\|^2\mid \mathcal{F}_{t-1} \right]\right) + \frac{8L}{\gamma_t n} \mathbb{E}_{S_t}\left[ R(w_t) \mid \mathcal{F}_{t-1} \right] \\
=& \frac{\gamma_t}{2}\left( \|w_{t-1}^* - w_{t-1} \|^2 - \mathbb{E}_{S_t}\left[\|w^*_t - w_t\|^2\mid \mathcal{F}_{t-1} \right]\right) + \frac{8L}{\gamma_t n} \mathbb{E}_{S_t}\left[ R(w_t) - R^* \mid \mathcal{F}_{t-1} \right] + \frac{8L}{\gamma_t n} \mathbb{E}_{S_t}\left[R^* \right] \\
\le& \frac{\gamma_t}{2}\left( \|w^*_{t-1} - w_{t-1} \|^2 - \mathbb{E}_{S_t}\left[\|w^*_t - w_t\|^2 \mid \mathcal{F}_{t-1} \right]\right) + \frac{1}{2}\mathbb{E}_{S_t}\left[ R(w_t) - R^* \mid \mathcal{F}_{t-1} \right] + \frac{8L}{\gamma_t n} R^*,
\end{aligned}
\]
where in the last inequality we have used the condition $\gamma_t \ge \frac{52 L}{n}$. After rearranging the terms in the above inequality we obtain
\[
\begin{aligned}
\mathbb{E}_{S_t}\left[R(w_t) -  R^* \mid \mathcal{F}_{t-1} \right] \le& \gamma_t\left(\|w^*_{t-1} - w_{t-1} \|^2 - \mathbb{E}_{S_t}\left[\|w^*_t - w_t\|^2\mid \mathcal{F}_{t-1} \right]\right) + \frac{16L}{\gamma_t n} R^* \\
=& \gamma_t\left(D^2(w_{t-1}, W^*) - \mathbb{E}_{S_t}\left[D^2(w_t,W^*)\mid \mathcal{F}_{t-1} \right]\right) + \frac{16L}{\gamma_t n} R^*.
\end{aligned}
\]
This implies the desired bound.
\end{proof}

The following lemma is a direct consequence of Lemma~\ref{lemma:telescope_smooth}.
\begin{lemma}\label{lemma:distance_bound}
Suppose that the Assumptions~\ref{assump:smooth} holds. Set $\gamma_t \ge \frac{16 L}{n}$. Then the following holds for all $t\ge 1$:
\[
\mathbb{E}\left[D^2( w_t,  W^*)\right] \le D^2( w_0,  W^*) + \sum_{\tau = 1}^t\frac{16L}{\gamma_\tau^2 n} R^*.
\]
\end{lemma}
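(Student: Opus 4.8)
\textbf{Proof proposal for Lemma~\ref{lemma:distance_bound}.} The plan is to turn the one-step inequality of Lemma~\ref{lemma:telescope_smooth} into a monotone-up-to-noise recursion for $\mathbb{E}[D^2(w_t,W^*)]$ and then unroll it. The starting observation is that $R(w_t)-R^* \ge 0$ for every $t$, since $R^*=\min_{w\in\mathcal{W}}R(w)$; hence the left-hand side of the bound in Lemma~\ref{lemma:telescope_smooth} is nonnegative, which gives
\[
0 \le \gamma_t\left(D^2(w_{t-1},W^*) - \mathbb{E}\left[D^2(w_t,W^*)\mid \mathcal{F}_{t-1}\right]\right) + \frac{16L}{\gamma_t n}R^*.
\]

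Next I would divide through by $\gamma_t>0$ (the assumption $\gamma_t\ge 16L/n>0$ guarantees this), rearrange to isolate the conditional second moment, and obtain the per-step contraction-free recursion
\[
\mathbb{E}\left[D^2(w_t,W^*)\mid \mathcal{F}_{t-1}\right] \le D^2(w_{t-1},W^*) + \frac{16L}{\gamma_t^2 n}R^*.
\]
Taking total expectation on both sides and invoking the tower property ($\mathbb{E}[\mathbb{E}[\cdot\mid\mathcal{F}_{t-1}]]=\mathbb{E}[\cdot]$) yields $\mathbb{E}[D^2(w_t,W^*)] \le \mathbb{E}[D^2(w_{t-1},W^*)] + \frac{16L}{\gamma_t^2 n}R^*$.

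Finally, I would iterate this inequality from step $t$ down to step $0$, summing the residual terms $\frac{16L}{\gamma_\tau^2 n}R^*$ over $\tau=1,\dots,t$; since $w_0$ is deterministic we have $\mathbb{E}[D^2(w_0,W^*)]=D^2(w_0,W^*)$, which produces exactly the claimed bound
\[
\mathbb{E}\left[D^2(w_t,W^*)\right] \le D^2(w_0,W^*) + \sum_{\tau=1}^t\frac{16L}{\gamma_\tau^2 n}R^*.
\]
There is no real obstacle here: the only points requiring a word of care are (i) the use of nonnegativity of the excess risk to discard the $\mathbb{E}[R(w_t)-R^*\mid\mathcal{F}_{t-1}]$ term, and (ii) the passage between conditional and unconditional expectations before unrolling. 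Everything else is a routine telescoping sum.
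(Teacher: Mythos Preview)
Your proposal is correct and follows essentially the same approach as the paper: the paper also uses $R(w_t)\ge R^*$ to drop the left-hand side of Lemma~\ref{lemma:telescope_smooth}, divides by $\gamma_t$ to obtain the one-step recursion $\mathbb{E}[D^2(w_t,W^*)\mid\mathcal{F}_{t-1}]\le D^2(w_{t-1},W^*)+\frac{16L}{\gamma_t^2 n}R^*$, and then unfolds it back to $t=0$. Your write-up simply makes explicit the tower-property step that the paper leaves implicit.
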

\begin{proof}
Since $R(w_t) \ge R^*$ and $\gamma_t\ge \frac{52L}{n}$, the bound in Lemma~\ref{lemma:telescope_smooth} immediately implies that
\begin{equation}\label{equat:proof_iterstability_key}
\mathbb{E}_{S_t}\left[D^2( w_t,  W^*)\mid \mathcal{F}_{t-1} \right] \le D^2( w_{t-1},  W^*) + \frac{16L}{\gamma^2_t n}  R^*.
\end{equation}
By unfolding the above recurrent from time instance $t$ to zero we obtain that for all $t\ge 1$,
\[
\mathbb{E}\left[D^2( w_t,  W^*)\right] \le D^2( w_0,  W^*) + \sum_{\tau = 1}^t\frac{16L}{\gamma_\tau^2 n} R^*.
\]
This proves the desired bound.
\end{proof}

With all these lemmas in place, we are now ready to prove the main result in Theorem~\ref{thrm:fast_rate_smooth}.

\vspace{0.1in}

\begin{proof}[Proof of Theorem~\ref{thrm:fast_rate_smooth}] \textbf{Part (a):} Note that the condition on $n$ implies $\gamma_t = \frac{\lambda \rho t}{4} \ge \frac{\lambda\rho}{4}\ge \frac{16 L}{n}$. Applying Lemma~\ref{lemma:telescope_smooth} along with the condition $R(w_t) -  R^*\ge \frac{\lambda}{2} D^2(w_t, W^*)$ yields
\[
\begin{aligned}
&(1-\rho)\mathbb{E}\left[R(w_t) -  R^*\mid \mathcal{F}_{t-1} \right] \\
\le& \gamma_t D^2(w_{t-1}, W^*) - \left(\gamma_t + \frac{\lambda\rho}{2}\right)\mathbb{E}\left[D^2(w_t,W^*)\mid \mathcal{F}_{t-1} \right] + \frac{2^4L}{\gamma_t n}R^*\\
\le& \frac{\lambda \rho t}{4} D^2(w_{t-1}, W^*) - \frac{\lambda\rho(t+2)}{4}\mathbb{E}\left[D^2(w_t, W^*) \mid \mathcal{F}_{t-1} \right] + \frac{2^6L}{\lambda \rho t n} R^* \\
\le& \frac{\lambda\rho t}{4} D^2(w_{t-1}, W^*) - \frac{\lambda\rho(t+2)}{4}\mathbb{E}\left[D^2(w_t, W^*)\mid \mathcal{F}_{t-1}\right] + \frac{2^{7}L}{\lambda\rho (t+1) n} R^*,
\end{aligned}
\]
where in the last inequality we have used $\frac{1}{t}\le \frac{2}{t+1}$ for $t\ge 1$. The above inequality implies
\[
\begin{aligned}
&t \mathbb{E}\left[R(w_t) -  R^*\mid \mathcal{F}_{t-1}\right]\\
\le& (t+1) \mathbb{E}\left[R(w_t) -  R^*\mid \mathcal{F}_{t-1}\right] \\
\le& \frac{\lambda \rho t(t+1)}{4(1-\rho)} D^2(w_{t-1}, W^*) - \frac{\lambda\rho(t+1)(t+2)}{4(1-\rho)}\mathbb{E}\left[D^2(w_t, W^*) \mid \mathcal{F}_{t-1} \right] + \frac{2^{7}L}{\lambda n\rho(1-\rho)}R^*.
\end{aligned}
\]
Then based on the law of total expectation and after proper rearrangement we obtain
\begin{equation}\label{inequat:proof_key_recursion_smooth}
\begin{aligned}
&t \mathbb{E}\left[R(w_t) -  R^*\right]\\
\le& \frac{\lambda \rho t(t+1)}{4(1-\rho)} \mathbb{E}\left[D^2(w_{t-1}, W^*)\right] - \frac{\lambda\rho(t+1)(t+2)}{4(1-\rho)}\mathbb{E}\left[D^2(w_t, W^*) \right] + \frac{2^{7}L}{\lambda n\rho(1-\rho)} R^*.
\end{aligned}
\end{equation}
By summing the above inequality from $t = 1,...,T$ and after normalization we obtain
\[
\begin{aligned}
\frac{2}{T(T+1)}\sum_{t=1}^T t \mathbb{E} \left[R(w_t) -  R^*\right] \le& \frac{\lambda\rho}{T(T+1)(1-\rho)}D^2(w_0, W^*) + \frac{2^{8}L}{\lambda\rho(1-\rho)(T+1)n} R^*\\
\le& \frac{2\lambda\rho}{T(T+1)}D^2(w_0, W^*) + \frac{2^{9}L}{\lambda\rho(T+1)n} R^*,
\end{aligned}
\]
where in the last inequality we have used $\rho \le 0.5$. Consider the weighted output $\bar w_T = \frac{2}{T(T+1)}\sum_{t=1}^T t w_t$. In view of the above inequality and the convexity and quadratic growth property of the risk function $R$ we have
\[
\mathbb{E} \left[R(\bar w_T) -  R^*\right] \le \frac{4\rho\left[R(w_0) - R^*\right]}{T(T+1)} + \frac{2^{9}L}{\lambda \rho n(T+1)} R^*,
\]
which then implies the desired bound in part (a).

\vspace{0.1in}

\textbf{Part (b):} Note that $\gamma_t = \frac{\lambda \rho t}{4} + \frac{16 L}{n} \ge \frac{16 L}{n}$ for all $t\ge 1$. According to Lemma~\ref{lemma:distance_bound} we have the following holds for all $t\ge 1$:
\begin{equation}\label{inequat:proof_key_partb_iteration_bound}
\begin{aligned}
&\mathbb{E}\left[D^2( w_t,  W^*)\right] \\
\le& D^2( w_0,  W^*) + \sum_{\tau = 1}^t\frac{16L}{\gamma_\tau^2 n} R^* \le D^2( w_0,  W^*) + \frac{2^{8}L}{\lambda^2\rho^2 n} R^*\sum_{\tau = 1}^t\frac{1}{\tau^2} \le D^2( w_0,  W^*) + \frac{2^{9}L}{\lambda^2\rho^2 n} R^*.
\end{aligned}
\end{equation}
Similar to the argument in part (a), applying Lemma~\ref{lemma:telescope_smooth} along with the quadratic growth condition $R(w_t) -  R^*\ge \frac{\lambda}{2} D^2(w_t, W^*)$ and $\rho\le0.5$ yields
\[
\begin{aligned}
&\frac{1}{2}\mathbb{E}\left[R(w_t) -  R^*\mid \mathcal{F}_{t-1} \right] \\
\le&(1-\rho)\mathbb{E}\left[R(w_t) -  R^*\mid \mathcal{F}_{t-1} \right] \\
\le& \gamma_t D^2(w_{t-1}, W^*) - \left(\gamma_t + \frac{\lambda\rho}{2}\right)\mathbb{E}\left[D^2(w_t,W^*)\mid \mathcal{F}_{t-1} \right] + \frac{2^4L}{\gamma_t n} R^*\\
\le& \frac{\lambda \rho t}{4} D^2(w_{t-1}, W^*) - \frac{\lambda\rho (t+2)}{4}\mathbb{E}\left[D^2(w_t, W^*) \mid \mathcal{F}_{t-1} \right] \\
&+ \frac{16L}{n}\left(D^2(w_{t-1}, W^*) - \mathbb{E}\left[D^2(w_t, W^*) \mid \mathcal{F}_{t-1} \right]\right) + \frac{2^6 L}{\lambda \rho t n} R^*,
\end{aligned}
\]
where in the second inequality we have used $\gamma_t\ge \frac{52L}{n}$, and in the last inequality we have used $\gamma_t\ge \frac{\lambda\rho t}{4}$. Then based on the law of total expectation and after proper rearrangement we have
\[
\begin{aligned}
&\mathbb{E}\left[R(w_t) -  R^*\right]\\
\le& \frac{\lambda \rho t}{2} \mathbb{E}\left[D^2(w_{t-1}, W^*)\right] - \frac{\lambda\rho (t+2)}{2}\mathbb{E}\left[D^2(w_t, W^*) \right] \\
&+ \frac{2^5L}{n}\left(\mathbb{E}\left[D^2(w_{t-1}, W^*)\right] - \mathbb{E}\left[D^2(w_t, W^*)\right]\right) + \frac{2^{7}L}{\lambda t n\rho} R^*,
\end{aligned}
\]
which implies that
\[
\begin{aligned}
&t\mathbb{E}\left[R(w_t) -  R^*\right]\\
\le& (t+1)\mathbb{E}\left[R(w_t) -  R^*\right]\\
\le& \frac{\lambda \rho t(t+1)}{2} \mathbb{E}\left[D^2(w_{t-1}, W^*)\right] - \frac{\lambda\rho (t+1)(t+2)}{2}\mathbb{E}\left[D^2(w_t, W^*) \right] \\
&+ \frac{2^5 L(t+1)}{n}\left(\mathbb{E}\left[D^2(w_{t-1}, W^*)\right] - \mathbb{E}\left[D^2(w_t, W^*)\right]\right) + \frac{2^{7}L(t+1)}{\lambda t n\rho} R^*\\
\le & \frac{\lambda \rho t(t+1)}{2} \mathbb{E}\left[D^2(w_{t-1}, W^*)\right] - \frac{\lambda\rho (t+1)(t+2)}{2}\mathbb{E}\left[D^2(w_t, W^*) \right] \\
&+ \frac{2^6 Lt}{n}\left(\mathbb{E}\left[D^2(w_{t-1}, W^*)\right] - \mathbb{E}\left[D^2(w_t, W^*)\right]\right) + \frac{2^{8}L}{\lambda n\rho} R^*,
\end{aligned}
\]
where in the last inequality we have used the fact $t+1\le 2t$ as $t\ge 1$. By summing the above inequality from $t = 1,...,T$ and after normalization we obtain
\[
\begin{aligned}
&\frac{2}{T(T+1)}\sum_{t=1}^T t \mathbb{E} \left[R(w_t) -  R^*\right] \\
\le& \frac{2\lambda\rho}{T(T+1)}D^2(w_0, W^*) + \frac{2^7L}{nT(T+1)}\sum_{t=1}^T D^2(w_{t-1}, W^*) + \frac{2^{9}L}{\lambda\rho(T+1)n} R^*\\
\le& \frac{2\lambda\rho}{T(T+1)}D^2(w_0, W^*) + \frac{2^7L}{nT(T+1)}\sum_{t=1}^T \left(D^2( w_0,  W^*) + \frac{2^{9}L}{\lambda^2\rho^2 n} R^*\right) + \frac{2^{9}L}{\lambda\rho(T+1)n} R^*\\
=& \left(\frac{2\lambda\rho}{T(T+1)} + \frac{2^7L}{n(T+1)}\right)D^2( w_0,  W^*) + \left(\frac{2^{16}L^2}{\lambda^2\rho^2n^2(T+1)}+\frac{2^{9}L}{\lambda\rho n(T+1)}\right)R^*,
\end{aligned}
\]
where in the last inequality we have used~\eqref{inequat:proof_key_partb_iteration_bound}. Using the convexity and quadratic growth property in the above inequality yields
\[
\mathbb{E} \left[R(\bar w_T) -  R^*\right] \le \left(\frac{4\rho}{T(T+1)} + \frac{2^{8}L}{\lambda n(T+1)}\right)[R(w_0) - R^*]  + \left(\frac{2^{16}L^2}{\lambda^2\rho^2n^2(T+1)}+\frac{2^{9}L}{\lambda\rho n(T+1)}\right)R^*,
\]
which then implies the desired bound in part (b). The proof is concluded.
\end{proof}

\newpage

\subsection{Proof of Theorem~\ref{thrm:fast_rate_smooth_initialization}}
\label{apdsect:proof_fast_rate_smooth_initialization_corollary}

In this subsection we prove Theorem~\ref{thrm:fast_rate_smooth_initialization} which is restated below.
\FastRateSmoothInitialization*

\vspace{-0.2in}

\begin{proof}
\textbf{Part (a):}  In Phase-I, by invoking the first part of Theorem~\ref{thrm:fast_rate_smooth} with $\rho=1/2$ and $T=n/m \ge 1$ (with slight abuse of notation) we get immediately that
\begin{equation}\label{inequat:proof_key_initialbound_smooth}
\mathbb{E}_{S_1} \left[R(w_1) -  R^*\right] \le \frac{2m^2\left[R(w_0) -  R^*\right]}{n^2} + \frac{2^{10}L}{\lambda n}  R^*.
\end{equation}
In Phase-II, conditioned on $\mathcal{F}_1$, summing the recursion form~\eqref{inequat:proof_key_recursion_smooth} from $t = 2,...,T$ with $\rho=1/2$ and proper normalization yields
\[
\begin{aligned}
&\frac{2}{(T-1)(T+2)}\sum_{t=2}^T t \mathbb{E}_{S_{2:t}} \left[R(w_t) -  R^*\mid \mathcal{F}_1 \right] \\
\le& \frac{6\lambda D^2(w_1, W^*)}{(T-1)(T+2)} + \frac{2^{10}L}{\lambda n(T+2)} R^* \le \frac{3\left(R(w_1) -  R^*\right)}{(T-1)(T+2)} + \frac{2^{10}L}{\lambda n(T+2)} R^*,
\end{aligned}
\]
where in the last inequality we have used the quadratic growth property. Consider the weighted average output $\bar w_T = \frac{2}{(T-1)(T+2)}\sum_{t=2}^T t w_t$. Based on the above inequality and law of total expectation we must have
\[
\begin{aligned}
\mathbb{E} \left[R(\bar w_T) -  R^*\right] \le& \frac{6\mathbb{E}_{S_1} \left[R(w_1) -  R^*\right]}{(T-1)(T+2)} + \frac{2^{10}L}{\lambda n(T+2)} R^*\\
 \le& \frac{6\mathbb{E}_{S_1} \left[R(w_1) -  R^*\right]}{T^2} + \frac{2^{102}L}{\lambda nT} R^*\\
\le& \frac{12m^2\left[R(w_0) -  R^*\right]}{n^2T^2} + \frac{2^{13}L}{\lambda n T}  R^* \\
\le & \frac{2^{22}L^2\left[R(w_0) -  R^*\right]}{\lambda^2 n^2T^2} + \frac{2^{13}L}{\lambda nT}  R^*,
\end{aligned}
\]
where we have used the fact $T\ge 2$ in multiple places and in the last but one step we have used~\eqref{inequat:proof_key_initialbound_smooth}. This immediately implies the desired bound in Part (a).

\vspace{0.1in}

\textbf{Part (b):}  In Phase-I, by applying second part of Theorem~\ref{thrm:fast_rate_smooth} (with $\rho=1/2$ and $T=n/m \ge 1$) and preserving the leading terms we obtain that
\begin{equation}\label{inequat:proof_key_initialbound_smooth_b}
\begin{aligned}
\mathbb{E}_{S_1} \left[R(w_1) -  R^*\right] \lesssim& \left(\frac{m^2}{n^2} + \frac{L}{\lambda n}\right)[R(w_0) - R^*]  + \left(\frac{L^2}{\lambda^2mn}+\frac{L}{\lambda n}\right)R^*\\
\lesssim& \frac{L}{\lambda n}[R(w_0) - R^*]  + \frac{L^2}{\lambda^2n}R^*.
\end{aligned}
\end{equation}
In Phase-II, based on the proof argument of the part (b) of Theorem~\ref{thrm:fast_rate_smooth} we can show that the weighted average output $\bar w_T = \frac{2}{(T-1)(T+2)}\sum_{t=2}^T t w_t$ satisfies
\[
\begin{aligned}
\mathbb{E} \left[R(\bar w_T) -  R^*\right] \lesssim & \left(\frac{1}{T^2} + \frac{L}{\lambda nT}\right) \mathbb{E}_{S_1} \left[R(w_1) -  R^*\right] + \left(\frac{L^2}{\lambda^2n^2T}+\frac{L}{\lambda nT}\right)R^*\\
\lesssim& \left(\frac{L}{\lambda nT^2} + \frac{L^2}{\lambda^2 n^2T}\right)[R(w_0) - R^*] + \left(\frac{L^3}{\lambda^3n^2T}+\frac{L^2}{\lambda^2 nT}\right)R^* \\
\lesssim& \frac{L^2}{\lambda^2nT}[R(w_0) - R^*] + \frac{L^3}{\lambda^3nT}R^*,
\end{aligned}
\]
where in the second step we have used~\eqref{inequat:proof_key_initialbound_smooth_b}. This proves the desired bound in Part (b).
\end{proof}

\subsection{Proof of Theorem~\ref{thrm:fast_rate_smooth_convex}}
\label{apdsect:proof_fast_rate_smooth_convex}

In this subsection, we prove Theorem~\ref{thrm:fast_rate_smooth_convex} as following restated.
\FastRateSmoothConvex*

\begin{proof}
Since $\gamma_t \equiv \gamma \ge  \frac{16 L}{n}$, the bound in Lemma~\ref{lemma:telescope_smooth} is valid. Based on law of total expectation and by summing that inequality from $t = 1,...,T$ with proper normalization we obtain
\[
\begin{aligned}
&\frac{1}{T}\sum_{t=1}^T \mathbb{E} \left[R(w_t) -  R^*\right]\le \frac{\gamma}{T}D^2(w_0, W^*) + \frac{16L}{\gamma n} R^*.
\end{aligned}
\]
Consider $\bar w_T = \frac{1}{T}\sum_{t=1}^T w_t$. In view of the above inequality and convexity of $R$ we have
\[
\mathbb{E} \left[R(\bar w_T) -  R^*\right] \le \frac{\gamma}{T}D^2(w_0, W^*) + \frac{16 L}{\gamma n}  R^*.
\]
This proves the first desired bound. The second bound follows immediately by substituting $\gamma =\sqrt{\frac{T}{n}}+ \frac{16 L}{n} > \frac{16 L}{n}$ into the above bound. The proof is concluded.
\end{proof}

\subsection{On the (Iteration) Stability of M-SPP}
\label{sect:robustness}

In this appendix subsection, we further provide a sensitivity analysis of M-SPP to the choice of regularization modulus $\{\gamma_t\}_{t\ge 1}$, under the following notion of iteration stability essentially introduced by~\citet{asi2019importance,asi2019stochastic}.
\begin{definition}
A stochastic optimization algorithm generating iterates $\{w_t\}_{t\ge 1}$ for minimizing the population risk $R(w)$ is staid to be stable if
\[
\sup_{t\ge 1} D(w_t, W^*) < \infty, \ \ \ \text{with probability 1}.
\]
\end{definition}
Before presenting the main results on the iteration stability of M-SPP, we first recall the Robbins-Siegmund nonnegative almost supermartingale convergence lemma which is typically used for establishing the stability and convergence of stochastic optimization methods including SPP~\citep{asi2019stochastic}.
\begin{lemma}[\cite{robbins1971convergence}]\label{lemma:robbins-siegmund}
Consider four sequences of nonnegative random variables $\{U_t\},\{V_t\}, \{\alpha_t\}, \{\beta_t\}$ that are measurable over a filtration $\{\mathcal{F}_t\}_{t\ge 0}$. Suppose that $\sum_t \alpha_t<\infty$, $\sum_t \beta_t < \infty$, and
\[
\mathbb{E}[U_{t+1}\mid \mathcal{F}_t] \le (1+\alpha_t) U_t + \beta_t - V_t.
\]
Then there exits $U_{\infty}$ such that $U_t \xrightarrow{a.s.} U_{\infty}$ and $\sum_t V_t < \infty$ with probability 1.
\end{lemma}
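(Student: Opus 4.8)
The plan is to turn the almost-supermartingale inequality into a genuine nonnegative supermartingale by absorbing the multiplicative factors $(1+\alpha_t)$ through a deterministic rescaling, and then invoke the classical nonnegative supermartingale convergence theorem.

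First I would set $a_t := \prod_{s=1}^{t}(1+\alpha_s)$ with $a_0 := 1$. Since $\sum_t\alpha_t<\infty$, the sequence $\{a_t\}$ is nondecreasing and converges to a finite limit $a_\infty\in[1,\infty)$, hence $1\le a_t\le a_\infty$ for all $t$. Dividing the recursion in the hypothesis by $a_t$ and writing $\tilde U_t:=U_t/a_{t-1}$, $\tilde\beta_t:=\beta_t/a_t$, $\tilde V_t:=V_t/a_t$, the recursion reduces to $\mathbb{E}[\tilde U_{t+1}\mid\mathcal{F}_t]\le \tilde U_t+\tilde\beta_t-\tilde V_t$, where the multiplicative factor is now gone, the four rescaled sequences are still nonnegative and adapted, and $\sum_t\tilde\beta_t\le\sum_t\beta_t<\infty$ because $a_t\ge 1$.

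Next I would form $Y_t:=\tilde U_t+\sum_{s=1}^{t-1}(\tilde V_s-\tilde\beta_s)$ and verify directly from the reduced recursion that $\mathbb{E}[Y_{t+1}\mid\mathcal{F}_t]\le Y_t$, so $\{Y_t\}$ is a supermartingale; it is bounded below by $-\sum_{s\ge1}\tilde\beta_s$ but is not nonnegative. To make the convergence theorem applicable I would \emph{localize}: for each integer $k$ let $\tau_k$ be the first time the running sum $\sum_{s\le t}\tilde\beta_s$ exceeds $k$, so that the stopped process $Y_{t\wedge\tau_k}+k$ is a genuine nonnegative supermartingale and therefore converges a.s.; since $\sum_s\tilde\beta_s<\infty$ a.s. we have $\tau_k\uparrow\infty$ a.s., and letting $k\to\infty$ gives that $Y_t$ converges a.s. to a finite limit. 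Because $\sum_s\tilde\beta_s$ converges, it follows that $\tilde U_t+\sum_{s=1}^{t-1}\tilde V_s$ converges a.s.; as $\tilde U_t\ge0$ and the partial sums $\sum_{s=1}^{t-1}\tilde V_s$ are nondecreasing, boundedness of their sum forces $\sum_{s=1}^{\infty}\tilde V_s<\infty$ a.s., and then $\tilde U_t$ itself converges a.s. as a difference of two a.s.-convergent sequences. Undoing the rescaling, $U_t=a_{t-1}\tilde U_t\to a_\infty\tilde U_\infty=:U_\infty$ a.s., and $\sum_t V_t=\sum_t a_t\tilde V_t\le a_\infty\sum_t\tilde V_t<\infty$ a.s., which are exactly the two claimed conclusions.

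The step I expect to need the most care is this reduction together with its sign bookkeeping: the rescaling must be indexed so that $(1+\alpha_t)$ is exactly cancelled while $\tilde\beta_t$ and $\tilde V_t$ stay nonnegative, and one must deal with $\{Y_t\}$ being only bounded below rather than nonnegative -- which is precisely the role of the stopping-time localization, and also the point at which the finiteness of $\sum_t\beta_t$ (read almost surely, since the $\beta_t$ may be random) is used pathwise rather than in expectation. The remaining manipulations are routine.
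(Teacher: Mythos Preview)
Your proposal is a correct and essentially standard proof of the Robbins--Siegmund lemma: the deterministic rescaling by $a_t=\prod_{s\le t}(1+\alpha_s)$ to kill the multiplicative factor, followed by the Doob-type supermartingale argument with a localization to handle the lack of nonnegativity, is exactly the classical route. Your indexing is consistent (dividing by $a_t$ and using $a_t=(1+\alpha_t)a_{t-1}$ correctly turns $(1+\alpha_t)U_t/a_t$ into $\tilde U_t$), and you rightly flag that $\sum_t\beta_t<\infty$ must be read almost surely when the $\beta_t$ are random.

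There is nothing to compare here, however: the paper does not supply its own proof of this lemma. It is quoted verbatim as a classical result of Robbins and Siegmund (1971) and invoked as a black box in the proof of Proposition~\ref{prop:stability}. So your write-up is not an alternative to the paper's argument but rather a self-contained justification of a cited fact; as such it is fine, and no correction is needed.
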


The following proposition shows that the sequence of estimation error $\{\|w_t - w^*\|\}$ is non-divergent in expectation and it converges to some finite value and is bounded with probability $1$.
\begin{proposition}\label{prop:stability}
Suppose that the Assumptions~\ref{assump:smooth} holds. Assume that $\gamma_t \ge \frac{16 L}{n}$ and $\sum_{t\ge 1} L \gamma_t^{-2}<\infty$. Then we have the following hold:
\begin{itemize}[leftmargin=0.68cm]
  \item[(a)] $\mathbb{E}\left[D( w_t,  W^*)\right] <\infty$;
  \item[(b)] $D( w_t,  W^*)$ converges to some finite value and $\sup_{t\ge 1} D( w_t,  W^*) < \infty$ with probability $1$.
\end{itemize}
\end{proposition}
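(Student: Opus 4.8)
The plan is to handle the two parts by separate, standard arguments, both building directly on the lemmas already established for the telescoped distance recursion. Part (a) is essentially immediate: since $\gamma_t\ge 16L/n$, Lemma~\ref{lemma:distance_bound} applies and gives, for every $t\ge1$,
\[
\mathbb{E}\left[D^2(w_t, W^*)\right] \le D^2(w_0, W^*) + \frac{16 R^*}{n}\sum_{\tau=1}^{t}\frac{L}{\gamma_\tau^2} \le D^2(w_0, W^*) + \frac{16 R^*}{n}\sum_{\tau\ge1}\frac{L}{\gamma_\tau^2},
\]
and the right-hand side is finite because $R^*$ and $n$ are fixed and $\sum_{t\ge1}L\gamma_t^{-2}<\infty$ by hypothesis. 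Jensen's inequality (or Cauchy--Schwarz) then yields $\mathbb{E}[D(w_t,W^*)]\le\sqrt{\mathbb{E}[D^2(w_t,W^*)]}<\infty$, which is the claim in (a).

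For part (b), the plan is to invoke the Robbins--Siegmund almost-supermartingale convergence lemma (Lemma~\ref{lemma:robbins-siegmund}) on a rearranged form of Lemma~\ref{lemma:telescope_smooth}. Dividing the bound
\[
\mathbb{E}\left[R(w_t)-R^*\mid\mathcal{F}_{t-1}\right] \le \gamma_t\left(D^2(w_{t-1},W^*) - \mathbb{E}\left[D^2(w_t,W^*)\mid\mathcal{F}_{t-1}\right]\right) + \frac{16L}{\gamma_t n}R^*
\]
by $\gamma_t>0$ and regrouping gives the recursion
\[
\mathbb{E}\left[D^2(w_t,W^*)\mid\mathcal{F}_{t-1}\right] \le D^2(w_{t-1},W^*) + \frac{16L}{\gamma_t^2 n}R^* - \frac{1}{\gamma_t}\mathbb{E}\left[R(w_t)-R^*\mid\mathcal{F}_{t-1}\right].
\]
This is exactly the hypothesis of Lemma~\ref{lemma:robbins-siegmund} with $U_t := D^2(w_t,W^*)$, $\alpha_t\equiv0$, $\beta_t := \frac{16L}{\gamma_{t+1}^2 n}R^*$, and $V_t := \frac{1}{\gamma_{t+1}}\mathbb{E}[R(w_{t+1})-R^*\mid\mathcal{F}_t]$: all four sequences are nonnegative (using $R(w)\ge R^*\ge0$), adapted to $\{\mathcal{F}_t\}$ (in particular $V_t$ is a conditional expectation given $\mathcal{F}_t$, hence $\mathcal{F}_t$-measurable), and $\sum_t\beta_t = \frac{16R^*}{n}\sum_t L\gamma_{t+1}^{-2}<\infty$ while $\sum_t\alpha_t=0$. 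Lemma~\ref{lemma:robbins-siegmund} then produces a finite random variable $U_\infty$ with $D^2(w_t,W^*)\to U_\infty$ almost surely (and $\sum_t V_t<\infty$ a.s.). Taking square roots, $D(w_t,W^*)\to\sqrt{U_\infty}<\infty$ with probability one, and since a convergent real sequence is bounded, $\sup_{t\ge1}D(w_t,W^*)<\infty$ a.s., which completes part (b).

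The difficulty here is not conceptual but a matter of careful bookkeeping when matching Lemma~\ref{lemma:telescope_smooth} to the Robbins--Siegmund template. The points to verify are: that $V_t$ is genuinely nonnegative and $\mathcal{F}_t$-adapted (which rests only on $R(w_{t+1})\ge R^*$, the definition of $R^*$); that the problem is well posed in the sense that $W^*$ is nonempty and $R^*$ finite, so $D(\cdot,W^*)$ and the recursion make sense (standing assumptions of the paper); and that the one-step index shift between the "$t$" in Lemma~\ref{lemma:telescope_smooth} and the "$t$" in Lemma~\ref{lemma:robbins-siegmund} is treated consistently, so that the summability of $\beta_t$ is read off correctly from $\sum_{t\ge1}L\gamma_t^{-2}<\infty$. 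Everything else is routine given the already-proven lemmas.
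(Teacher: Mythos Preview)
Your proof is correct and follows essentially the same route as the paper: part (a) via Lemma~\ref{lemma:distance_bound} plus Jensen, and part (b) via the Robbins--Siegmund lemma applied to the recursion coming from Lemma~\ref{lemma:telescope_smooth}. The only cosmetic difference is that the paper first drops the nonnegative term $R(w_t)-R^*$ (obtaining \eqref{equat:proof_iterstability_key}) and then applies Lemma~\ref{lemma:robbins-siegmund} with $V_t\equiv 0$, whereas you retain that term as $V_t$; both choices work, and yours incidentally also yields $\sum_t \gamma_{t+1}^{-1}\mathbb{E}[R(w_{t+1})-R^*\mid\mathcal{F}_t]<\infty$ a.s.\ as a free byproduct.
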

\begin{proof}
Applying Lemma~\ref{lemma:distance_bound} yields that for all $t\ge 1$
\[
\mathbb{E}\left[D^2( w_t,  W^*)\right] \lesssim D^2( w_0,  W^*) + \sum_{\tau = 1}^t\frac{L}{\gamma_\tau^2 n} R^* < \infty,
\]
where we have used the given conditions on $\gamma_t$. This proves the part (a). To show the part (b), invoking Lemma~\ref{lemma:robbins-siegmund} with $\alpha_t=V_t\equiv0$ and $\beta_t=\frac{16 L}{\gamma^2_t n} R^*$ to~\eqref{equat:proof_iterstability_key} yields $D( w_t,  W^*)$ converges to some finite value and thus $\sup_{t\ge 1} D( w_t,  W^*) < \infty$ almost surely.
\end{proof}
\begin{remark}
Proposition~\ref{prop:stability} shows that in contrast to minibatch SGD, the choice of $\gamma_t$ in M-SPP is insensitive to the gradient scale of loss functions for generating a non-divergent sequence of estimation errors.
\end{remark}

\section{Proofs for the Results in Section~\ref{sect:analysis_smooth_inexact}}

In this section, we present the technical proofs for the main results stated in Section~\ref{sect:analysis_smooth_inexact}.

\subsection{Proof of Theorem~\ref{thrm:fast_rate_smooth_inexact}}
\label{apdsect:proof_fast_rate_smooth_inexact}

In this subsection, we prove Theorem~\ref{thrm:fast_rate_smooth_inexact} which is restated below.
\FastRateSmoothInexact*

\emph{Preliminaries.} In what follows, we denote by $\tilde w_t:=\argmin_{w \in \mathcal{W}} F_t(w)$ the exact solution of the inner-loop minibatch ERM optimization, which plays the same role as $w_t$ in Section~\ref{sect:analysis_smooth}. We first present the following lemma that upper bounds the discrepancy between the inexact minimizer $w_t$ and the exact minimizer $\tilde w_t$.
\begin{lemma}\label{lemma:function_strong_convexity_inexact}
Assume that the loss function $\ell$ is convex with respect to its first argument and $r$ is convex. Then for any $w\in \mathcal{W}$, we have
\[
\| w_t -  \tilde w_t\| \le \sqrt{\frac{2\epsilon_t}{\gamma_t}}.
\]
\end{lemma}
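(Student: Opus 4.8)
The plan is to exploit the $\gamma_t$-strong convexity of the inner objective $F_t(w) = R_{S_t}(w) + \frac{\gamma_t}{2}\|w - w_{t-1}\|^2$, which follows because $R_{S_t}$ is convex (a convex combination of convex losses plus the convex regularizer $r$) and $\frac{\gamma_t}{2}\|\cdot - w_{t-1}\|^2$ is $\gamma_t$-strongly convex. The first step is to establish a quadratic-growth-from-below inequality around the \emph{exact} minimizer $\tilde w_t := \argmin_{w\in\mathcal W} F_t(w)$, namely
\[
F_t(w) \ge F_t(\tilde w_t) + \frac{\gamma_t}{2}\|w - \tilde w_t\|^2 \qquad \forall\, w \in \mathcal W .
\]
This is exactly the content of Lemma~\ref{lemma:function_strong_convexity} once it is rewritten in terms of $F_t$: that lemma, applied with $w_t$ replaced by the exact minimizer $\tilde w_t$ (and any competitor $w\in\mathcal W$), rearranges to $F_t(\tilde w_t) + \frac{\gamma_t}{2}\|\tilde w_t - w_{t-1}\|^2 \le F_t(w) + \frac{\gamma_t}{2}\|w - w_{t-1}\|^2 - \frac{\gamma_t}{2}\|w - \tilde w_t\|^2$, i.e.\ $F_t(w) - F_t(\tilde w_t) \ge \frac{\gamma_t}{2}\|w - \tilde w_t\|^2$. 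So no new argument is needed here beyond re-invoking the convexity/optimality trick already used in the proof of Lemma~\ref{lemma:function_strong_convexity}.

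The second step is simply to instantiate this at $w = w_t$ and combine with the defining $\epsilon_t$-suboptimality of $w_t$ in the iteration~\eqref{equat:mspp_iteration}. Since $F_t(w_t) \le \min_{w\in\mathcal W} F_t(w) + \epsilon_t = F_t(\tilde w_t) + \epsilon_t$, the quadratic-growth bound gives
\[
\frac{\gamma_t}{2}\|w_t - \tilde w_t\|^2 \le F_t(w_t) - F_t(\tilde w_t) \le \epsilon_t ,
\]
and rearranging yields $\|w_t - \tilde w_t\| \le \sqrt{2\epsilon_t/\gamma_t}$, as claimed.

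There is essentially no serious obstacle: the only point requiring a little care is that $R_{S_t}$ (hence $F_t$) need not be differentiable, so one should avoid invoking a gradient optimality condition and instead rely on the purely convexity-based argument of Lemma~\ref{lemma:function_strong_convexity} (which uses only that $\mathcal W$ is convex and that line segments into $\tilde w_t$ do not decrease $F_t$). I would therefore present the proof as: (i) quote/re-derive the quadratic growth inequality for $\tilde w_t$ via Lemma~\ref{lemma:function_strong_convexity}; (ii) plug in $w = w_t$; (iii) use $F_t(w_t) - F_t(\tilde w_t) \le \epsilon_t$ and take square roots.
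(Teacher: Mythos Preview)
Your proposal is correct and matches the paper's proof essentially verbatim: the paper invokes the argument of Lemma~\ref{lemma:function_strong_convexity} with the exact minimizer $\tilde w_t$ in place of $w_t$ to obtain $F_t(w) - F_t(\tilde w_t) \ge \frac{\gamma_t}{2}\|w - \tilde w_t\|^2$, then sets $w = w_t$ and uses $F_t(w_t) - F_t(\tilde w_t) \le \epsilon_t$.
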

\begin{proof}
Using arguments identical to those of Lemma~\ref{lemma:function_strong_convexity} we can show that for all $w \in \mathcal{W}$,
\begin{equation}\label{inequat:function_strong_convexity_inexact_proof_key1}
R_{S_t}(\tilde w_t) -  R_{S_t}(w) \le  \frac{\gamma_t}{2}\left( \|w -  w_{t-1} \|^2 - \|w -  \tilde w_t\|^2 - \| \tilde w_t -  w_{t-1}\|^2\right).
\end{equation}
Setting $w = w_t$ in the above yields
\[
\frac{\gamma_t}{2} \| w_t -  \tilde w_t\|^2 \le F_t( w_t) -   F_t(\tilde w_t) \le \epsilon_t,
\]
which directly implies $\| w_t -  \tilde w_t\| \le \sqrt{2\epsilon_t/\gamma_t}$. This proves the second desired bound.
\end{proof}

The following lemma as an extension of Lemma~\ref{lemma:telescope_smooth} to the inexact setting.

\begin{lemma}\label{lemma:telescope_smooth_inexact}
Suppose that the Assumptions~\ref{assump:smooth},~\ref{assump:quadratic_growth} and~\ref{assump:lipschitz} hold. Assume that $\gamma_t \ge \frac{19 L}{n}$. Then the following bound holds for any $\rho\in(0,1)$:
\[
\begin{aligned}
\mathbb{E}\left[R(w_t) -  R^*\mid \mathcal{F}_{t-1} \right] \le& \gamma_t\left(D^2(w_{t-1},W^*) - \mathbb{E}\left[\left(1- \frac{\rho \lambda}{2\gamma_t}\right)D^2(w_t,W^*)\mid \mathcal{F}_{t-1} \right]\right) \\
& + \frac{19 L}{\gamma_t n} R^* +\left(3 n + \frac{4\gamma_t}{\rho\lambda} \right)\epsilon_t + 3G\sqrt{\frac{2\epsilon_t}{\gamma_t}}.
\end{aligned}
\]
Alternatively, for any $w^*\in W^*$, under Assumptions~\ref{assump:smooth} and~\ref{assump:lipschitz} we have
\[
\begin{aligned}
\mathbb{E}\left[R(w_t) -  R^*\mid \mathcal{F}_{t-1}\right] \le& \gamma_t\left(\|w_{t-1} - w^*\|^2 - \mathbb{E}\left[\|w_t - w^*\|^2 \mid \mathcal{F}_{t-1} \right]\right) + \frac{19L}{\gamma_t n} R^* \\
& + 3n\epsilon_t +  \left( 2\sqrt{2\gamma_t}\mathbb{E}\left[\|w_t - w^*\| \mid \mathcal{F}_{t-1} \right] + 3G\sqrt{\frac{2}{\gamma_t}}\right)\sqrt{\epsilon_t}.
\end{aligned}
\]
\end{lemma}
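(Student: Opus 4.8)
The plan is to re-run the argument behind Lemma~\ref{lemma:telescope_smooth} on the \emph{exact} inner-loop minimizer $\tilde w_t := \argmin_{w\in\mathcal{W}}F_t(w)$, and then pay a controlled price to transfer the conclusion to the inexact iterate $w_t$ using $\|w_t-\tilde w_t\|\le\sqrt{2\epsilon_t/\gamma_t}$ from Lemma~\ref{lemma:function_strong_convexity_inexact}. The key point is that $\tilde w_t$, exactly like the $w_t$ of Section~\ref{sect:analysis_smooth}, is a symmetric deterministic function of the minibatch multiset $S_t$, so the whole leave-one-out stability computation in the proof of Lemma~\ref{lemma:telescope_smooth} carries over verbatim to $\tilde w_t$ and yields, conditionally on $\mathcal{F}_{t-1}$, the generalization-gap bound $\bigl|\mathbb{E}_{S_t}[R(\tilde w_t)-R_{S_t}(\tilde w_t)\mid\mathcal{F}_{t-1}]\bigr|\le\frac{8L}{\gamma_t n}\mathbb{E}_{S_t}[R(\tilde w_t)\mid\mathcal{F}_{t-1}]$. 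Setting $w=w^*$ in~\eqref{inequat:function_strong_convexity_inexact_proof_key1} (with $w^*=w^*_{t-1}$ for the quadratic-growth version, and an arbitrary $w^*\in W^*$ for the alternative version), dropping the nonpositive $-\|\tilde w_t-w_{t-1}\|^2$, and taking conditional expectation gives the descent inequality $\mathbb{E}_{S_t}[R_{S_t}(\tilde w_t)-R^*\mid\mathcal{F}_{t-1}]\le\frac{\gamma_t}{2}\mathbb{E}_{S_t}\bigl[\|w^*-w_{t-1}\|^2-\|w^*-\tilde w_t\|^2\mid\mathcal{F}_{t-1}\bigr]$, using that $\mathbb{E}_{S_t}[R_{S_t}(w^*)\mid\mathcal{F}_{t-1}]=R^*$ since $w^*$ is $\mathcal{F}_{t-1}$-measurable and $S_t$ is fresh.

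Next I would carry out the transfer $\tilde w_t\mapsto w_t$ term by term. Splitting $R=R^\ell+r$: for the regularizer, Assumption~\ref{assump:lipschitz} gives $|r(w_t)-r(\tilde w_t)|\le G\sqrt{2\epsilon_t/\gamma_t}$; for the loss, since $R^\ell$ is nonnegative and $L$-smooth, the self-bounding inequality $\|\nabla R^\ell(\tilde w_t)\|\le\sqrt{2LR^\ell(\tilde w_t)}$ (Lemma~\ref{lemma:key_smooth}) yields $|R^\ell(w_t)-R^\ell(\tilde w_t)|\le\sqrt{2LR^\ell(\tilde w_t)}\,\sqrt{2\epsilon_t/\gamma_t}+\frac{L}{\gamma_t}\epsilon_t$. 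The cross term is the crux: Young's inequality in the form $2\sqrt{uv}\le u/n+nv$ gives $\sqrt{2LR^\ell(\tilde w_t)}\sqrt{2\epsilon_t/\gamma_t}\le\frac{L}{\gamma_t n}R^\ell(\tilde w_t)+n\epsilon_t$, so the loss discrepancy costs only an \emph{absorbable} residue of order $\frac{L}{\gamma_t n}R^\ell(\tilde w_t)$ plus an additive $\mathcal{O}(n\epsilon_t)$. For the distance terms I would similarly convert $\|w^*-\tilde w_t\|^2$ into $\|w^*-w_t\|^2$: in the quadratic-growth version, $\|w^*-\tilde w_t\|^2\ge(1-\beta)\|w^*-w_t\|^2-\frac1\beta\|w_t-\tilde w_t\|^2\ge(1-\beta)D^2(w_t,W^*)-\frac{2\epsilon_t}{\beta\gamma_t}$ with the choice $\beta=\frac{\rho\lambda}{2\gamma_t}$, which produces exactly the factor $1-\frac{\rho\lambda}{2\gamma_t}$ in front of $D^2(w_t,W^*)$ at the cost of $\frac{4\gamma_t}{\rho\lambda}\epsilon_t$ (after multiplying by $\gamma_t/2$ and the final doubling); in the alternative version I would instead expand $\|w^*-\tilde w_t\|^2\ge\|w^*-w_t\|^2-2\|w^*-w_t\|\sqrt{2\epsilon_t/\gamma_t}$, which after the same bookkeeping yields the $2\sqrt{2\gamma_t}\,\|w_t-w^*\|\sqrt{\epsilon_t}$ term.

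Finally I would assemble everything by chaining $R(w_t)-R^* = [R(w_t)-R(\tilde w_t)] + [R(\tilde w_t)-R_{S_t}(\tilde w_t)] + [R_{S_t}(\tilde w_t)-R^*]$, then replacing $R^\ell(\tilde w_t)\le R(\tilde w_t)\le R(w_t)+|R(w_t)-R(\tilde w_t)|$ so that all $\frac{L}{\gamma_t n}$-weighted occurrences of $R(\tilde w_t)$ and $R(w_t)$ collect on the right-hand side. The condition $\gamma_t\ge\frac{19L}{n}$ — marginally larger than the $16L/n$ of the exact case precisely because the transfer step creates a few extra $\frac{L}{\gamma_t n}R(w_t)$-type contributions — then forces the total coefficient of $\mathbb{E}[R(w_t)-R^*\mid\mathcal{F}_{t-1}]$ on the right to be at most $\tfrac12$; moving that half to the left doubles the remaining terms, turning $\frac{\gamma_t}{2}(\cdots)$ into $\gamma_t(\cdots)$, the $\frac{8L}{\gamma_t n}R^*$ (plus the absorbed residue) into $\frac{19L}{\gamma_t n}R^*$, and the $\epsilon_t$- and $G$-terms into the stated $\bigl(3n+\frac{4\gamma_t}{\rho\lambda}\bigr)\epsilon_t+3G\sqrt{2\epsilon_t/\gamma_t}$ (respectively $3n\epsilon_t+(2\sqrt{2\gamma_t}\mathbb{E}[\|w_t-w^*\|]+3G\sqrt{2/\gamma_t})\sqrt{\epsilon_t}$). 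The main obstacle — and the genuinely new ingredient relative to~\citet{wang2017memory} — is exactly the loss-discrepancy step: because the loss is only smooth and not Lipschitz, $|R^\ell(w_t)-R^\ell(\tilde w_t)|$ cannot be controlled by the prediction change alone, so one must invoke the self-bounding property and then split the resulting $\sqrt{R^\ell(\tilde w_t)}$ against precisely the generalization-gap scale $\frac{L}{\gamma_t n}$; calibrating this split so that the residue is absorbable while the $\epsilon_t$-dependence stays at the advertised orders is where the care lies.
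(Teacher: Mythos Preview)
Your proposal is correct and follows essentially the same route as the paper: the same three-term decomposition $R(w_t)-R^*=[R(w_t)-R(\tilde w_t)]+[R(\tilde w_t)-R_{S_t}(\tilde w_t)]+[R_{S_t}(\tilde w_t)-R^*]$, the leave-one-out bound $\frac{8L}{\gamma_t n}\mathbb{E}[R(\tilde w_t)]$ for the middle term, the same distance-term transfers (Young with $\beta=\frac{\rho\lambda}{2\gamma_t}$ for the first bound, bare triangle inequality for the second), and the same absorption via $\gamma_t\ge\frac{19L}{n}$. The only cosmetic differences are that the paper bounds the loss discrepancy using $\sqrt{2L\ell(w_t;z)}$ (gradient anchored at $w_t$) rather than your $\sqrt{2LR^\ell(\tilde w_t)}+\frac{L}{\gamma_t}\epsilon_t$ (anchored at $\tilde w_t$ with a smoothness quadratic), and handles the $R(\tilde w_t)$-to-$R(w_t)$ bookkeeping by writing $\frac{8L}{\gamma_t n}\mathbb{E}[R(\tilde w_t)-R(w_t)]\le\frac12|A|$ directly rather than your $R(\tilde w_t)\le R(w_t)+|R(w_t)-R(\tilde w_t)|$; both lead to the same $\frac{9.5L}{\gamma_t n}$ aggregate coefficient and hence the same final constants.
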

\begin{proof}
Let us decompose $\mathbb{E}\left[R(w_t) -  R^*\mid \mathcal{F}_{t-1}\right]$ into the following three terms:
\[
\begin{aligned}
&\mathbb{E}\left[R(w_t) -  R^*\mid \mathcal{F}_{t-1}\right] \\
=& \underbrace{\mathbb{E}\left[R(w_t) -  R(\tilde w_t)\mid \mathcal{F}_{t-1} \right]}_A + \underbrace{\mathbb{E}\left[R(\tilde w_t) -  R_{S_t}(\tilde w_t)\mid \mathcal{F}_{t-1} \right]}_B + \underbrace{\mathbb{E}\left[R_{S_t}(\tilde w_t) -  R^*\mid \mathcal{F}_{t-1} \right]}_C.
\end{aligned}
\]
We next bound these three terms respectively. To bound the term $A$, we can show that
\[
\begin{aligned}
&|A|:=|\mathbb{E}\left[R(w_t) -  R(\tilde w_t) \mid \mathcal{F}_{t-1} \right]| \\
=& \left|\mathbb{E}\left[R^\ell(w_t) -  R^\ell(\tilde w_t)\mid \mathcal{F}_{t-1}\right] + \mathbb{E}\left[r(w_t) -  r(\tilde w_t)\right] \mid \mathcal{F}_{t-1} \right| \\
\le& \mathbb{E}\left[\mathbb{E}_z |\ell(w_t;z) - \ell(\tilde w_t;z)| \mid \mathcal{F}_{t-1} \right] + \mathbb{E}\left[|r(w_t) -  r(\tilde w_t)| \mid \mathcal{F}_{t-1} \right]\\
\overset{\zeta_1}{\le}& \mathbb{E}\left[\mathbb{E}_z \left[\sqrt{2L \ell(w_t;z)} \|w_t - \tilde w_t\| \right]\mid \mathcal{F}_{t-1}\right] +  \mathbb{E}\left[G\|w_t -  \tilde w_t\|\mid \mathcal{F}_{t-1}\right] \\
\le& \mathbb{E}\left[\mathbb{E}_z \left[\frac{L}{\gamma_t n} \ell(w_t;z) + \frac{\gamma_t n}{2}\|w_t - \tilde w_t\|^2\right]\mid \mathcal{F}_{t-1}\right] +  \mathbb{E}\left[G\|w_t -  \tilde w_t\|\mid \mathcal{F}_{t-1}\right] \\
=& \mathbb{E}\left[\frac{L}{\gamma_t n} R^\ell(w_t) \mid \mathcal{F}_{t-1}\right] + \mathbb{E}_{S_t}\left[\frac{\gamma_t n}{2}\|w_t - \tilde w_t\|^2 + G\|w_t -  \tilde w_t\|\mid \mathcal{F}_{t-1}\right] \\
\le& \mathbb{E}\left[\frac{L}{\gamma_t n} R(w_t) \mid \mathcal{F}_{t-1} \right] + n\epsilon_t + G\sqrt{\frac{2\epsilon_t}{\gamma_t}},
\end{aligned}
\]
where in ``$\zeta_1$'' we have used the convexity of loss and Lemma~\ref{lemma:key_smooth} and the Assumption~\ref{assump:lipschitz} and in the last inequality we have used $r>0$ and the perturbation bound of Lemma~\ref{lemma:function_strong_convexity_inexact}.

To bound the term $B$, using about the same proof arguments as for Lemma~\ref{lemma:telescope_smooth} we can show that
\[
\begin{aligned}
B:=&\mathbb{E}\left[R(\tilde w_t) -  R_{S_t}(\tilde w_t) \mid \mathcal{F}_{t-1} \right] \\
\le& \frac{8L}{\gamma_t n} \mathbb{E}\left[R(\tilde w_t)\mid \mathcal{F}_{t-1}\right] \\
=& \frac{8L}{\gamma_t n} \mathbb{E}\left[R(\tilde w_t) - R(w_t)\right] + \frac{8L}{\gamma_t n} \mathbb{E}\left[R(w_t) \mid \mathcal{F}_{t-1} \right]\\
\le& \frac{1}{2}|A| + \frac{8L}{\gamma_t n}\mathbb{E}\left[R(w_t) \mid \mathcal{F}_{t-1} \right],
\end{aligned}
\]
where we have used the condition on minibatch size $\gamma_t$.

To bound the term $C$, based on the definition of $\tilde w_t$ and by invoking Lemma~\ref{lemma:function_strong_convexity} with $w=w^*_{t-1}$ we can verify that
\[
\begin{aligned}
C:=& \mathbb{E}\left[R_{S_t}(\tilde w_t) -  R^* \mid \mathcal{F}_{t-1} \right] \\
 \le& \frac{\gamma_t}{2} \mathbb{E}\left[ \|w^*_{t-1} - w_{t-1} \|^2 - \|w^*_{t-1} - \tilde w_t\|^2 - \|\tilde w_t - w_{t-1}\|^2\mid \mathcal{F}_{t-1}\right] \\
\le& \frac{\gamma_t}{2} \mathbb{E}\left[ \|w^*_{t-1} - w_{t-1} \|^2 - \|w^*_{t-1} - \tilde w_t\|^2 \mid \mathcal{F}_{t-1}\right] \\
=&  \frac{\gamma_t}{2}\left( \|w^*_{t-1} - w_{t-1} \|^2 - \mathbb{E}\left[\|w^*_{t-1} - w_t + w_t - \tilde w_t\|^2\mid \mathcal{F}_{t-1}\right]\right) \\
=& \frac{\gamma_t}{2}\left( \|w^*_{t-1} - w_{t-1} \|^2 - \mathbb{E}\left[\|w^*_{t-1} - w_t \|^2 + 2\langle w^*_{t-1} - w_t, w_t - \tilde w_t\rangle+ \|w_t - \tilde w_t\|^2\mid \mathcal{F}_{t-1}\right]\right) \\
\le& \frac{\gamma_t}{2}\left( \|w^*_{t-1} - w_{t-1} \|^2 - \mathbb{E}\left[\left(1- \frac{\rho \lambda}{2\gamma_t}\right)\| w^*_{t-1} - w_t\|^2 -  \frac{2\gamma_t}{\rho\lambda}\| w_t -\tilde w_t\|^2 \mid \mathcal{F}_{t-1}\right]\right) \\
\le& \frac{\gamma_t}{2}\left( \|w^*_{t-1} - w_{t-1} \|^2 - \mathbb{E}\left[\left(1- \frac{\rho \lambda}{2\gamma_t}\right)\| w^*_t - w_t\|^2 \mid \mathcal{F}_{t-1} \right]\right) + \frac{2\gamma_t \epsilon_t}{\rho\lambda} \\
=& \frac{\gamma_t}{2}\left( D^2(w_{t-1}, W^*) - \mathbb{E}\left[\left(1- \frac{\rho \lambda}{2\gamma_t}\right)D^2(w_t, W^*) \mid \mathcal{F}_{t-1} \right]\right) + \frac{2\gamma_t \epsilon_t}{\rho\lambda} .
\end{aligned}
\]
Combining the above three bounds yields
\[
\begin{aligned}
&\mathbb{E}\left[R(w_t) -  R^* \mid \mathcal{F}_{t-1}\right] = A +B + C \\
\le& \frac{3}{2}|A| + \frac{8L}{\gamma_t n} \mathbb{E}\left[R(w_t)\mid \mathcal{F}_{t-1}\right] \\
&+ \frac{\gamma_t}{2}\left( D^2(w_{t-1}, W^*) - \mathbb{E}\left[\left(1- \frac{\rho \lambda}{2\gamma_t}\right)D^2(w_{t}, W^*) \mid \mathcal{F}_{t-1} \right]\right) + \frac{2\gamma_t \epsilon_t}{\rho\lambda} \\
\le& \mathbb{E}\left[\frac{3 L}{2\gamma_t n} R(w_t) \mid \mathcal{F}_{t-1} \right] + \frac{3 n}{2}\epsilon_t + \frac{3G}{2}\sqrt{\frac{2\epsilon_t}{\gamma_t}} + \frac{8L}{\gamma_t n}\mathbb{E}\left[R(w_t)\mid \mathcal{F}_{t-1}\right] \\
&+ \frac{\gamma_t}{2}\left( D^2(w_{t-1}, W^*) - \mathbb{E}\left[\left(1- \frac{\rho \lambda}{2\gamma_t}\right)D^2(w_{t}, W^*) \mid \mathcal{F}_{t-1} \right]\right) + \frac{2\gamma_t \epsilon_t}{\rho\lambda} \\
\le& \frac{\gamma_t}{2}\left( D^2(w_{t-1}, W^*) - \mathbb{E}\left[\left(1- \frac{\rho \lambda}{2\gamma_t}\right)D^2(w_{t}, W^*) \mid \mathcal{F}_{t-1} \right]\right) + \frac{9.5 L}{\gamma_t n}\mathbb{E}\left[R(w_t) \mid \mathcal{F}_{t-1} \right]\\
 &+ \left(\frac{3 n}{2} + \frac{2\gamma_t}{\rho\lambda}\right)\epsilon_t + \frac{3G}{2}\sqrt{\frac{2\epsilon_t}{\gamma_t}} \\
=& \frac{\gamma_t}{2}\left( D^2(w_{t-1}, W^*) - \mathbb{E}\left[\left(1- \frac{\rho \lambda}{2\gamma_t}\right)D^2(w_{t}, W^*) \mid \mathcal{F}_{t-1} \right]\right) + \frac{9.5 L}{\gamma_t n}\mathbb{E}\left[R^*\right] +  \frac{9.5 L}{\gamma_t n}\mathbb{E}\left[R(w_t) - R^* \mid \mathcal{F}_{t-1} \right] \\
& + \left(\frac{3 n}{2} + \frac{2\gamma_t}{\rho\lambda}\right)\epsilon_t + \frac{3G}{2}\sqrt{\frac{2\epsilon_t}{\gamma_t}}\\
\le& \frac{\gamma_t}{2}\left( D^2(w_{t-1}, W^*) - \mathbb{E}\left[\left(1- \frac{\rho \lambda}{2\gamma_t}\right)D^2(w_{t}, W^*) \mid \mathcal{F}_{t-1} \right]\right) + \frac{9.5 L}{\gamma_t n}\mathbb{E}\left[R^*\right] + \frac{1}{2}\mathbb{E}\left[R(w_t) - R^* \mid \mathcal{F}_{t-1} \right] \\
& + \left(\frac{3 n}{2} + \frac{2\gamma_t}{\rho\lambda}\right)\epsilon_t + \frac{3G}{2}\sqrt{\frac{2\epsilon_t}{\gamma_t}},
\end{aligned}
\]
where in the last inequality we have used the condition $\gamma_t \ge \frac{19 L}{n}$. After rearranging the terms in the above inequality we obtain the first desired bound.

To derive the second bound, for any fixed $w^*\in W^*$, we note that the term $C$ can be alternatively bounded as
\[
\begin{aligned}
C \le& \frac{\gamma_t}{2}\left( \|w^* - w_{t-1} \|^2 - \mathbb{E}\left[\|w^* - w_t \|^2 + 2\langle w^* - w_t, w_t - \tilde w_t\rangle+ \|w_t - \tilde w_t\|^2 \mid \mathcal{F}_{t-1} \right]\right) \\
\le& \frac{\gamma_t}{2}\left( \|w^* - w_{t-1} \|^2 - \mathbb{E}\left[\| w^* - w_t\|^2 - 2\|w_t - w^*\|\| w_t - \tilde w_t\| \mid \mathcal{F}_{t-1} \right]\right) \\
\le& \frac{\gamma_t}{2}\left( \|w^* - w_{t-1} \|^2 - \mathbb{E}\left[\| w^* - w_t\|^2 \mid \mathcal{F}_{t-1} \right]\right) + \sqrt{2\gamma_t\epsilon_t}\mathbb{E}\left[\| w^* - w_t\| \mid \mathcal{F}_{t-1} \right].
\end{aligned}
\]
Similar to the proof of the first bound, we can derive that
\[
\begin{aligned}
&\mathbb{E}_{S_t}\left[R(w_t) -  R^* \mid \mathcal{F}_{t-1} \right] = A +B + C \\
\le& \frac{3}{2}|A| + \frac{8 L}{\gamma_t n} \mathbb{E}\left[R(w_t) \mid \mathcal{F}_{t-1} \right] + \frac{\gamma_t}{2}\left( \|w^* - w_{t-1} \|^2 - \mathbb{E}\left[\| w^* - w_t\|^2 \mid \mathcal{F}_{t-1} \right]\right) + \sqrt{2\gamma_t\epsilon_t}\mathbb{E}\left[\| w^* - w_t\| \mid \mathcal{F}_{t-1} \right] \\
\le& \frac{\gamma_t}{2}\left( \|w^* - w_{t-1} \|^2 - \mathbb{E}\left[\| w^* - w_t\|^2 \mid \mathcal{F}_{t-1} \right]\right) + \frac{9.5 L}{\gamma_t n} R^* \\
& + \frac{1}{2}\mathbb{E}\left[R(w_t) - R^* \mid \mathcal{F}_{t-1} \right] + \frac{3 n}{2}\epsilon_t + \sqrt{2\gamma_t\epsilon_t}\mathbb{E}\left[\| w^* - w_t\| \mid \mathcal{F}_{t-1}\right] + \frac{3G}{2}\sqrt{\frac{2\epsilon_t}{\gamma_t}}.
\end{aligned}
\]
After rearranging the terms in the above inequality we obtain the second desired bound.
\end{proof}

With the above preliminary results in hand, we are now in the position to prove the main result of Theorem~\ref{thrm:fast_rate_smooth_inexact}.

\vspace{0.1in}

\begin{proof}[Proof of Theorem~\ref{thrm:fast_rate_smooth_inexact}]
Since by assumption $R(w_t) -  R^*\ge \frac{\lambda}{2} D^2(w_t, W^*)$ and $\gamma_t = \frac{\lambda \rho t}{4} \ge \frac{\lambda\rho}{4}\ge \frac{19 L}{n}$, based on the first bound in Lemma~\ref{lemma:telescope_smooth_inexact} we can show that
\[
\begin{aligned}
&(1 - 2\rho) \mathbb{E}\left[R(w_t) -  R^* \mid \mathcal{F}_{t-1} \right] \\
\le& \gamma_t D^2(w_{t-1}, W^*) - \left(\gamma_t + \frac{\rho\lambda}{2}\right)\mathbb{E}\left[D^2(w_t, W^*)\mid \mathcal{F}_{t-1}\right] + \frac{19 L}{\gamma_t n} R^* +\left(3n + \frac{4\gamma_t}{\rho\lambda} \right)\epsilon_t + 3G\sqrt{\frac{2\epsilon_t}{\gamma_t}} \\
\le& \frac{\lambda\rho t}{4} D^2(w_{t-1}, W^*) - \frac{\rho\lambda(t+2)}{4} \mathbb{E}\left[D^2(w_t, W^*) \mid \mathcal{F}_{t-1} \right] + \frac{76L}{\lambda\rho n t} R^* +\left(3n  + t \right)\epsilon_t + 6G\sqrt{\frac{2\epsilon_t}{\lambda\rho t}}.
\end{aligned}
\]
Now suppose that $\epsilon_t\le\frac{\epsilon}{nt^4}$ for some $\epsilon\in[0,1]$. Since $\rho \le 1/4$, the above implies
\[
\begin{aligned}
&\mathbb{E} \left[R(w_t) -  R^* \mid \mathcal{F}_{t-1} \right] \\
\le& \frac{\lambda\rho t}{2} D^2(w_{t-1}, W^*) - \frac{\rho\lambda(t+2)}{2} \mathbb{E}\left[D^2(w_t, W^*) \mid \mathcal{F}_{t-1} \right] + \frac{152 L}{\lambda\rho n t} R^* + \left( \frac{6}{t^4}  + \frac{2}{t^3}  + 12G\sqrt{\frac{2}{\lambda\rho t^5}}\right)\sqrt{\epsilon}.
\end{aligned}
\]
The above inequality then implies
\[
\begin{aligned}
 t \mathbb{E}\left[R(w_t) -  R^* \mid \mathcal{F}_{t-1} \right] \le& (t+1) \mathbb{E}\left[R(w_t) -  R^* \mid \mathcal{F}_{t-1} \right] \\
\le& \frac{\lambda \rho t(t+1)}{2} D^2(w_{t-1}, W^*) - \frac{\lambda\rho(t+1)(t+2)}{2}\mathbb{E}\left[D^2(w_t, W^*) \mid \mathcal{F}_{t-1} \right] \\
& + \frac{304L}{\lambda\rho n} R^* + \left(\frac{12}{t^3}  + \frac{4}{t^2}  + \frac{24G}{t}\sqrt{\frac{2}{\lambda\rho t}}\right)\sqrt{\epsilon},
\end{aligned}
\]
where we have used the fact $\frac{t+1}{t}\le 2$ for $t\ge1$. In view of the law of total expectation, summing the above inequality from $t = 1,...,T$ with natural normalization yields
\[
\begin{aligned}
& \frac{2}{T(T+1)}\sum_{t=1}^T t \mathbb{E} \left[R(w_t) -  R^* \right]  \\
\le& \frac{2\lambda \rho}{T(T+1)}D^2(w_0, W^*) + \frac{608 L}{\lambda\rho(T+1)n} R^* + \frac{\sqrt{\epsilon}}{T(T+1)}\left(64  + 192G\sqrt{\frac{2}{\lambda\rho}}\right) \\
\le& \frac{4\rho}{T(T+1)}(R(w_0) - R^*) + \frac{608L}{\lambda\rho(T+1)n} R^* + \frac{\sqrt{\epsilon}}{T(T+1)}\left( 64  + 192G\sqrt{\frac{2}{\lambda\rho}}\right),
\end{aligned}
\]
which then immediately leads to the desired bound. The proof is concluded.
\end{proof}

\subsection{Proof of Theorem~\ref{thrm:fast_rate_smooth_convex_inexact}}
\label{apdsect:proof_fast_rate_smooth_convex_inexact}

In this subsection, we prove Theorem~\ref{thrm:fast_rate_smooth_convex_inexact} as following restated.
\FastRateSmoothConvexInexact*

The following lemma, which can be proved by induction~\citep[see, e.g., ][]{schmidt2011convergence}, will be used to prove the main result.
\begin{lemma}\label{lemma:key_sequence_bound}
Assume that the nonnegative sequence $\{u_\tau\}_{\tau\ge1}$ satisfies the following recursion for all $t\ge 1$:
\[
u^2_t \le S_t + \sum_{\tau=1}^t \alpha_\tau u_\tau,
\]
with $\{S_\tau\}_{\tau\ge1}$ an increasing sequence, $S_0\ge u_0^2$ and $\alpha_\tau\ge 0$ for all $\tau$. Then, the following bound holds for all $t\ge 1$:
\[
u_t \le \sqrt{S_t} + \sum_{\tau=1}^t \alpha_\tau.
\]
\end{lemma}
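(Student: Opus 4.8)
The plan is to prove the claim by strong induction on $t$, taking as the target upper bound the quantity $B_t := \sqrt{S_t} + \sum_{\tau=1}^t \alpha_\tau$, and writing $A_t := \sum_{\tau=1}^t \alpha_\tau$ for the cumulative sum. A preliminary observation is that $\{B_t\}$ is nondecreasing: $\{S_\tau\}$ is increasing, so $\sqrt{S_\tau}$ is nondecreasing, while the partial sums of the nonnegative $\alpha_\tau$ are trivially nondecreasing. The base case is $t=0$: the hypothesis $S_0 \ge u_0^2$ together with $u_0 \ge 0$ gives $u_0 \le \sqrt{S_0} = B_0$ (since $A_0=0$).

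For the inductive step I would assume $u_\tau \le B_\tau$ for all $\tau \le t-1$ and feed this into the recursion, splitting off the last summand so that the self-referential term is kept explicit:
\[
u_t^2 \le S_t + \sum_{\tau=1}^{t-1}\alpha_\tau u_\tau + \alpha_t u_t \le S_t + A_{t-1} B_{t-1} + \alpha_t u_t,
\]
where I used $u_\tau \le B_\tau \le B_{t-1}$ for each $\tau \le t-1$ by monotonicity of $\{B_t\}$. The essential point is that $u_t$ survives on the right-hand side, so this is a quadratic inequality in $u_t$ rather than an explicit bound. Reading it as $u_t^2 - \alpha_t u_t - (S_t + A_{t-1}B_{t-1}) \le 0$ and bounding $u_t$ by the larger root,
\[
u_t \le \frac{\alpha_t + \sqrt{\alpha_t^2 + 4(S_t + A_{t-1}B_{t-1})}}{2} \le \alpha_t + \sqrt{S_t + A_{t-1}B_{t-1}},
\]
where the second inequality invokes the elementary subadditivity $\sqrt{a+b}\le \sqrt a + \sqrt b$ for $a,b\ge 0$.

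It then remains to absorb the cross term $A_{t-1}B_{t-1}$ into a clean square. Using $B_{t-1}=\sqrt{S_{t-1}}+A_{t-1}\le \sqrt{S_t}+A_{t-1}$ (again by monotonicity of $\{S_t\}$), I obtain
\[
A_{t-1}B_{t-1}\le 2A_{t-1}\sqrt{S_t}+A_{t-1}^2 = \left(\sqrt{S_t}+A_{t-1}\right)^2 - S_t,
\]
so that $S_t + A_{t-1}B_{t-1}\le (\sqrt{S_t}+A_{t-1})^2$ and hence $\sqrt{S_t + A_{t-1}B_{t-1}}\le \sqrt{S_t}+A_{t-1}$. Substituting back gives $u_t \le \alpha_t + \sqrt{S_t}+A_{t-1} = \sqrt{S_t}+A_t = B_t$, which closes the induction and establishes the bound for every $t\ge 1$.

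I expect the only genuine obstacle to be the self-referential appearance of $u_t$ on both sides of the recursion; the resolution is exactly the maneuver above, namely to retain that single term, treat the relation as a quadratic in $u_t$, and then lean on two elementary facts—subadditivity of the square root and monotonicity of $\{S_t\}$—to recover the stated closed form. Every remaining manipulation is routine algebra, and the argument matches the induction-based proof of~\citet{schmidt2011convergence} referenced in the statement.
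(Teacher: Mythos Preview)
Your proof is correct and follows the same induction approach that the paper invokes (it does not spell out a proof but cites \citet{schmidt2011convergence} for exactly this argument). The handling of the self-referential $\alpha_t u_t$ term via the quadratic-root bound and the subsequent completion of the square are clean and match the standard treatment.
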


\newpage

The following lemma gives an upper bound on the expected estimation error $\mathbb{E}\left[\|w^*_0 - w_t\|\right]$.
\begin{lemma}\label{lemma:key_estimation_error_bound}
Under the conditions of Theorem~\ref{thrm:fast_rate_smooth_convex_inexact}, the following bound holds for all $t\ge1$:
\[
\mathbb{E}\left[\|w_t - w^*_0\|\right] \le \|w_0 - w^*_0\| + \sqrt{\frac{t}{\gamma}R^*} + \frac{6tG}{\gamma}.
\]
\end{lemma}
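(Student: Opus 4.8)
The plan is to turn the second (alternative) bound of Lemma~\ref{lemma:telescope_smooth_inexact} into a one-step recursion for the expected squared distance to $W^*$, unroll it, and then invoke the induction lemma Lemma~\ref{lemma:key_sequence_bound} to cope with the term that is linear in the (unsquared) distance. Concretely, fix $w_0^* := \argmin_{w^*\in W^*}\|w_0 - w^*\|$, so that $\|w_0 - w_0^*\| = D(w_0, W^*)$, and apply the second inequality of Lemma~\ref{lemma:telescope_smooth_inexact} with $w^* = w_0^*$ and the constant modulus $\gamma_t\equiv\gamma$. Since $R(w_t) - R^*\ge 0$, dropping that nonnegative left-hand side and dividing through by $\gamma$ gives
\[
\mathbb{E}\left[\|w_t - w_0^*\|^2 \mid \mathcal{F}_{t-1}\right] \le \|w_{t-1}-w_0^*\|^2 + \frac{19L}{\gamma^2 n}R^* + \frac{3n\epsilon_t}{\gamma} + \frac{2\sqrt{2}}{\sqrt{\gamma}}\sqrt{\epsilon_t}\,\mathbb{E}\left[\|w_t - w_0^*\|\mid \mathcal{F}_{t-1}\right] + \frac{3\sqrt{2}\,G}{\gamma^{3/2}}\sqrt{\epsilon_t}.
\]

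Next I would take total expectation and set $v_t := \sqrt{\mathbb{E}\left[\|w_t - w_0^*\|^2\right]}$; Jensen's inequality gives $\mathbb{E}[\|w_t - w_0^*\|]\le v_t$, so the recursion becomes $v_t^2 \le v_{t-1}^2 + \tfrac{19L}{\gamma^2 n}R^* + \tfrac{3n\epsilon_t}{\gamma} + \tfrac{2\sqrt{2}}{\sqrt{\gamma}}\sqrt{\epsilon_t}\,v_t + \tfrac{3\sqrt{2}\,G}{\gamma^{3/2}}\sqrt{\epsilon_t}$. Unrolling from $\tau=1$ to $t$ and using $v_0 = D(w_0, W^*)$ puts this in the form $v_t^2 \le S_t + \sum_{\tau=1}^t\alpha_\tau v_\tau$ with the nondecreasing sequence $S_t := v_0^2 + \sum_{\tau=1}^t\big(\tfrac{19L}{\gamma^2 n}R^* + \tfrac{3n\epsilon_\tau}{\gamma} + \tfrac{3\sqrt{2}\,G}{\gamma^{3/2}}\sqrt{\epsilon_\tau}\big)$ (note $S_0\ge v_0^2$) and $\alpha_\tau := \tfrac{2\sqrt{2}}{\sqrt{\gamma}}\sqrt{\epsilon_\tau}\ge 0$. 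Lemma~\ref{lemma:key_sequence_bound} then yields $v_t \le \sqrt{S_t} + \sum_{\tau=1}^t\alpha_\tau$.

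The last step is arithmetic simplification. Using subadditivity of the square root, $\sqrt{S_t}$ splits as $v_0$ plus three sums; the standing condition $\gamma\ge 19L/n$ turns $\sqrt{19Lt R^*/(\gamma^2 n)}$ into the desired $\sqrt{tR^*/\gamma}$, and the assumed bound $\epsilon_\tau \le 2G^2/(9n^2\gamma)$ lets me bound each of the remaining $\epsilon$-dependent pieces (the two residual sums inside $\sqrt{S_t}$ and the $\sum_{\tau}\alpha_\tau$ term) by a constant multiple of $tG/\gamma$; crude estimates such as $\sqrt{t}\le t$ and $1/\sqrt{n}\le 1$ show their total is at most $6tG/\gamma$. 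Combining with $\mathbb{E}[\|w_t - w_0^*\|]\le v_t$ delivers the claimed bound.

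The main obstacle — and the reason the plain telescoping used in the exact case does not suffice — is the term linear in the unsquared distance $\mathbb{E}[\|w_t - w_0^*\|]$ appearing on the right-hand side of the per-step inequality, which is an artifact of the inexact inner-loop minimization; this is precisely what Lemma~\ref{lemma:key_sequence_bound} is designed to absorb. One must also take care to carry $v_t = \sqrt{\mathbb{E}[\|w_t - w_0^*\|^2]}$ throughout and pass between $\mathbb{E}[\|\cdot\|]$ and $\sqrt{\mathbb{E}[\|\cdot\|^2]}$ via Jensen in the correct direction; beyond that, the constant bookkeeping in the final simplification is routine given the hypotheses on $\gamma$ and $\epsilon_t$.
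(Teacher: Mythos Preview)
Your proposal is correct and follows essentially the same route as the paper: apply the second bound of Lemma~\ref{lemma:telescope_smooth_inexact} with $w^*=w_0^*$ and $\gamma_t\equiv\gamma$, drop the nonnegative excess-risk term, take total expectation, unroll, replace $\mathbb{E}[\|w_\tau-w_0^*\|]$ by $\sqrt{\mathbb{E}[\|w_\tau-w_0^*\|^2]}$ via Jensen, and invoke Lemma~\ref{lemma:key_sequence_bound}. The only cosmetic difference is that the paper first absorbs the $\tfrac{3n\epsilon_\tau}{\gamma}$ term into the $G\sqrt{\epsilon_\tau}/\gamma^{3/2}$ term (using $\epsilon_\tau\le 2G^2/(9n^2\gamma)$) before applying the induction lemma, whereas you keep all three residual terms in $S_t$ and simplify after; both lead to the same $6tG/\gamma$ bound.
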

\begin{proof}
Recall that $w^*_0=\argmin_{w\in W^*}\|w_0 - w\|$. Since $\gamma_t \equiv \gamma \ge \frac{19 L}{n}$, the second bound in Lemma~\ref{lemma:telescope_smooth_inexact} is valid. For any $t\in [T]$, by summing that inequality with $w^*=w^*_0$ from $\tau = 1,...,t$ we obtain
\begin{equation}\label{equat:proof_convex_inexact_key}
\begin{aligned}
&\sum_{\tau=1}^t \mathbb{E} \left[R(w_\tau) -  R^*\right] + \gamma \mathbb{E} \left[\|w_t - w^*_0\|^2\right] \\
\le& \gamma \|w_0 - w^*_0\|^2 + \frac{19 L}{\gamma n} tR^* + 3n \sum_{\tau=1}^t \epsilon_\tau + \sum_{\tau=1}^t \left( 2\sqrt{2\gamma}\mathbb{E} \left[\| w^*_0 - w_\tau\|\right] + 3G\sqrt{\frac{2}{\gamma}}\right)\sqrt{\epsilon_\tau}.
\end{aligned}
\end{equation}
Dropping the non-negative term $\sum_{\tau=1}^t \mathbb{E}_{S_{[\tau]}} \left[R(w_\tau) -  R^*\right]$ from the above inequality yields
\[
\begin{aligned}
&\underbrace{\mathbb{E} \left[\|w_t - w^*_0\|^2\right]}_{u_t^2} \\
\le& \|w_0 - w^*_0\|^2 + \frac{19 L}{\gamma^2 n} tR^* + \frac{3n}{\gamma} \sum_{\tau=1}^t \epsilon_\tau + \sum_{\tau=1}^t \left( 2\sqrt{\frac{2}{\gamma}}\mathbb{E}\left[\| w^*_0 - w_\tau\|\right] + 3G\frac{\sqrt{2}}{\gamma\sqrt{\gamma}}\right)\sqrt{\epsilon_\tau} \\
\overset{\zeta_1}{\le}& \|w_0 - w^*_0\|^2 + \frac{t}{\gamma}R^* + \sum_{\tau=1}^t \left(\frac{3n}{\gamma}\epsilon_{\tau} +\frac{3G\sqrt{2}}{\gamma\sqrt{\gamma}}\sqrt{\epsilon_{\tau}}\right) + \sum_{\tau=1}^t \left(2\sqrt{\frac{2\epsilon_\tau}{\gamma}}\sqrt{\mathbb{E}\left[\| w^*_0 - w_\tau\|^2\right]}\right) \\
\le& \underbrace{\|w_0 - w^*_0\|^2 + \frac{t}{\gamma} R^* + \sum_{\tau=1}^t \frac{4G\sqrt{2\epsilon_\tau}}{\gamma\sqrt{\gamma}}}_{S_t} + \sum_{\tau=1}^t \left(\underbrace{2\sqrt{\frac{2\epsilon_\tau}{\gamma}}}_{\alpha_\tau}\underbrace{\sqrt{\mathbb{E} \left[\| w^*_0 - w_{\tau}\|^2\right]}}_{u_\tau}\right),
\end{aligned}
\]
where in ``$\zeta_1$'' we have used $\gamma \ge \frac{19 L}{n}$ and the basic inequality $\mathbb{E}^2[X]\le \mathbb{E}[X^2]$, and in the last inequality we have used the condition $\epsilon_\tau\le \frac{2G^2}{9n^2\gamma}$ for all $\tau\ge 1$. By invoking Lemma~\ref{lemma:key_sequence_bound} to the above recursion form we can derive that for all $t\ge 1$,
\[
\begin{aligned}
\sqrt{\mathbb{E} \left[\|w_t - w^*_0\|^2\right]} \le& \sqrt{\|w_0 - w^*_0\|^2 + \frac{t}{\gamma} R^* + \sum_{\tau=1}^t \frac{4G\sqrt{2\epsilon_\tau}}{\gamma}} +\sum_{\tau=1}^t 2\sqrt{\frac{2\epsilon_\tau}{\gamma}} \\
\le& \|w_0 - w^*_0\| + \sqrt{\frac{t}{\gamma}R^*} + \sum_{\tau=1}^t \sqrt{\frac{4G\sqrt{2\epsilon_\tau}}{\gamma\sqrt{\gamma}}} +\sum_{\tau=1}^t 2\sqrt{\frac{2\epsilon_\tau}{\gamma}} \\
\le& \|w_0 - w^*_0\| + \sqrt{\frac{t}{\gamma}R^*} + \frac{6Gt}{\gamma},
\end{aligned}
\]
where the last inequality is due to the condition $\epsilon_\tau\le \frac{2G^2}{9\gamma}$ for all $\tau\ge 1$. The above inequality then directly implies the desired bound for all $t\in [T]$.
\end{proof}

Now we are ready to prove the main result of Theorem~\ref{thrm:fast_rate_smooth_convex_inexact}.
\begin{proof}[Proof of Theorem~\ref{thrm:fast_rate_smooth_convex_inexact}]
Dropping non-negative term $\gamma \mathbb{E} \left[\|w_t - w^*\|^2\right]$ in~\eqref{equat:proof_convex_inexact_key} followed by natural normalization yields
\[
\begin{aligned}
&\frac{1}{T}\sum_{t=1}^T \mathbb{E} \left[R(w_t) -  R^*\right] \\
\le& \frac{\gamma}{T} \|w_0 - w^*_0\|^2 + \frac{19L}{\gamma n} R^* + \frac{3n}{ T} \sum_{t=1}^T \epsilon_t + \frac{1}{T}\sum_{t=1}^T \left( 2 \sqrt{2\gamma}\mathbb{E}\left[\| w_t - w^*_0 \|\right] + 3G\sqrt{\frac{2}{\gamma}}\right)\sqrt{\epsilon_t}\\
\overset{\zeta_1}{\le} & \frac{\gamma}{T} \|w_0 - w^*_0\|^2 + \frac{19 L}{\gamma n} R^* + \frac{3n}{T} \sum_{t=1}^T \epsilon_t \\
& + \frac{1}{T}\sum_{t=1}^T \left( 2\sqrt{2}\left(\sqrt{\gamma}\|w_0 - w^*_0\| + \sqrt{tR^*} + \frac{6Gt}{\sqrt{\gamma}}\right) + 3G\sqrt{\frac{2}{\gamma}}\right)\sqrt{\epsilon_t}\\
\overset{\zeta_2}{\le}& \frac{\gamma}{T} \|w_0 - w^*_0\|^2 + \frac{19 L}{\gamma n} R^* + \frac{1}{T}\sum_{t=1}^T \left(3n\epsilon_t + 2\sqrt{2\gamma\epsilon_t}\|w_0 - w^*_0\| + 2\sqrt{2tR^*\epsilon_t} + \frac{15\sqrt{2\epsilon_t}Gt}{\sqrt{\gamma}} \right)\\
\overset{\zeta_3}{\le}& \frac{\gamma}{T} \|w_0 - w_0^*\|^2 + \frac{19 L}{\gamma n} R^* \\
&+ \frac{1}{T}\sum_{t=1}^T \left(3n\epsilon_t + \frac{\gamma\|w_0-w^*_0\|^2}{t^2} + 2t^2\epsilon_t + \frac{2LR^*}{\gamma n} + \frac{\gamma n t\epsilon_t}{L} +  \frac{15\sqrt{2\epsilon_t}Gt}{\sqrt{\gamma}} \right)\\
\le& \frac{3\gamma}{T} \|w_0 - w_0^*\|^2 + \frac{21 L}{\gamma n} R^* + \frac{1}{T}\sum_{t=1}^T \left(3n\epsilon_t + 2t^2\epsilon_t +  \frac{\gamma n t\epsilon_t}{L} +  \frac{15\sqrt{2\epsilon_t}Gt}{\sqrt{\gamma}} \right),
\end{aligned},
\]
where ``$\zeta_1$''follows from Lemma~\ref{lemma:key_estimation_error_bound}, ``$\zeta_2$'' is due to $t\ge 1$ and ``$\zeta_3$'' is due to $ab\le (a^2+b^2)/2$. Now consider $\epsilon_t\le \frac{\epsilon}{n^2 t^5}$ for some $\epsilon\in[0,1]$. Then it follows from the preceding inequality that
\[
\begin{aligned}
&\frac{1}{T}\sum_{t=1}^T \mathbb{E} \left[R(w_t) -  R^*\right] \\
\le& \frac{3\gamma}{T} \|w_0 - w_0^*\|^2 + \frac{21 L}{\gamma n} R^* + \frac{1}{T}\sum_{t=1}^T \left(\frac{3}{n t^5} + \frac{2}{nt^3} +  \frac{\gamma}{nLt^4} +  \frac{15 \sqrt{2}G}{nt^{1.5}\sqrt{\gamma}} \right)\sqrt{\epsilon} \\
\le& \frac{3\gamma}{T} \|w_0 - w_0^*\|^2 + \frac{21 L}{\gamma n} R^* + \frac{1}{T}\left(\frac{6}{n} + \frac{4}{n} +  \frac{2\gamma}{nL} +  \frac{45 \sqrt{2}G}{n\sqrt{\gamma}} \right)\sqrt{\epsilon}.
\end{aligned}
\]
Let $\bar w_T = \frac{1}{T}\sum_{t=1}^T w_t$. Combined with the convexity of $R$, the above inequality implies
\[
\mathbb{E} \left[R(\bar w_T) -  R^*\right] \lesssim \frac{\gamma}{T}D^2(w_0, W^*) + \frac{L}{\gamma n} R^* + \left( \frac{1}{nT} +  \frac{\gamma}{LnT} +  \frac{G}{nT\sqrt{\gamma}} \right)\sqrt{\epsilon}.
\]
This proves the first bound. Substituting $\gamma =\sqrt{\frac{T}{n}}+ \frac{19L}{n} >  \frac{19 L}{n}$ into the above bound and preserving the leading terms yields the following second desired bound:
\[
\mathbb{E} \left[R(\bar w_T) -  R^*\right] \lesssim \left(\frac{1}{\sqrt{nT}}+ \frac{L}{nT}\right)D^2(w_0, W^*) + \frac{L}{\sqrt{nT}} R^*+ \left(\frac{L+G}{\sqrt{nT}} + \frac{1}{nT} \right)\sqrt{\epsilon}.
\]
The proof is concluded. 
\end{proof}

\section{Proofs for the Results in Section~\ref{sect:high_probability}}

In this section, we present the proofs for the high probability estimation error bounds stated in Section~\ref{sect:high_probability}.

\subsection{Proof of Proposition~\ref{prop:uniform_stability_withoutreplacement}}
\label{apdsect:proof_uniform_stability_withoutreplacement}

In this subsection, we prove Proposition~\ref{prop:uniform_stability_withoutreplacement} as below restated .
\UniformStabilityWithoutReplacement*

We first need to show the following preliminary result which is about the expansion property of M-SPP update when performed over identical or different minibatches.
\begin{lemma}\label{lemma:stability_key}
Suppose that Assumptions~\ref{assump:smooth} holds and the loss function $\ell$ is bounded in the interval $[0,M]$. From $w_0=w'_0$, let us define the sequences $\{w_t\}_{t\in[T]}$ and $\{w'_t\}_{t\in [T]}$ that are respectively generated over $\{S_t\}_{t\in [T]}$ and $\{S'_t\}_{t\in [T]}$ according to
\[
\begin{aligned}
F_t(w_t) \le& \min_{w\in \mathcal{W}} \left\{F_t(w):=R_{S_t}(w) + \frac{\gamma_t}{2} \|w-w_{t-1}\|^2\right\} + \epsilon_t, \\
F'_{t}(w'_t) \le& \min_{w\in \mathcal{W}} \left\{F'_{t}(w):=R_{S'_t}(w) + \frac{\gamma_t}{2} \|w - w'_{t-1}\|^2\right\} + \epsilon_t.
\end{aligned}
\]
Assume that either $S_t=S'_t$ or $S_t \doteq S'_t$ for all $t\in [T]$. Let $\beta_t = \mathbf{1}_{\left\{S_t\neq S'_t\right\}}$. Then the following bound holds for all $t \in [T]$,
\[
\|w_t - w'_t\| \le \sum_{\tau=1}^t \left\{ \beta_\tau \frac{4\sqrt{2LM}}{n\gamma_\tau} + 2\sqrt{\frac{2\epsilon_\tau}{\gamma_\tau}} \right\}.
\]
\end{lemma}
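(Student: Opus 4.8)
The plan is to prove the recursion on $\|w_t - w'_t\|$ by induction on $t$, tracking at each step how much the M-SPP map can expand the distance between iterates together with the perturbation coming from inexactness. Write $\tilde w_t := \argmin_{w\in\mathcal{W}} F_t(w)$ and $\tilde w'_t := \argmin_{w\in\mathcal{W}} F'_t(w)$ for the exact inner minimizers, so that by Lemma~\ref{lemma:function_strong_convexity_inexact} we have $\|w_t - \tilde w_t\| \le \sqrt{2\epsilon_t/\gamma_t}$ and likewise for the primed sequence. The key is therefore to bound $\|\tilde w_t - \tilde w'_t\|$ and then apply the triangle inequality, which will produce the two $2\sqrt{2\epsilon_\tau/\gamma_\tau}$-type terms after peeling off each iteration.

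For the exact maps I would distinguish the two cases. When $S_t = S'_t$, the functions $R_{S_t}$ and $R_{S'_t}$ coincide, so $\tilde w_t$ and $\tilde w'_t$ are proximal steps of the same strongly convex function from the two centers $w_{t-1}$ and $w'_{t-1}$; the standard nonexpansiveness of the proximal operator of a convex function (equivalently, using the strong convexity of $F_t$ at $\tilde w_t$ and of $F'_t$ at $\tilde w'_t$, and adding the two optimality inequalities) gives $\|\tilde w_t - \tilde w'_t\| \le \|w_{t-1} - w'_{t-1}\|$, so $\beta_t = 0$ contributes nothing new. When $S_t \doteq S'_t$, I would first move from the differing center to a common center: by the $S_t = S'_t$-type argument the map contracts the center gap without amplification, and the remaining task is to bound the distance between the minimizers of $R_{S_t}(\cdot) + \frac{\gamma_t}{2}\|\cdot - w_{t-1}\|^2$ and $R_{S'_t}(\cdot) + \frac{\gamma_t}{2}\|\cdot - w_{t-1}\|^2$ from the \emph{same} center. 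Since $R_{S_t}$ and $R_{S'_t}$ differ only in one of the $n$ summands, their difference is $\frac{1}{n}(\ell(\cdot;z) - \ell(\cdot;z'))$ for the swapped pair; combining the $\gamma_t$-strong convexity of each regularized objective at its own minimizer with the bound $\|\nabla \ell(w;z)\| \le \sqrt{2L\ell(w;z)} \le \sqrt{2LM}$ from Lemma~\ref{lemma:key_smooth} and the boundedness $\ell \le M$, a two-sided optimality comparison yields $\gamma_t \|\tilde w_t - \tilde w'_t\|^2 \lesssim \frac{1}{n}\sqrt{2LM}\,\|\tilde w_t - \tilde w'_t\|$, hence $\|\tilde w_t - \tilde w'_t\| \le \frac{4\sqrt{2LM}}{n\gamma_t}$ (the constant being a bit loose to absorb the cross terms). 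Assembling: $\|\tilde w_t - \tilde w'_t\| \le \|w_{t-1} - w'_{t-1}\| + \beta_t\frac{4\sqrt{2LM}}{n\gamma_t}$, then $\|w_t - w'_t\| \le \|w_{t-1}-w'_{t-1}\| + \beta_t\frac{4\sqrt{2LM}}{n\gamma_t} + 2\sqrt{2\epsilon_t/\gamma_t}$, and unrolling from $w_0 = w'_0$ gives the claimed sum.

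The main obstacle I anticipate is the ``differing-summand'' estimate: one must be careful that the gradient bound $\|\nabla\ell(w;z)\| \le \sqrt{2L\ell(w;z)}$ is applied at the right points (the minimizers $\tilde w_t, \tilde w'_t$, whose function values are controlled by $M$), and that the optimality comparison is organized so the $\|\tilde w_t - \tilde w'_t\|^2$ term comes with the full coefficient $\gamma_t$ while the linear perturbation term has coefficient only $O(\sqrt{LM}/n)$ — this is exactly where the $1/n$ (rather than $O(1)$) stability gain originates and where the convexity-of-$\ell$ plus smoothness must be used jointly rather than a crude Lipschitz bound. A secondary but routine point is verifying that composing the two reductions (change of center, then change of function) does not introduce any expansion factor greater than $1$ on the center gap; this follows from firm nonexpansiveness of proximal operators but should be spelled out via the optimality inequalities to keep the argument self-contained. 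Once these pieces are in place the induction closes immediately, giving part (a) of Proposition~\ref{prop:uniform_stability_withoutreplacement} by taking a single differing minibatch ($\sum_t \beta_t = 1$ in the worst case placement) and part (b) by averaging $\beta_t$ over the uniform without-replacement sampling so that $\mathbb{E}[\beta_t] = 1/T$ for each $t$.
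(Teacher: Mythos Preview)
Your proposal is correct and shares the paper's overall skeleton: reduce to the exact minimizers $\tilde w_t,\tilde w'_t$ via Lemma~\ref{lemma:function_strong_convexity_inexact}, control $\|\tilde w_t-\tilde w'_t\|$ through optimality/strong-convexity inequalities, use $\|\nabla\ell\|\le\sqrt{2L\ell}\le\sqrt{2LM}$ for the single swapped summand, and unroll. The one organizational difference lies in Case~II ($S_t\doteq S'_t$). You decompose via an intermediate point---first change the center (proximal nonexpansiveness), then change the function at a fixed center---and bound each leg separately. The paper instead handles both effects simultaneously: it applies Lemma~\ref{lemma:function_strong_convexity} once at $\tilde w_t$ and once at $\tilde w'_t$, sums the two inequalities, and after the cross terms collapse obtains directly
\[
\|\tilde w_t-\tilde w'_t\|^2 \le \|w_{t-1}-w'_{t-1}\|^2 + \tfrac{4\sqrt{2LM}}{n\gamma_t}\,\|\tilde w_t-\tilde w'_t\|,
\]
then invokes the elementary implication $x^2\le y^2+ax\Rightarrow x\le y+a$. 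Your two-step route is slightly more modular (and in fact gives a marginally smaller constant, $2\sqrt{2LM}/(n\gamma_t)$ for the function-perturbation leg, so your worry about ``absorbing cross terms'' is unnecessary); the paper's summed-inequality route is more compact and avoids introducing the auxiliary minimizer. Both lead to the same recursion and final bound.
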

\begin{proof}
Let $w^*_t=\argmin_w F_t(w)$ and $w'^{*}_t=\argmin_w F'_t(w)$. It follows from Lemma~\ref{lemma:function_strong_convexity} that
\[
\begin{aligned}
R_{S_t}(w^*_t) -  R_{S_t}(w'^*_t) \le& \frac{\gamma_t}{2}\left( \|w'^*_t - w_{t-1} \|^2 - \|w'^*_t - w^*_t\|^2 - \|w^*_t - w_{t-1}\|^2\right) \\
R_{S'_t}(w'^*_t) -  R_{S'_t}(w^*_t) \le& \frac{\gamma_t}{2}\left( \|w^*_t - w'_{t-1} \|^2 - \|w'^*_t - w^*_t\|^2 - \|w'^*_t - w'_{t-1}\|^2\right).
\end{aligned}
\]
Summing both sides of the above two inequalities yields
\[
\begin{aligned}
&R_{S_t}(w^*_t)-  R_{S_t}(w'^*_t) + R_{S'_t}(w'^*_t) -  R_{S'_t}(w^*_t)   \\
\le& \frac{\gamma_t}{2}\left( \|w'^*_t - w_{t-1} \|^2- \|w^*_t - w_{t-1}\|^2 + \|w^*_t - w'_{t-1} \|^2 - \|w'^*_t - w'_{t-1}\|^2- 2\|w'^*_t - w^*_t\|^2 \right) \\
=& \frac{\gamma_t}{2}\left( 2\langle w^*_t - w'^*_t, w_{t-1} - w'_{t-1} \rangle - 2\|w'^*_t - w^*_t\|^2 \right) \\
\le& \frac{\gamma_t}{2} \left(\|w_{t-1} - w'_{t-1}\|^2 - \|w^*_t - w'^*_t\|^2 \right)
\end{aligned}.
\]
We need to distinguish the following two complementary cases.

\textbf{Case I: $S_t = S'_t$.} In this case, the previous inequality immediately leads to
\[
\|w^*_t - w'^*_t\| \le \|w_{t-1} - w'_{t-1}\|.
\]
By using triangle inequality and Lemma~\ref{lemma:function_strong_convexity_inexact} we obtain
\begin{equation}\label{inequat:distance_expansion_1}
\|w_t - w'_t\| \le \| w_t -  w^*_t\| + \|w^*_t - w'^*_t\| + \| w'_t -  w'^*_t\| \le \|w_{t-1} - w'_{t-1}\| + 2\sqrt{\frac{2\epsilon_t}{\gamma_t}}.
\end{equation}

\textbf{Case II: $S_t$ and $S'_t$ differ in a single element.} In this case, we have
\[
\begin{aligned}
&\|w^*_t - w'^*_t\|^2 \\
\le& \|w_{t-1} - w'_{t-1}\|^2 + \frac{2}{\gamma_t} \left( R_{S_t}(w'^*_t)-  R_{S_t}(w^*_t) + R_{S'_t}(w^*_t) -  R_{S'_t}(w'^*_t) \right) \\
=& \|w_{t-1} - w'_{t-1}\|^2 + \frac{2}{\gamma_t} \left( R^\ell_{S_t}(w'^*_t)-  R^\ell_{S_t}(w^*_t) + R^\ell_{S'_t}(w^*_t) -  R^\ell_{S'_t}(w'^*_t) \right) \\
=& \|w_{t-1} - w'_{t-1}\|^2 + \frac{2}{\gamma_t} \left( \frac{1}{|S_t|} \sum_{z\in S_t} (\ell(w'^*_t;z) - \ell(w^*_t;z) +  \frac{1}{|S'_t|} \sum_{z\in S'_t} (\ell(w^*_t;z) - \ell(w'^*_t;z)) \right) \\
\le& \|w_{t-1} - w'_{t-1}\|^2 + \frac{4\sqrt{2LM}}{n\gamma_t} \|w^*_t - w'^*_t\|.
\end{aligned}
\]
where in the last inequality we have used $\ell(\cdot;\cdot)$ is $\sqrt{2LM}$-Lipschitz with respect to its first argument which is implied by Lemma~\ref{lemma:key_smooth}, and $S_t$ and $S'_t$ differ in a single element as well. Since $x^2 \le y^2 + a x $ implies $x \le y+ a$ for all $x,y,a>0$, we can derive from the above that
\[
\|w^*_t - w'^*_t\| \le \|w_{t-1} - w'_{t-1}\| + \frac{4\sqrt{2LM}}{n\gamma_t}.
\]
Then based on triangle inequality and Lemma~\ref{lemma:function_strong_convexity_inexact} we have
\begin{equation}\label{inequat:distance_expansion_2}
\begin{aligned}
\|w_t - w'_t\| \le \| w_t -  w^*_t\| + \|w^*_t - w'^*_t\| + \| w'_t -  w'^*_t\| \le \|w_{t-1} - w'_{t-1}\| + \frac{4\sqrt{2LM}}{n\gamma_t} + 2\sqrt{\frac{2\epsilon_t}{\gamma_t}}.
\end{aligned}
\end{equation}
Let $\beta_t = \mathbf{1}_{\left\{S_t\neq S'_t\right\}}$ where $\mathbf{1}_{\{C\}}$ is the indicator function of the condition $C$. Based on the recursion forms~\eqref{inequat:distance_expansion_1} and~\eqref{inequat:distance_expansion_2} and the condition $w_0=w'_0$ we can show that for all $t \in [T]$
\[
\|w_t - w'_t\| \le \sum_{\tau=1}^t \left\{ \frac{4\beta_\tau\sqrt{2LM}}{n\gamma_\tau} + 2\sqrt{\frac{2\epsilon_\tau}{\gamma_\tau}} \right\},
\]
which is the desired bound.
\end{proof}

Now we are in the position to prove the main result in Proposition~\ref{prop:uniform_stability_withoutreplacement}.
\vspace{0.1in}

\begin{proof}[Proof of Proposition~\ref{prop:uniform_stability_withoutreplacement}]
Consider a fixed pair of minibatch sets $S\doteq S'$.

\vspace{0.1in}

\textbf{Part (a):} Let $\{w_t\}_{t\in[T]}$ and $\{w'_t\}_{t\in [T]}$ be two solution sequences that are respectively generated over $\{S_t\}_{t\in [T]}$ and $\{S'_t\}_{t\in [T]}$ by Algorithm~\ref{alg:mspp}. At each time instance $t$, define random variable $\beta_t := \mathbf{1}_{\left\{S_t\neq S'_t\right\}}$. Since by assumption $S$ and $S'$ differ only in a single minibatch, there must exist one and only one $t\in [T]$ such that $\beta_t=1$ and $\beta_{j}=0$ for all $j\in [T], j\neq t$. Then in the worst case of $\beta_\tau=1$ for $\tau=\argmin_{i\in [t]} \gamma_i$, it follows from Lemma~\ref{lemma:stability_key} that for all $t\in [T]$,
\[
\|w_t - w'_t\| \le \frac{4\sqrt{2LM}}{n\min_{i\in [t]}\gamma_i} +\sum_{i=1}^t 2\sqrt{\frac{2\epsilon_i}{\gamma_i}} \le \frac{4\sqrt{2LM}}{n\min_{i\in [T]}\gamma_i} +\sum_{i=1}^T 2\sqrt{\frac{2\epsilon_i}{\gamma_i}}.
\]
Then the convex combination nature of $\bar w_T$ and $\bar w'_T$ implies that
\[
\|\bar w_T - \bar w'_T\| \le \frac{\sum_t \gamma_t \|w_t - w'_t\|}{\sum_{t} \gamma_t}\le \frac{4\sqrt{2LM}}{n \min_{t\in [T]}\gamma_t} +\sum_{t=1}^T 2\sqrt{\frac{2\epsilon_t}{\gamma_t}}.
\]
The desired result follows immediately as the above bound holds for any pair $\{S, S'\}$.

\vspace{0.1in}

\textbf{Part (b):} Recall that $\{\xi_t\}_{t\in [T]}$ are the uniform random indices for iteratively selecting data minibatches from $S$ and $S'$. Let $\{w_t\}_{t\in[T]}$ and $\{w'_t\}_{t\in [T]}$ be two solution sequences that are respectively generated over $\{S_{\xi_t}\}_{t\in [T]}$ and $\{S'_{\xi_t}\}_{t\in [T]}$ by Algorithm~\ref{alg:mspp_swr}. Define random variable $\beta_t := \mathbf{1}_{\left\{S_{\xi_t}\neq S'_{\xi_t}\right\}}$. Since by assumption $S$ and $S'$ differ only in a single minibatch, under without-replacement sampling scheme, there must exist one and only one $t\in [T]$ such that $\beta_t=1$ and $\beta_{j}=0$ for all $j\in [T], j\neq t$. Let us define the event $\mathcal{E}_t:=\{\beta_t=1 \text{ and } \beta_{j\neq t, j\in [T]}=0\}$ for all $t\in [T]$. Then the uniform randomness of $\xi_t$ implies that
\[
R\left( \mathcal{E}_t \right) = \frac{1}{T}, \quad t\in [T].
\]
Given $t\in [T]$, suppose that $\mathcal{E}_\tau$ occurs for some $\tau\in [t]$. Then it follows from Lemma~\ref{lemma:stability_key} that
\[
\|w_t - w'_t\| \le \frac{4\sqrt{2LM}}{n\gamma_\tau} +\sum_{i=1}^t 2\sqrt{\frac{2\epsilon_i}{\gamma_i}}.
\]
Suppose that $\mathcal{E}_\tau$ occurs for some $\tau\in \{t+1, t+2, ..., T\}$, again it follows from Lemma~\ref{lemma:stability_key} that
\[
\|w_t - w'_t\| \le \sum_{i=1}^t 2\sqrt{\frac{2\epsilon_i}{\gamma_i}}.
\]
Then we have
\[
\begin{aligned}
\mathbb{E}_{\xi_{[t]}}\left[\|w_t - w'_t\|\right] =& \sum_{\tau=1}^T R\left( \mathcal{E}_\tau \right) \left[\|w_t - w'_t\|\mid \mathcal{E}_\tau\right] \\
\le& \sum_{\tau=1}^t \left\{\frac{4\sqrt{2LM}}{nT\gamma_t} + \sum_{i=1}^t \frac{2}{T}\sqrt{\frac{2\epsilon_i}{\gamma_i}} \right\} + \sum_{\tau=t+1}^T \left\{\sum_{i=1}^t\frac{2}{T}\sqrt{\frac{2\epsilon_t}{\gamma_t}} \right\}\\
=& \sum_{\tau=1}^t \left\{ \frac{4\sqrt{2LM}}{nT\gamma_\tau} + 2\sqrt{\frac{2\epsilon_\tau}{\gamma_\tau}} \right\} \le \sum_{t=1}^T \left\{ \frac{4\sqrt{2LM}}{nT\gamma_t} + 2\sqrt{\frac{2\epsilon_t}{\gamma_t}} \right\}.
\end{aligned}
\]
It follows that
\[
 \mathbb{E}_{\xi_{[T]}}\left[\|\bar w_T - \bar w'_T\|\right] \le \frac{\sum_t \gamma_t \mathbb{E}_{\xi_{[t]}}\left[\|w_t - w'_t\|\right] }{\sum_{t} \gamma_t}\le \sum_{t=1}^T \left\{ \frac{4\sqrt{2LM}}{nT\gamma_t} + 2\sqrt{\frac{2\epsilon_t}{\gamma_t}} \right\}.
\]
The desired result follows immediately as the above bound holds for any pair $\{S, S'\}$.
\end{proof}

\subsection{Proof of Theorem~\ref{thrm:fast_rate_smooth_high_probability}}
\label{apdsect:proof_fast_rate_smooth_high_probability}

In this subsection, we prove Theorem~\ref{thrm:fast_rate_smooth_high_probability} that is restated below.
\FastRateSmoothHighProbability*

To show this result, we need to use the following restated McDiarmid's inequality~\citep{McDiarmid1989SurveysIC} which is also known as bounded difference inequality.
\begin{lemma}[McDiarmid's/Bounded differences inequality]
Let $X_1, X_2, ...,X_N$ be independent random variables valued in $\mathcal{X}$. Suppose that the function $h: \mathcal{X}^N \mapsto \mathbb{R}$ satisfies the bounded differences property, i.e., the following inequality holds for any $i\in [N]$ and any $x_1,...,x_N, x'_i$:
\[
|h(x_1,...,x_{i-1}, x_i, x_{i+1}, ..., x_N) - h(x_1,...,x_{i-1}, x'_i, x_{i+1},...,x_N)| \le c_i.
\]
Then for any $\varepsilon>0$,
\[
\mathbb{P}\left( h(X_1,...,X_N) -  \mathbb{E}\left[h(X_1,...,X_N) \right] \ge \varepsilon \right) \le \exp\left(-\frac{2\varepsilon^2}{\sum_{i=1}^N c_i^2}\right).
\]
\end{lemma}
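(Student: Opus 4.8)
The plan is to prove this classical concentration bound via the martingale (Azuma--Hoeffding) method, namely by constructing the Doob martingale associated with $Z := h(X_1,\dots,X_N)$ and controlling the conditional range of its increments. Let $\mathcal{F}_i := \sigma(X_1,\dots,X_i)$ for $i=0,1,\dots,N$ denote the natural filtration (with $\mathcal{F}_0$ trivial), and set $V_i := \mathbb{E}[Z \mid \mathcal{F}_i]$. By construction $V_0 = \mathbb{E}[Z]$ and $V_N = Z$, and the tower property shows that $\{V_i\}_{i=0}^N$ is a martingale. Writing $D_i := V_i - V_{i-1}$ for the martingale increments, we obtain the telescoping identity $Z - \mathbb{E}[Z] = \sum_{i=1}^N D_i$ together with $\mathbb{E}[D_i \mid \mathcal{F}_{i-1}] = 0$.

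The key structural step is to show that, conditioned on $\mathcal{F}_{i-1}$, each increment $D_i$ is supported in an interval of width at most $c_i$. Here the \emph{independence} of the $X_j$ is essential: since $X_{i+1},\dots,X_N$ are independent of $X_i$, we may write $V_i = g_i(X_1,\dots,X_i)$, where $g_i$ integrates $h$ against the product marginal of the future coordinates, and similarly $V_{i-1}$ is the average of $g_i(X_1,\dots,X_{i-1},x)$ over $x$ under the marginal of $X_i$. Defining the conditional lower and upper envelopes
\[
L_i := \inf_{x} g_i(X_1,\dots,X_{i-1},x) - V_{i-1},\qquad
U_i := \sup_{x} g_i(X_1,\dots,X_{i-1},x) - V_{i-1},
\]
one checks that $D_i \in [L_i, U_i]$ almost surely (because $V_{i-1}$ is an average, hence lies between the infimum and supremum), and that $U_i - L_i \le c_i$, the latter following by applying the bounded-differences hypothesis pointwise under the integral over the future coordinates. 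Thus $D_i$ is conditionally mean-zero and confined to an interval of length at most $c_i$.

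With the conditional boundedness in hand, I would invoke Hoeffding's lemma: any random variable supported in an interval of width $c_i$ with conditional mean zero satisfies $\mathbb{E}[e^{sD_i}\mid \mathcal{F}_{i-1}] \le \exp(s^2 c_i^2/8)$ for all $s\in\mathbb{R}$. The moment generating function of $Z-\mathbb{E}[Z]$ is then controlled by peeling off increments one at a time via the tower property: conditioning on $\mathcal{F}_{N-1}$ bounds the last factor by $\exp(s^2 c_N^2/8)$, and iterating gives
\[
\mathbb{E}\!\left[e^{s(Z-\mathbb{E}[Z])}\right] \le \exp\!\left(\frac{s^2}{8}\sum_{i=1}^N c_i^2\right).
\]
Finally, the Chernoff bound $\mathbb{P}(Z-\mathbb{E}[Z]\ge \varepsilon) \le e^{-s\varepsilon}\,\mathbb{E}[e^{s(Z-\mathbb{E}[Z])}]$ combined with the optimal choice $s = 4\varepsilon/\sum_{i=1}^N c_i^2$ yields the stated bound $\exp(-2\varepsilon^2/\sum_{i=1}^N c_i^2)$.

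The main obstacle is the second step --- verifying that the martingale increments have conditional range at most $c_i$. This is precisely where independence must be used carefully, since the bounded-differences property is a pointwise statement about $h$, and one must argue that it survives taking conditional expectation (integration) over the future coordinates without inflating the range. Once this conditional boundedness is established, the remaining ingredients (Hoeffding's lemma, iterated conditioning, and the Chernoff optimization over $s$) are entirely routine.
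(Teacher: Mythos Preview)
Your proof proposal is correct and follows the standard Doob martingale / Azuma--Hoeffding route to McDiarmid's inequality. However, the paper does not actually prove this lemma: it simply restates the inequality and cites \citet{McDiarmid1989SurveysIC} as its source, then invokes it as a black-box tool in the proof of Theorem~\ref{thrm:fast_rate_smooth_high_probability}. So there is no paper proof to compare against; your argument supplies precisely the classical justification the paper omits.
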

Now we are ready to prove Theorem~\ref{thrm:fast_rate_smooth_high_probability}.

\vspace{0.1in}

\begin{proof}[Proof of Theorem~\ref{thrm:fast_rate_smooth_high_probability}]
Let $S=\{S_t\}_{t\in [T]}$ and $S'=\{S'_t\}_{t\in [T]}$ be two sets of data minibatches such that $S \doteq S'$. Then according to Proposition~\ref{prop:uniform_stability_withoutreplacement} the weighted average output $\bar w_T$ and $\bar w'_T$ respectively generated by Algorithm~\ref{alg:mspp_swr} over $S$ and $S'$ satisfy
\[
\sup_{S, S'} \mathbb{E}_{\xi_{[T]}}\left[\|\bar w_T - \bar w'_T\|\right] \le \sum_{t=1}^T \left\{ \frac{4\sqrt{2LM}}{nT\gamma_t} + 2\sqrt{\frac{2\epsilon_t}{\gamma_t}} \right\} \le \sum_{t=1}^T \left\{ \frac{5\sqrt{2LM}}{nT\gamma_t} \right\} \le \frac{20\sqrt{2LM}(1+\log(T))}{\lambda \rho nT},
\]
where in the last but one inequality we have used the condition $\epsilon_t \le \frac{LM}{4n^2T^2 \gamma_t}=\frac{LM}{\lambda\rho N^2 t}$. It follows from the triangle inequality and the above bound that
\[
\sup_{S, S'} \mathbb{E}_{\xi_{[T]}}\left[\left|D(\bar w_T,W^*) - D( \bar w'_T, W^*)\right| \right]\le\sup_{S, S'} \mathbb{E}_{\xi_{[T]}}\left[\|\bar w_T - \bar w'_T\|\right] \le \frac{20\sqrt{2LM}(1+\log(T))}{\lambda \rho n T}.
\]
Since $\xi_{[T]}$ are independent on $S$, as a direct consequence of applying McDiarmid's inequality with $c_i\equiv c = \frac{20\sqrt{2LM}(1+\log(T))}{\lambda \rho n T}$ to $h(S):=D(\bar w_T,  W^*)$, we can show that with probability at least $1-\delta$ over the randomness of $S$,
\[
\mathbb{E}_{\xi_{[T]}}\left[D(\bar w_T,  W^*) - \mathbb{E}_{S}\left[\mathbb{E}_{\xi_{[T]}}\left[D(\bar w_T,  W^*)\right] \right]\right] \le c\sqrt{\frac{nT\log(1/\delta)}{2}} = \frac{20\sqrt{LM\log(1/\delta)}(1+\log(T))}{\lambda \rho \sqrt{nT}}.
\]
We next derive a bound for $\mathbb{E}_{S}\left[D(\bar w_T,  W^*)\right]$. In view of Jensen's inequality and the quadratic growth property of $F$ we have
\[
\begin{aligned}
\mathbb{E}_{S}\left[\mathbb{E}_{\xi_{[T]}}\left[D(\bar w_T,  W^*)\right]\right] =&\mathbb{E}_{\xi_{[T]}}\left[\mathbb{E}_{S}\left[D(\bar w_T,  W^*)\right]\right] \\
\le& \mathbb{E}_{\xi_{[T]}}\left[\sqrt{\mathbb{E}_{S}\left[D^2(\bar w_T,  W^*)\right]}\right] \\
\le& \mathbb{E}_{\xi_{[T]}}\left[\sqrt{\frac{2}{\lambda}\mathbb{E}_{S}\left[R(\bar w_T) - R^* \right]}\right] \\
\lesssim & \mathbb{E}_{\xi_{[T]}}\left[\sqrt{\frac{\rho\left[R(w_0) - R^*\right]}{\lambda T^2} + \frac{L}{\lambda^2 \rho nT} R^* + \frac{\sqrt{\epsilon}}{\lambda T^2}\left(\frac{L}{\lambda\rho} + G\sqrt{\frac{1}{\lambda\rho}}\right)}\right]\\
=&\sqrt{\frac{\rho\left[R(w_0) - R^*\right]}{\lambda T^2} + \frac{L}{\lambda^2 \rho nT} R^* + \frac{\sqrt{\epsilon}}{\lambda T^2}\left(\frac{L}{\lambda\rho} + G\sqrt{\frac{1}{\lambda\rho}}\right)},
\end{aligned}
\]
where in the last inequality we have invoked Theorem~\ref{thrm:fast_rate_smooth_inexact}. Therefore, based on the previous two inequalities we obtain that with probability at least $1-\delta$ over $S$,
\[
\begin{aligned}
&\mathbb{E}_{\xi_{[T]}}\left[D(\bar w_T,  W^*) \right] \\
\lesssim& \frac{\sqrt{LM\log(1/\delta)}\log(T)}{\lambda \rho \sqrt{nT}} + \sqrt{\frac{\rho\left[R(w_0) - R^*\right]}{\lambda T^2} + \frac{L}{\lambda^2 \rho nT} R^* + \frac{\sqrt{\epsilon}}{\lambda T^2}\left(\frac{L}{\lambda\rho} + G\sqrt{\frac{1}{\lambda\rho}}\right)},
\end{aligned}
\]
which gives the desired bound.
\end{proof}

\subsection{Proof of Theorem~\ref{thrm:fast_rate_smooth_convex_high_probability}}
\label{apdsect:proof_fast_rate_smooth_convex_high_probability}

Here we prove the following restated Theorem~\ref{thrm:fast_rate_smooth_convex_high_probability}.
\FastRateSmoothConvexHighProbability*

We need the following lemma essentially from~\citet[Corollary 8]{bousquet2020sharper} that gives a near-tight generalization bound for a learning algorithm that is uniformly stable with respect to loss function.

\begin{lemma}[\citet{bousquet2020sharper}]\label{lemma:sharper_generalization}
Suppose that a learning algorithm $\mathcal{A}_w$, parameterized by $w$, satisfies $|\ell(\mathcal{A}_{w_S}(x),y) - \ell(\mathcal{A}_{w_{S'}}(x),y)| \le \varrho$ for any $(x,y)\in \mathcal{X}\times \mathcal{Y}$ and $S \doteq S'$. Assume the loss function satisfies $0\le \ell(y',y) \le M$ for all $y,y'$.
Then for any $\delta\in(0,1)$, with probability at least $1-\delta$ over an i.i.d. data set $S$ of size $N$,
\[
\left|R(\mathcal{A}_{w_S}) - R_S(\mathcal{A}_{w_S})\right| \lesssim \varrho \log(N) \log\left(\frac{1}{\delta}\right) + M\sqrt{\frac{\log\left(1/\delta\right)}{N}}.
\]
\end{lemma}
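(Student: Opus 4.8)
The plan is to derive the generalization bound from a uniform-stability estimate on the composite per-example objective and then invoke the near-tight stability-to-generalization transfer of Lemma~\ref{lemma:sharper_generalization}. First I would instantiate Proposition~\ref{prop:uniform_stability_withoutreplacement}(a) with the prescribed choices $\gamma_t\equiv\sqrt{T/n}$ and $\epsilon_t\le \frac{LM}{4nT^2\sqrt{nT}}$ for the vanilla M-SPP of Algorithm~\ref{alg:mspp}. Since $\min_{t\in[T]}\gamma_t=\sqrt{T/n}$, the leading term becomes $\frac{4\sqrt{2LM}}{n\sqrt{T/n}}=\frac{4\sqrt{2LM}}{\sqrt{nT}}$, while the inexactness contribution satisfies $\sum_{t=1}^T 2\sqrt{2\epsilon_t/\gamma_t}\le \sqrt{2LM/(nT)}$ after substituting the bound on $\epsilon_t$ and using the identity $\sqrt{nT}\cdot\sqrt{T/n}=T$. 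Combining the two pieces yields the parameter-level stability $\sup_{S\doteq S'}\|\bar w_T-\bar w'_T\|\lesssim \sqrt{LM/(nT)}$.

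Next I would upgrade this parameter stability to stability of the objective value at the output. Because $\ell$ is convex, $L$-smooth and bounded in $[0,M]$, Lemma~\ref{lemma:key_smooth} gives $\|\nabla\ell(w;z)\|\le\sqrt{2L\ell(w;z)}\le\sqrt{2LM}$, so $\ell(\cdot;z)$ is $\sqrt{2LM}$-Lipschitz in $w$; Assumption~\ref{assump:lipschitz} makes $r$ $G$-Lipschitz. Applying the generalization transfer to the composite per-example objective $\ell(\cdot;z)+r(\cdot)$, whose population and empirical averages are exactly $R$ and $R_S$, the uniform stability parameter becomes $\varrho\le(\sqrt{2LM}+G)\sup_{S\doteq S'}\|\bar w_T-\bar w'_T\|\lesssim (LM+G\sqrt{LM})/\sqrt{nT}$, which is precisely the factor appearing in the stability term of the claimed bound.

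Finally I would plug $\varrho$ into Lemma~\ref{lemma:sharper_generalization} with total sample size $N=nT$, producing the $\varrho\log(N)\log(1/\delta)$ contribution together with the $M\sqrt{\log(1/\delta)/N}$ fluctuation term, which matches the stated bound after recalling $N=nT$ and $\log(N)=\log(nT)$. The main obstacle is the bookkeeping of the regularizer: since $r$ is data-independent it cancels in the gap, $R(\bar w_T)-R_S(\bar w_T)=R^\ell(\bar w_T)-R^\ell_S(\bar w_T)$, so the concentration (McDiarmid) part underlying Lemma~\ref{lemma:sharper_generalization} is driven solely by the range $M$ of $\ell$ and carries no dependence on $r$, whereas $r$ still enters the stability term through its $G$-Lipschitz modulus. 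Keeping these two roles of $r$ cleanly separated --- absent from the fluctuation term but present in the stability term --- is the delicate point; the remaining estimates are the routine constant-tracking indicated above.
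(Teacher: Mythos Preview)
Your proposal does not address the stated lemma at all. Lemma~\ref{lemma:sharper_generalization} is a cited external result (essentially Corollary~8 of \citet{bousquet2020sharper}), and the paper provides no proof of it; it is simply quoted as a black-box stability-to-generalization transfer. What you have written is instead a proof of Theorem~\ref{thrm:fast_rate_smooth_convex_high_probability}, which \emph{applies} Lemma~\ref{lemma:sharper_generalization} as its final step. Indeed, your third paragraph explicitly says ``plug $\varrho$ into Lemma~\ref{lemma:sharper_generalization},'' which would be circular if the goal were to establish the lemma itself.

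If the intended target was actually Theorem~\ref{thrm:fast_rate_smooth_convex_high_probability}, then your argument matches the paper's proof essentially line for line: instantiate Proposition~\ref{prop:uniform_stability_withoutreplacement}(a) with $\gamma_t\equiv\sqrt{T/n}$ and the given $\epsilon_t$ bound to get $\|\bar w_T-\bar w'_T\|\lesssim\sqrt{LM/(nT)}$, convert to loss-level stability via the $\sqrt{2LM}$-Lipschitz bound from Lemma~\ref{lemma:key_smooth} and the $G$-Lipschitz property of $r$, and then invoke Lemma~\ref{lemma:sharper_generalization}. Your observation that $r$ cancels in $R-R_S$ and hence does not affect the $M\sqrt{\log(1/\delta)/N}$ fluctuation term is correct and is handled implicitly in the paper. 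But as a proof of Lemma~\ref{lemma:sharper_generalization} itself, the proposal is vacuous: that lemma requires the moment/concentration machinery of \citet{bousquet2020sharper}, none of which appears in your outline.
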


With this lemma in place, we can prove the main result in Theorem~\ref{thrm:fast_rate_smooth_convex_high_probability}
\begin{proof}[Proof of Theorem~\ref{thrm:fast_rate_smooth_convex_high_probability}]
Let $S=\{S_t\}_{t\in [T]}$ and $S'=\{S'_t\}_{t\in [T]}$ be two sets of data minibatches satisfying $S \doteq S'$. Note that $\gamma_t\equiv\gamma = \sqrt{\frac{T}{n}}$. Then according to Proposition~\ref{prop:uniform_stability_withoutreplacement} the average output $\bar w_T$ and $\bar w'_T$ respectively generated by Algorithm~\ref{alg:mspp} over $S$ and $S'$ satisfy
\[
\sup_{S, S'} \|\bar w_T - \bar w'_T\| \le \frac{4\sqrt{2LM}}{n\gamma} +\sum_{t=1}^T 2\sqrt{\frac{2\epsilon_t}{\gamma}} \le \frac{5\sqrt{2LM}}{n\gamma} = \frac{5\sqrt{2LM}}{\sqrt{nT}}.
\]
where in the last but one inequality we have used the condition $\epsilon_t\le \frac{LM}{4nT^2\sqrt{N}}$. It follows that
\[
|\ell(\bar w_T;z) - \ell(\bar w'_T;z)| \le \sqrt{2ML}\|\bar w_T- \bar w'_T\| \le \frac{10LM}{\sqrt{nT}},
\]
where we have used $\ell(\cdot;\cdot)$ is $\sqrt{2LM}$-Lipschitz with respect to its first argument (which is implied by Lemma~\ref{lemma:key_smooth}). In view of Assumption~\ref{assump:lipschitz} we have
\[
|r(\bar w_T) - r(\bar w'_{T})| \le G \|\bar w_T - \bar w'_T\| \le \frac{5G\sqrt{2LM}}{\sqrt{nT}}.
\]
This preceding two inequalities indicate that M-SPP is $\frac{10 LM+5G\sqrt{2LM}}{\sqrt{nT}}$-uniformly stable with respect to the composite loss function $\ell+r$. By invoking Lemma~\ref{lemma:sharper_generalization} to M-SPP we obtain that
\[
\left|R(w_S) - R_S(w_S)\right| \lesssim \frac{(LM + G \sqrt{LM})\log(nT)}{\sqrt{nT}} \log\left(\frac{1}{\delta}\right) + M\sqrt{\frac{\log\left(1/\delta\right)}{nT}}.
\]
The proof is concluded.
\end{proof}

\bibliography{refs_scholar}
\bibliographystyle{plainnat}

\end{document}